\documentclass[12pt]{article}
\usepackage{geometry,mathtools}
\usepackage{amsmath,amssymb,amsthm, amsfonts}
\usepackage{etaremune, enumerate, float, verbatim}
\usepackage{algorithm}
\usepackage{algorithmic}
\usepackage{hyperref}
\usepackage{graphicx}
\usepackage{url}
\usepackage[mathlines]{lineno}
\usepackage{dsfont} 
\usepackage{tikz, graphicx, subcaption, caption}
\usepackage[algo2e,ruled,vlined]{algorithm2e}
\usepackage{mathrsfs,amsmath}
\usepackage{color}
\definecolor{red}{rgb}{1,0,0}

\definecolor{blue}{rgb}{0,0,.7}

\definecolor{green}{rgb}{0,.6,0}

\definecolor{purp}{rgb}{.5,0,.5}

\numberwithin{figure}{section}   

\DeclareMathOperator*{\esssup}{ess\,sup}

\newtheorem{thm}{Theorem}[section]
\newtheorem{cor}[thm]{Corollary}
\newtheorem{lem}[thm]{Lemma}
\newtheorem{prop}[thm]{Proposition}

\newtheorem{obs}[thm]{Observation}
 
\newtheorem{prob}[thm]{Problem}

\theoremstyle{definition}

\theoremstyle{definition}
\newtheorem{defn}[thm]{Definition}

\theoremstyle{definition}

\newcommand{\opt}{\operatorname{opt}}

\newcommand{\linint}{\operatorname{LININT}}

\newcommand{\id}{\operatorname{id}}

\newcommand{\bit}{\begin{itemize}}
\newcommand{\eit}{\end{itemize}}
\newcommand{\ben}{\begin{enumerate}}
\newcommand{\een}{\end{enumerate}}
\newcommand{\beq}{\begin{equation}}
\newcommand{\eeq}{\end{equation}}
\newcommand{\bea}{\begin{eqnarray*}} 
\newcommand{\eea}{\end{eqnarray*}}
\newcommand{\bpf}{\begin{proof}}
\newcommand{\epf}{\end{proof}\ms}
\newcommand{\bmt}{\begin{bmatrix}}
\newcommand{\emt}{\end{bmatrix}}
\newcommand{\ms}{\medskip}

\newcommand{\noi}{\noindent}

\newcommand{\R}{\mathbb{R}}

\title{Online learning of smooth functions on $\mathbb{R}$}
\author{Jesse Geneson, Kuldeep Singh, and Alexander Wang}

\begin{document}
\maketitle

\begin{abstract}
We study adversarial online learning of real-valued functions on $\mathbb{R}$. In each round the learner is
queried at $x_t\in\mathbb{R}$, predicts $\hat y_t$, and then observes the true
value $f(x_t)$; performance is measured by cumulative $p$-loss
$\sum_{t\ge 1}|\hat y_t-f(x_t)|^p$. For the class
\[
\mathcal{G}_q=\Bigl\{f:\mathbb{R}\to\mathbb{R}\ \text{absolutely continuous}:\ 
\int_{\mathbb{R}}|f'(x)|^q\,dx\le 1\Bigr\},
\]
we show that the standard model becomes ill-posed on $\mathbb{R}$: for every
$p\ge 1$ and $q>1$, an adversary can force infinite loss. Motivated by this obstruction, we analyze three modified learning scenarios
that limit the influence of queries that are far from previously observed inputs.
In Scenario~1 the adversary must choose each new query within distance $1$ of
some past query. In Scenario~2 the adversary may query anywhere, but the learner
is penalized only on rounds whose query lies within distance $1$ of a past
query. In Scenario~3 the loss in round $t$ is multiplied by a weight
$g(\min_{j<t}|x_t-x_j|)$.

We obtain sharp characterizations for Scenarios~1--2 in several regimes. For Scenario~3 we identify a clean threshold phenomenon: if
$g$ decays too slowly, then the adversary can force infinite
weighted loss. In contrast, for rapidly decaying weights such as
$g(z)=e^{-cz}$ we obtain finite and sharp guarantees in the quadratic case
$p=q=2$. Finally, we study a natural multivariable slice generalization $\mathcal{G}_{q,d}$
of $\mathcal{G}_q$ on $\mathbb{R}^d$ and show a sharp dichotomy: while the
one-dimensional case admits finite opt-values in certain regimes, for every
$d\ge 2$ the slice class $\mathcal{G}_{q,d}$ is too permissive, and even under
Scenarios~1--3 an adversary can force infinite loss.
\end{abstract}

\noi {\bf Keywords}: online learning; smooth functions; unbounded domains; worst-case error bounds

\section{Introduction}

Consider the model of online learning of smooth functions from \cite{GZ, klsf, smooth2, xie}, which is a variant of the mistake-bound model of online learning (see, e.g., \cite{angluin,almw,auer,feng,filmus,gen,gt,littlestone,long}). An algorithm $A$ tries to learn a real-valued function $f$ from some class $\mathcal{F}$ with domain $[0,1]$. In each trial of the model, $A$ receives an input $x_t \in [0, 1]$, it must output some prediction $\hat{y}_t$ for $f(x_t)$, and then $A$ discovers the true value of $f(x_t)$. 

For each $p \ge 1$ and $x = (x_0, \dots, x_m) \in [0, 1]^{m+1}$, define $L_{p,[0,1]}(A, f, x) = \sum_{t = 1}^m |\hat{y}_t-f(x_t)|^p$. Note that the summation starts on the second trial, since the guess on the first trial does not reflect the algorithm's learning ability. Define \[L_{p,[0,1]}(A, \mathcal{F}) = \displaystyle \sup_{f \in \mathcal{F}, x \in  \cup_{m \in \mathbb{N}} [0,1]^{m+1}} L_{p,[0,1]}(A, f, x).\] Define the optimum $\opt_{p,[0,1]}(\mathcal{F}) = \displaystyle \inf_A L_{p,[0,1]}(A, \mathcal{F})$. 

Past research on this topic has focused mostly on the class of functions whose first derivatives have $q$-norms at most $1$. For any real number $q \ge 1$, let $\mathcal{F}_q$ be the family of absolutely continuous functions $f: [0, 1] \rightarrow \mathbb{R}$ for which $\int_{0}^1 |f'(x)|^q dx \le 1$. Let $\mathcal{F}_{\infty}$ be the family of absolutely continuous functions $f:  [0, 1] \rightarrow \mathbb{R}$ for which $\displaystyle \sup_{x \in [0, 1]} |f'(x)| \le 1$. By Jensen's inequality, we have $\mathcal{F}_{\infty} \subseteq \mathcal{F}_r \subseteq \mathcal{F}_q$ for any $1 \le q \le r$. Thus, \[\opt_{p,[0,1]}(\mathcal{F}_{\infty}) \le \opt_{p,[0,1]}(\mathcal{F}_r) \le \opt_{p,[0,1]}(\mathcal{F}_q)\] for any $1 \le q \le r$. 

Kimber and Long \cite{klsf} showed that $\opt_{p,[0,1]}(\mathcal{F}_q) = 1$ for all $p, q \ge 2$. They also showed that $\opt_{1,[0,1]}(\mathcal{F}_q) = \infty$ for all $q \ge 1$, $\opt_{1,[0,1]}(\mathcal{F}_{\infty}) = \infty$, and $\opt_{p,[0,1]}(\mathcal{F}_1) = \infty$ for all $p \ge 1$. Geneson and Zhou \cite{GZ} showed that $\opt_{p,[0,1]}(\mathcal{F}_q) = 1$ for all $p, q$ with $q > 1$ and $p \ge 2+\frac{1}{q-1}$, $\opt_{1+\varepsilon,[0,1]}(\mathcal{F}_q) = \Theta(\varepsilon^{-1/2})$ for all $q \ge 2$, and $\opt_{1+\varepsilon,[0,1]}(\mathcal{F}_{\infty}) = \Theta(\varepsilon^{-1/2})$. They also introduced a generalization of the problem to multivariable functions. Most of the results in these papers can be generalized to learning scenarios where the domain $[0,1]$ is replaced by any real interval $[a,b]$. In this paper, we consider what happens when the domain is unbounded.

\medskip
\noindent\textbf{Online learning on unbounded domains.}
The restriction to a compact domain such as $[0,1]$ is mathematically convenient,
but it is often an artifact of modeling rather than a reflection of how online
prediction problems arise in practice.
In many natural settings, inputs are not confined to a fixed interval and may
grow, drift, or be selected adaptively over time.
Examples include time-indexed or scale-indexed regression problems, adaptive
scientific measurement and sensing, online control and tuning of continuous
parameters, and sequential decision systems facing persistent distribution
shift.
In such settings, smoothness assumptions are often global in nature, while the domain
itself is effectively unbounded.

From a modeling perspective, unbounded domains expose a fundamental tension
between smoothness and extrapolation.
While smoothness controls local variation, it does not by itself prevent an
adversary from placing queries arbitrarily far from previously observed inputs.
As we show in Section~\ref{sec:2}, this makes the most direct extension of the
classical mistake-bound model ill-posed on $\mathbb{R}$: even extremely smooth
functions admit adversarial strategies that force infinite loss.
This phenomenon highlights a structural limitation of worst-case online learning
when extrapolation is unconstrained.

At the same time, real systems rarely treat all extrapolations equally.
Predictions far from previously observed data are often regarded as exploratory,
less reliable, or subject to reduced accountability.
This observation motivates the alternative learning scenarios studied in this
paper.
Scenario~1 enforces locality by constraining how far the adversary may move
between successive queries.
Scenario~2 preserves unrestricted inputs but evaluates the learner only on
queries that are sufficiently close to past observations.
Scenario~3 interpolates between these extremes by weighting errors according to
their distance from previously seen inputs, formalizing the intuition that
confidence should decay with extrapolation distance.

Together, these scenarios provide a principled framework for understanding which
forms of locality are necessary and which are sufficient to recover meaningful
worst-case guarantees for smooth function learning on unbounded domains.

\medskip
\noindent{\bf Results.}
In Section~\ref{sec:2} we formulate the mistake-bound model on the unbounded
domain $\mathbb{R}$ and introduce the classes $\mathcal{G}_q$.
We first show that the familiar nesting relations from bounded intervals break
down on $\mathbb{R}$ (Proposition~\ref{prop:all-noninclusions}), and then prove
that the naive extension of the standard model is ill-posed: for every
$p\ge 1$ and $q>1$ an adversary can force infinite loss, so
$\opt_{p,\mathbb{R}}(\mathcal{G}_q)=\infty$ (Observation~\ref{obs:adv_inf}).
We also include a tractable unbounded-domain baseline beyond $\mathcal{G}_q$,
showing that truncated linear classes admit an exact optimum
(Theorem~\ref{thm:glnr}).

In Section~\ref{sec:3} we introduce the three locality-based mitigations
(Scenarios~1--3) and establish general comparisons between them, including
Scenario~1 versus Scenario~2 (Lemma~\ref{lem:sc12}), Scenario~2 versus
identity-weighted Scenario~3 (Lemma~\ref{lem:comp23}), and a general
weight-comparison principle for Scenario~3 (Lemma~\ref{lem:weight-compare}),
with useful consequences relating exponential and identity weights
(Corollary~\ref{cor:exp_id}) and comparing weighted to unweighted loss
(Corollary~\ref{cor:g-vs-unweighted}).
Section~\ref{sec:equal12} develops regimes where Scenarios~1 and~2 coincide,
including sharp guarantees for $\mathcal{G}_q$: when $p\ge q\ge 2$ we obtain
$\opt'_{p,\mathbb{R}}(\mathcal{G}_q)=\opt^*_{p,\mathbb{R}}(\mathcal{G}_q)=1$
via the modified interpolation algorithm $\linint'$ (Theorem~\ref{thm:1pgeq}
and its corollary), while for $0<p<q$ both scenarios still admit infinite loss
(Theorem~\ref{thm:inf_p<q}).  Section~\ref{sec:noteq} then shows Scenarios~1 and~2
can differ substantially for finite families: we give explicit constructions
with $\opt'_{p,\mathbb{R}}(F)<\opt^*_{p,\mathbb{R}}(F)$ (e.g.\ the families
$H_\varepsilon$, $F_\varepsilon$, and $F_{n,\varepsilon}$), obtain logarithmic
separations, and prove the general upper bound
$\opt^*_{p,\mathbb{R}}(F)/\opt'_{p,\mathbb{R}}(F)\le |F|-1$. In Section~\ref{sec:rd12}, we investigate Scenarios 1 and 2 with other choices of radius. We show for every $R>0$ that the corresponding
opt-values for $\mathcal{G}_q$ scale exactly as
$R^{(q-1)p/q}$ times their radius-$1$ counterparts.

Finally, Section~\ref{sec:sc3} focuses on Scenario~3: we prove sharp results for
$\mathcal{G}_2$ under identity weighting (Theorem~\ref{thm:s322}), identify an
exact constant for quadratic weighted loss in terms of $\sup_{z>0} z\,g(z)$,
and give a general divergence criterion for slowly decaying weights
(Proposition~\ref{prop:slowdecay}); we also analyze Scenario~3 for the truncated
linear classes, contrasting identity and exponential weights. We also investigate a multivariable extension of the unbounded-domain problem.
In Section~\ref{sec:Gqd-opt} we introduce the slice-based class $\mathcal{G}_{q,d}$,
a direct analogue of the multivariable classes studied in \cite{GZ}, and analyze
its behavior under Scenarios~1--3.
In sharp contrast to the one-dimensional case, we show that for every $d\ge 2$
the class $\mathcal{G}_{q,d}$ is fundamentally non-learnable on $\mathbb{R}^d$:
even with strong locality restrictions, the adversary can force infinite loss
for all $p>0$ and $q>1$. 

We conclude in Section~\ref{sec:conc} with a summary and future directions.

\section{Online learning with unbounded domain}\label{sec:2}

In this paper, we focus on learning scenarios where $[0,1]$ is replaced by $\mathbb{R}$. Specifically, in our most basic learning scenario, an algorithm $A$ tries to learn a real-valued function $f$ from some class $\mathcal{F}$ with domain $\mathbb{R}$. In each trial of this model, $A$ receives an input $x_t \in \mathbb{R}$, it must output some prediction $\hat{y}_t$ for $f(x_t)$, and then $A$ discovers the true value of $f(x_t)$. 

For each $p \ge 1$ and $x = (x_0, \dots, x_m) \in \mathbb{R}^{m+1}$, define $L_{p,\mathbb{R}}(A, f, x) = \sum_{t = 1}^m |\hat{y}_t-f(x_t)|^p$. Again note that the summation starts on the second trial. Define \[L_{p,\mathbb{R}}(A, \mathcal{F}) = \displaystyle \sup_{f \in \mathcal{F}, x \in  \cup_{m \in \mathbb{N}} \mathbb{R}^{m+1}} L_{p,\mathbb{R}}(A, f, x).\] We define the optimum $\opt_{p,\mathbb{R}}(\mathcal{F}) = \displaystyle \inf_A L_{p,\mathbb{R}}(A, \mathcal{F})$. 

As in \cite{klsf, smooth2}, we focus on smooth functions. For any real number $q \ge 1$, let $\mathcal{G}_q$ be the family of absolutely
continuous functions $f:\mathbb{R}\to\mathbb{R}$ such that
\[
\int_{-\infty}^{\infty} |f'(x)|^q\,dx \le 1.
\]
Let $\mathcal{G}_{\infty}$ be the family of absolutely continuous functions
$f:\mathbb{R}\to\mathbb{R}$ such that
\[
\operatorname*{ess\,sup}_{x\in\mathbb{R}} |f'(x)| \le 1.\]

\begin{prop}\label{prop:all-noninclusions}
Let $1\le q<r<\infty$. On $\mathbb{R}$, none of the derivative-norm classes
$\mathcal{G}_q$ are nested. More precisely, all of the following hold:
\begin{enumerate}
\item $\mathcal{G}_\infty \not\subseteq \mathcal{G}_q$ for every finite $q\ge 1$.
\item $\mathcal{G}_q \not\subseteq \mathcal{G}_\infty$ for every finite $q\ge 1$.
\item $\mathcal{G}_r \not\subseteq \mathcal{G}_q$.
\item $\mathcal{G}_q \not\subseteq \mathcal{G}_r$.
\end{enumerate}
\end{prop}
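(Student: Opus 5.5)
Each non-inclusion will be witnessed by an explicit absolutely continuous function whose derivative is a simple step or power function, so that every integral reduces to a one-line computation. The principle is that on $\mathbb{R}$ the $L^q$ norms of $f'$ are not monotone in $q$. Large exponents punish large values of $f'$, and small exponents punish a long tail where $f'$ is small.

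For items~1 and~3, the plan is to take $f'$ small but spread over a long interval. For item~1, let $f(x)=x$ for $x\in[0,2]$, with $f$ constant outside this interval (value $0$ to the left, $2$ to the right). Then $f$ is Lipschitz with $|f'|\le 1$ a.e., so $f\in\mathcal{G}_\infty$, while $\int|f'|^q=2>1$ for every finite $q$. For item~3, take $f'=c\,\mathbf 1_{[0,L]}$ with $0<c<1$. The two norms are $\int|f'|^r=c^rL$ and $\int|f'|^q=c^qL$. Since $q<r$ gives $c^q>c^r$, we choose $L=c^{-r}$. Then $f\in\mathcal{G}_r$, while $\int|f'|^q=c^{q-r}>1$, so $f\notin\mathcal{G}_q$.

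For items~2 and~4, the plan is to take $f'$ large on a short interval. Let $f'=c\,\mathbf 1_{[0,L]}$ with $c>1$. For item~4, take $L=c^{-q}$, which gives $\int|f'|^q=1$ and $\int|f'|^r=c^{r-q}>1$. For item~2, take $c=2$ and $L=2^{-q}$. Then $f\in\mathcal{G}_q$, but $\operatorname*{ess\,sup}|f'|=2>1$, so $f\notin\mathcal{G}_\infty$. This construction also works for $q=1$.

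The only step needing any care is checking that each witness is an admissible member of its class. Each $f$ is defined as $f(x)=\int_{-\infty}^x f'$ with $f'$ bounded and compactly supported, so it is absolutely continuous (indeed Lipschitz) on $\mathbb{R}$. Its derivative agrees a.e. with the chosen step function, so each $L^q$ norm equals the stated integral. I expect no real obstacle here. The only other point is the bookkeeping between $c<1$ and $c>1$: we need $c<1$ for item~3 and $c>1$ for item~4, so that the ratio $c^{q-r}$, respectively $c^{r-q}$, exceeds $1$.
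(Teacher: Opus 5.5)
Your proof is correct, but it takes a different route from the paper's. You use compactly supported step-function derivatives $f'=c\,\mathbf{1}_{[0,L]}$ and exploit only the normalization threshold. In each case the ``wrong'' quantity is finite but exceeds $1$: it is $\int|f'|^q=2$, $c^{q-r}$, $c^{r-q}$, or an essential supremum of $2$. The paper instead picks witnesses for which the wrong quantity is infinite. It uses $f(x)=x$ for item~1, and ever taller and narrower spikes $\sum_n 2^n\mathbf{1}_{I_n}$ with $|I_n|=2^{-n(q+1)}$ for item~2. For item~3 it uses the slowly decaying derivative $(1+|x|)^{-1/q}$, and for item~4 the integrable singularity $|x|^{-1/r}$ near $0$. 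Your approach is more elementary: every integral is a product $c^sL$, and absolute continuity is immediate from the Lipschitz property. It fully suffices for the proposition as stated. The paper's approach proves a scale-invariant statement: no positive multiple of its witnesses lies in the other class. So its non-inclusions persist for the unnormalized spaces $\{f: f'\in L^q(\mathbb{R})\}$ and $\{f: f'\in L^\infty(\mathbb{R})\}$. This difference matters most for items~1 and~3. Your witnesses there need only an interval of length greater than $1$. On an interval of length $\ell\le 1$ the corresponding inclusions do hold by H\"older, e.g.\ $\int|f'|^q\le \ell^{1-q/r}(\int|f'|^r)^{q/r}$. The paper's witnesses exhibit a failure that is impossible on any bounded interval, namely a finite $L^r$ or $L^\infty$ norm of $f'$ together with an infinite $L^q$ norm.
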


\begin{proof}
(1) Let $f(x)=x$. Then $f$ is absolutely continuous and $f'(x)=1$ for
every $x\in\mathbb{R}$, so $\operatorname*{ess\,sup}_{x\in\mathbb{R}}|f'(x)|=1$,
hence $f\in\mathcal{G}_\infty$. However, for any finite $q\ge 1$,
\[
\int_{-\infty}^{\infty} |f'(x)|^q\,dx
=
\int_{-\infty}^{\infty} 1\,dx
=\infty,
\]
so $f\notin\mathcal{G}_q$.

\smallskip
\noindent
(2) Fix finite $q\ge 1$. For each integer $n\ge 1$, let
\[
I_n := \Bigl[n,\; n+2^{-n(q+1)}\Bigr],
\qquad
\phi(x) := \sum_{n=1}^\infty 2^n\,\mathbf{1}_{I_n}(x),
\qquad
f(x):=\int_0^x \phi(t)\,dt.
\]
The intervals $I_n$ are disjoint, so $\phi$ is well-defined and locally
integrable. Hence $f$ is absolutely continuous and $f'(x)=\phi(x)$ for almost
every $x$. Moreover,
\[
\int_{-\infty}^{\infty} |f'(x)|^q\,dx
=
\sum_{n=1}^\infty \int_{I_n} (2^n)^q\,dx
=
\sum_{n=1}^\infty 2^{nq}\,|I_n|
=
\sum_{n=1}^\infty 2^{nq}\cdot 2^{-n(q+1)}
=
\sum_{n=1}^\infty 2^{-n}
=
1,
\]
so $f\in\mathcal{G}_q$. On the other hand, for every $M>0$ there exists $n$
with $2^n>M$, and then $I_n\subseteq\{x:|f'(x)|>M\}$ has positive measure.
Thus $\operatorname*{ess\,sup}_{x\in\mathbb{R}}|f'(x)|=\infty$, so
$f\notin\mathcal{G}_\infty$.

\smallskip
\noindent
(3) Define
\[
F'(x):=(1+|x|)^{-1/q},\qquad F(x):=\int_0^x F'(t)\,dt.
\]
Then $F$ is absolutely continuous. We have
\[
\int_{-\infty}^{\infty} |F'(x)|^q\,dx
=
2\int_0^\infty (1+x)^{-1}\,dx
=\infty,
\]
so $F\notin\mathcal{G}_q$. On the other hand,
\[
\int_{-\infty}^{\infty} |F'(x)|^r\,dx
=
2\int_0^\infty (1+x)^{-r/q}\,dx <\infty,
\]
since $r/q>1$. After scaling $F$ by a constant factor so that
$\int |F'|^r\le 1$, we obtain an element of $\mathcal{G}_r\setminus\mathcal{G}_q$.
Hence $\mathcal{G}_r\not\subseteq\mathcal{G}_q$.

\smallskip
\noindent
(4) Define
\[
G'(x):=
\begin{cases}
|x|^{-1/r}, & 0<|x|<1,\\
0, & |x|\ge 1,\\
0, & x=0,
\end{cases}
\qquad
G(x):=\int_0^x G'(t)\,dt.
\]
Then $G$ is absolutely continuous. We have
\[
\int_{-\infty}^{\infty} |G'(x)|^r\,dx
=
2\int_0^1 x^{-1}\,dx
=\infty,
\]
so $G\notin\mathcal{G}_r$. On the other hand,
\[
\int_{-\infty}^{\infty} |G'(x)|^q\,dx
=
2\int_0^1 x^{-q/r}\,dx <\infty,
\]
since $q/r<1$. After scaling $G$ by a constant factor so that
$\int |G'|^q\le 1$, we obtain an element of $\mathcal{G}_q\setminus\mathcal{G}_r$.
Hence $\mathcal{G}_q\not\subseteq\mathcal{G}_r$.
\end{proof}

For all $p, q \ge 1$, we prove in the following result that the adversary can force infinite error for $\opt_{p,\mathbb{R}}(\mathcal{G}_q)$. This leads us to consider more restrictive versions of the problem for functions with domain $\mathbb{R}$ in the next subsection. 

\begin{obs}\label{obs:adv_inf}
$\opt_{p,\mathbb{R}}(\mathcal{G}_q) = \infty$ for all $p \ge 1$ and $q > 1$.
\end{obs}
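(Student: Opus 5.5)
Our plan is an adversary argument that exploits the fact that on $\mathbb{R}$ the queries can be spread arbitrarily far apart. A function in $\mathcal{G}_q$ with $q>1$ can change by a fixed amount $\delta$ over a very long interval at almost no cost in $\int|f'|^q$. Fix any learner $A$. The adversary will pick query points $x_0<x_1<x_2<\cdots$ with gaps $L_t=x_t-x_{t-1}$ growing fast. After seeing the prediction $\hat y_t$, it sets $f(x_t)=f(x_{t-1})\pm 1$, choosing the sign opposite to $\hat y_t$ relative to $f(x_{t-1})$. Then $|\hat y_t-f(x_t)|\ge 1$ in every round $t\ge 1$, so after $m$ rounds the loss is at least $m$ for every $p\ge 1$.

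The remaining work is to check that some $f\in\mathcal{G}_q$ realizes these values, and we will take $f$ piecewise linear.
\begin{itemize}
\item On $[x_{t-1},x_t]$, let $f$ have slope $\pm 1/L_t$.
\item Outside $[x_0,x_m]$, let $f$ be constant.
\end{itemize}
This $f$ is absolutely continuous, and
\[
\int_{\mathbb{R}}|f'|^q\,dx=\sum_{t=1}^m L_t\cdot L_t^{-q}=\sum_{t=1}^m L_t^{1-q}.
\]
Since $q>1$, choosing $L_t=2^{t/(q-1)}$ gives $\sum_t 2^{-t}<1$ for every $m$, so $f\in\mathcal{G}_q$.

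We must also handle adaptivity. The adversary commits to the gaps $L_t$ in advance and fixes $f$'s values only as the rounds proceed. Against a fixed (possibly randomized-free, deterministic) learner this is legitimate: the sup in $L_{p,\mathbb{R}}(A,\mathcal{F})$ ranges over all $f$ and all $x$, so simulating $A$ produces one specific pair $(f,x)$ with loss at least $m$. Letting $m\to\infty$ gives $L_{p,\mathbb{R}}(A,\mathcal{G}_q)=\infty$ for every $A$, hence $\opt_{p,\mathbb{R}}(\mathcal{G}_q)=\infty$. One could instead use a single infinite sequence with the same $f$ on $[x_0,\infty)$ being eventually constant-free, but the finite-$m$ version avoids any limit issues.

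The main obstacle is the energy bookkeeping: the sum $\sum_t L_t^{1-q}$ must stay bounded uniformly in $m$, and this is exactly where $q>1$ is needed. For $q=1$ the cost of each unit jump is $1$ regardless of the gap, and the construction fails, consistent with the hypothesis. A minor point is that the sign choice makes the error at least $1$ even when $\hat y_t$ equals $f(x_{t-1})$, by picking either sign.
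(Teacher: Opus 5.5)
Your proposal is correct and takes essentially the same approach as the paper: the queries are separated by geometrically growing gaps, $f$ is the linear interpolant, its $q$-energy $\sum_t L_t^{1-q}$ is summable exactly because $q>1$, and each new value is chosen from two options to maximize the learner's error. The differences are cosmetic. Your symmetric $\pm 1$ steps give error at least $1$ per round rather than the paper's $h/2$, and your finite-horizon argument fits the definition of $L_{p,\mathbb{R}}$, which ranges over finite input sequences, slightly more cleanly than the paper's single infinite sequence.
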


\begin{proof}
Let $c \ge 2$ and $q = 1+r$. The adversary can use the inputs $x_0 = 0$ and $x_i =x_{i-1}+ c^{i}$ for each $i \ge 1$, let $g(x_0) = 0$, and let $g(x_i) = g(x_{i-1})$ or $g(x_i) = g(x_{i-1})+h$, whichever is further from the learner's guess. If $f$ is the function that linearly interpolates the points $(x_0, g(x_0)), (x_1, g(x_1)), \dots$, then \[\int_{-\infty}^{\infty} |f'(x)|^q dx \le \sum_{i = 1}^{\infty} c^{i} \left(\frac{h}{c^{i}}\right)^{1+r} = h^{1+r} \sum_{i = 1}^{\infty} \frac{1}{c^{r i}} = \frac{h^{1+r}}{c^r-1}.\] If we choose $h \le (c^r-1)^{\frac{1}{1+r}}$, then $\int_{-\infty}^{\infty} |f'(x)|^q dx \le 1$. To conclude the proof, observe that in each round $i\ge 1$ the adversary
chooses $g(x_i)\in\{g(x_{i-1}),\,g(x_{i-1})+h\}$ so as to maximize the
absolute error $|\hat y_i-g(x_i)|$. In particular,
\[
|\hat y_i-g(x_i)| \;\ge\; \frac{h}{2}
\]
for every $i$, regardless of the learner's strategy. Hence the cumulative
loss satisfies
\[
L_{p,\mathbb{R}}(A,f,x)
\;\ge\;
\sum_{i=1}^\infty \Bigl(\frac{h}{2}\Bigr)^p
\;=\;\infty,
\]
for all $p\ge 1$. Since $f\in\mathcal{G}_q$ by the preceding calculation,
this shows that $\opt_{p,\mathbb{R}}(\mathcal{G}_q)=\infty$.

\end{proof}

The last proof shows that if the adversary chooses inputs that are far away (high $c$) from the inputs that the learner has seen so far, then the learner will have infinite error when the adversary plays optimally. The learner cannot even guarantee finite error on a single turn in this model, since the adversary can choose $c$ to be arbitrarily large.

There are multiple ways to address the issue of the difficulty in predicting $f(x)$ for inputs $x$ that are far away from all of the past inputs. We discuss some possibilities in the following sections: restricting the inputs, restricting the penalty function, and putting weights on the errors which depend on the distances to past inputs. Each of these possibilities assumes that $\mathcal{G}_q$ is still the family of functions.

On the other hand, we can also consider $\opt_{p,\mathbb{R}}(F)$ for other families of functions $F$. For example, let $F_L(n, \mathbb{R})$ denote the family of linear transformations from $\mathbb{R}^n$ to $\mathbb{R}$, i.e., $F_L(n, \mathbb{R})$ consists of the functions $f_v$ for which $f_v(x) = v \dot x$. Let $G_L(n, \mathbb{R},r)$ be the family of functions $g_v$ such that $g_v(x) = f_v(x)$ if $|f_v(x)| \le r$ and otherwise $g_v(x) = 0$.

\begin{thm}\label{thm:glnr}
For all $r>0$ and positive integers $n$, we have
\[
\opt_{p,\mathbb{R}}(G_L(n,\mathbb{R},r)) = n r^p.
\]
\end{thm}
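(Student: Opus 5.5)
The plan is to prove matching upper and lower bounds of $n r^p$. Throughout, $\mathbb{R}$ in the subscript refers to the real-valued setting, with inputs $x_t\in\mathbb{R}^n$. The key structural fact is that every $g_v\in G_L(n,\mathbb{R},r)$ satisfies $|g_v(x)|\le r$ for all $x$. Moreover, every observation with a \emph{nonzero} value $g_v(x_j)=y_j\neq 0$ certifies the exact linear constraint $v\cdot x_j=y_j$, since the truncation only ever outputs $0$ or the true value $v\cdot x$. Zero observations are ambiguous: they could mean $v\cdot x=0$ or $|v\cdot x|>r$. We therefore simply ignore them.

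\textbf{Upper bound.} The learner maintains the set $S$ of past inputs whose revealed value was nonzero, together with those values.
\begin{itemize}
\item If the new query $x_t$ lies in $\operatorname{span}(S)$, write $x_t=\sum_j \lambda_j x_j$ over $x_j\in S$. Then $v\cdot x_t=\sum_j\lambda_j y_j$ is determined exactly, so the learner can compute $g_v(x_t)$ (apply the truncation rule) and incurs zero loss.
\item If $x_t\notin\operatorname{span}(S)$, the learner predicts $\hat y_t=0$. The loss is then $|g_v(x_t)|^p\le r^p$. It is nonzero only if $g_v(x_t)\neq 0$, in which case $x_t$ is added to $S$ and $\dim\operatorname{span}(S)$ strictly increases.
\end{itemize}
Since the dimension is at most $n$, at most $n$ rounds incur positive loss, each costing at most $r^p$. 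This gives total loss at most $n r^p$, including the uncounted first trial. One small point to check is that the prediction in the first case is well defined regardless of which representation $\sum_j\lambda_j x_j$ is chosen. This holds because any two representations give the same value of $v\cdot x_t$ for the true $v$.

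\textbf{Lower bound.} The adversary spends the uncounted trial on $x_0=0$, which reveals nothing since $g_v(0)=0$ for every $v$. For $i=1,\dots,n$ it queries $x_i=e_i$. After seeing $\hat y_i$, it reveals $g_v(e_i)=v_i\in\{r,-r\}$, choosing whichever is farther from $\hat y_i$. Then $|\hat y_i-v_i|\ge \max(|\hat y_i-r|,|\hat y_i+r|)\ge r$, so each of the $n$ rounds costs at least $r^p$. The final $v\in\{\pm r\}^n$ is consistent with every answer, because $|v\cdot e_i|=r\le r$ means no truncation occurs. The choices in different coordinates are independent, so the adversary is never constrained by earlier answers. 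This gives loss at least $n r^p$ against every learner.

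I expect the only delicate step to be the upper bound's bookkeeping. One must argue that zero-valued observations can be discarded without ever costing the learner anything. One must also confirm that exactly-determined predictions on $\operatorname{span}(S)$ are truly error-free, which follows because the truncation is a deterministic function of $v\cdot x$. The lower bound is a direct coordinate-by-coordinate adversary.
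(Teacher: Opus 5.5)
Your proposal is correct and follows the paper's proof essentially step for step. The learner is the same: track the span of nonzero-labeled inputs, predict exactly (with truncation) inside it and $0$ outside it, so at most $n$ rounds each cost at most $r^p$; the adversary is also the same, querying $x_0=\mathbf{0}$, then $e_1,\dots,e_n$, and answering $\pm r$, whichever is farther from the guess. Your extra checks, that the span prediction is well defined and that the final $v\in\{\pm r\}^n$ is consistent with all answers, are details the paper leaves implicit.
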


\begin{proof}
    First, we show that $\opt_{p,\mathbb{R}}(G_L(n,\mathbb{R},r)) \le n r^p$.
Let the hidden function be $g$. Given an input $x_m$ not in the span of the
previous inputs $x_{i_1},\dots,x_{i_k}$ with nonzero outputs, the learner
guesses $0$. If $x_m=\sum_{j=1}^k c_{i_j}x_{i_j}$ and
$\bigl|\sum_{j=1}^k c_{i_j} g(x_{i_j})\bigr|\le r$, then the learner guesses
$\sum_{j=1}^k c_{i_j} g(x_{i_j})$. If $x_m=\sum_{j=1}^k c_{i_j}x_{i_j}$ and
$\bigl|\sum_{j=1}^k c_{i_j} g(x_{i_j})\bigr|> r$, then the learner guesses $0$.
Thus the learner is always correct whenever $x_m$ lies in the span of the
previous inputs with nonzero outputs. Whenever $x_m$ is not in that span, the
output is either $0$ or has absolute value at most $r$, so the absolute error
is at most $r$. Since there can be at most $n$ linearly independent inputs with
nonzero outputs, the learner makes at most $n$ such new-direction errors,
and each contributes at most $r^p$. Hence
$\opt_{p,\mathbb{R}}(G_L(n,\mathbb{R},r)) \le n r^p$.

Now we show that $\opt_{p,\mathbb{R}}(G_L(n,\mathbb{R},r)) \ge n r^p$.
Let $e_1,\dots,e_n$ denote the standard basis of $\mathbb{R}^n$.
In round $0$ the adversary plays $x_0=\mathbf{0}$.
For each $i=1,\dots,n$, in round $i$ the adversary plays $x_i=e_i$.
After the learner outputs $\hat y_i$, the adversary declares the true label to
be $+r$ or $-r$, whichever is farther from $\hat y_i$. This forces
$|\hat y_i - g(x_i)| \ge r$ for each $i=1,\dots,n$, hence incurs loss at least
$r^p$ in each of these $n$ rounds. Therefore
$\opt_{p,\mathbb{R}}(G_L(n,\mathbb{R},r)) \ge n r^p$.
\end{proof}

\section{Alternative scenarios for mitigating adversarial loss}\label{sec:3}

In this section, we introduce three possible learning scenarios that address the difficulty of predicting $f(x)$ for inputs $x$ that are far away from all of the past inputs. Each of these scenarios involves either modifying the original loss function, or adding input constraints to limit the adversary. The primary purpose of all of these scenarios is to limit the influence on the penalty function of inputs that are extremely far from previously observed values. 

\begin{description}
    \item[Scenario 1]
Restrictions on the Inputs 
\end{description}
We consider the learning scenario where we restrict the inputs $x_0, x_1, \dots \in \mathbb{R}^n$ so that for all $t \ge 1$, there must exist $i < t$ such that $d(x_i,x_t) \le 1$. Define \[L'_{p,\mathbb{R}}(A, \mathcal{F}) = \displaystyle \sup L'_{p,\mathbb{R}}(A, f, x),\] where the supremum is taken over all $f \in \mathcal{F}$ and $x \in  \cup_{m \in \mathbb{N}} \mathbb{R}^{m+1}$ such that \[\min_{i < t}d(x_t,x_i) \le 1\] for all $t > 0$. Furthermore, define the optimum $\opt'_{p,\mathbb{R}}(\mathcal{F}) = \displaystyle \inf_A L'_{p,\mathbb{R}}(A, \mathcal{F})$. 

This scenario applies a direct constraint to the adversary, by ensuring that each input that they choose after the first input is within a certain distance of the set of previous inputs. The loss function does not change from its original form, the only difference is the extra restriction on the adversary. 
Enforcing the restriction that \[\min_{i < t}d(x_t,x_i) \le 1\] for all $t > 0$ allows us to limit the adversary's ability to choose inputs that are arbitrarily far away from the previous inputs. In turn, this limits the adversary's ability to force arbitrarily large errors by the learner. 
\begin{description}
    \item[Scenario 2]
Free Guesses when Out of Bounds  
\end{description}

In this scenario, we remove the restrictions on the adversary's input choices, but we modify the loss function so that the learner is not penalized for any guess that occurs when a new input is at least $1$ away from all previous inputs.

For each $p \ge 1$ and $x = (x_0, \dots, x_m) \in \mathbb{R}^{m+1}$, define $x_{i_1}, \dots, x_{i_k}$ to be the maximal subsequence of $x_1, \dots, x_m$ such that \[\min_{j < i_t} d(x_{i_t},x_j) \le 1\] for all $t$ with $1 \le t \le k$. Let $L^{*}_{p,\mathbb{R}}(A, f, x) = \sum_{t = 1}^k |\hat{y}_{i_t}-f(x_{i_t})|^p$. Again note that the summation never includes the result of the first round. Define \[L^{*}_{p,\mathbb{R}}(A, \mathcal{F}) = \displaystyle \sup_{f \in \mathcal{F}, x \in  \cup_{m \in \mathbb{N}} \mathbb{R}^{m+1}} L^{*}_{p,\mathbb{R}}(A, f, x).\] We define the optimum $\opt^{*}_{p,\mathbb{R}}(\mathcal{F}) = \displaystyle \inf_A L^{*}_{p,\mathbb{R}}(A, \mathcal{F})$. 

 
This modification allows for selective penalization, where the algorithm is not blamed for errors on inputs that are far from all of the previous inputs. Unlike the first scenario, this allows the adversary to choose any inputs in the domain in any round, while ensuring that the learner is only penalized for errors on inputs that sufficiently close to previous inputs. From the definitions, it is easy to see that $\opt^{'}_{p,\mathbb{R}}(\mathcal{F}) \le \opt^{*}_{p,\mathbb{R}}(\mathcal{F})$ for all families of functions $\mathcal{F}$ with domain $\mathbb{R}$. 

\begin{lem}\label{lem:sc12}
For any family of functions $F$ and any $p\ge 1$,
\[
\opt'_{p,\mathbb{R}}(F)
\;\le\;
\opt^{*}_{p,\mathbb{R}}(F).
\]
\end{lem}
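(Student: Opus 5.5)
The plan is to compare the two scenarios algorithm by algorithm and then pass to the infimum. I will show that for every learner $A$,
\[
L'_{p,\mathbb{R}}(A,F)\le L^{*}_{p,\mathbb{R}}(A,F).
\]
Taking $\inf_A$ on both sides then gives $\opt'_{p,\mathbb{R}}(F)\le\opt^{*}_{p,\mathbb{R}}(F)$. The inequality between infima follows because it holds pointwise in $A$. No transformation of the learner is needed, since the same algorithm $A$ can be run in both scenarios.

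To prove the pointwise inequality, I will fix $A$, $f\in F$, and a sequence $x=(x_0,\dots,x_m)$ admissible in Scenario~1. Admissibility means that $\min_{i<t}d(x_t,x_i)\le 1$ for every $t\ge 1$. Such an $x$ is also a legal sequence in Scenario~2, because Scenario~2 allows arbitrary sequences in $\bigcup_m\mathbb{R}^{m+1}$.

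The key observation is that for this $x$, the maximal subsequence $x_{i_1},\dots,x_{i_k}$ in the definition of $L^{*}$ consists of all of $x_1,\dots,x_m$, since every index $t\ge1$ satisfies the closeness condition. The learner sees the same inputs and feedback in both scenarios, so its predictions $\hat y_t$ are identical. Hence
\[
L^{*}_{p,\mathbb{R}}(A,f,x)=\sum_{t=1}^m|\hat y_t-f(x_t)|^p=L'_{p,\mathbb{R}}(A,f,x).
\]
Taking the supremum over the smaller set of Scenario~1 pairs $(f,x)$ is therefore bounded by the supremum over all pairs, which is $L^{*}_{p,\mathbb{R}}(A,F)$.

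The only point needing care, and the nearest thing to an obstacle, is confirming that $A$'s behavior does not depend on which scenario it is in. A learner's prediction at round $t$ is a function only of $x_0,\dots,x_t$ and the revealed values $f(x_0),\dots,f(x_{t-1})$. The same $A$ is thus a valid strategy in both games and produces the same predictions on the same input sequence. If randomized learners were allowed, the argument would go through unchanged with expectations in place of the losses. Infinite values pose no difficulty either, since the inequalities hold in $[0,\infty]$.
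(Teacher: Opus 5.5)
Your proposal is correct and follows essentially the same route as the paper: Scenario~1-admissible sequences are legal in Scenario~2, and on them $L^{*}_{p,\mathbb{R}}$ coincides with the unmodified loss, so for each learner the Scenario~1 supremum ranges over a subset and taking $\inf_A$ gives the inequality. The paper phrases this as ``simulating any Scenario~1 adversary in Scenario~2,'' which is the same observation; your per-learner framing simply makes the order of the supremum and infimum explicit.
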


\begin{proof}
Every input sequence admissible in Scenario~1 (where
$\min_{i<t} d(x_t,x_i)\le 1$ for all $t>0$) is also admissible in
Scenario~2. On such sequences, the modified loss $L^*_{p,\mathbb{R}}$
coincides exactly with the original loss. Therefore any adversary strategy in
Scenario~1 can be simulated in Scenario~2 with the same incurred loss, which
implies the stated inequality.
\end{proof}


\begin{description}
    \item[Scenario 3]
Weighting of the Loss Function Based on Distance to Past Inputs
\end{description}
This scenario is parameterized by a nonnegative nonincreasing function $g: (0,\infty) \rightarrow [0,\infty)$, which modifies the standard p-norm function by incorporating a weight factor that diminishes the error penalty based on how far inputs are from previous data points. Throughout Scenario~3 we assume the input sequence $x=(x_0,\dots,x_m)$ has distinct entries, so that $\min_{0\le j<t} d(x_t,x_j)>0$ for every $t\ge1$. Given $g$, the adjusted loss function is defined as 

\[
L_{p,\mathbb{R}}^{g}(A, f, x)
\;=\;
\sum_{t=1}^{m}
g \bigl(\min_{0 \le j < t} d(x_{t}, x_{j})\bigr) \bigl|\hat{y}_t - f(x_t)\bigr|^p 
\] 

As with the other scenarios, define \[L^{g}_{p,\mathbb{R}}(A, \mathcal{F}) = \displaystyle \sup_{f \in \mathcal{F}, x \in  \cup_{m \in \mathbb{N}} \mathbb{R}^{m+1}} L^{g}_{p,\mathbb{R}}(A, f, x).\] We define the optimum $\opt^{g}_{p,\mathbb{R}}(\mathcal{F}) = \displaystyle \inf_A L^{g}_{p,\mathbb{R}}(A, \mathcal{F})$.

In this paper, we focus in particular on two choices of the weight function $g$. First, the \emph{identity weighting} corresponds to the choice
\[
g(z)=\frac{1}{z}.
\]
Substituting this into the definition of $L^{g}_{p,\mathbb{R}}$ gives
\[
L_{p,\mathbb{R}}^{\id}(A, f, x)
\;=\;
\sum_{t=1}^{m}
\frac{\bigl|\hat{y}_t - f(x_t)\bigr|^p}
{\min_{0 \le j < t} d(x_{t}, x_{j})}.
\] Second, we consider the \emph{exponential weighting}
\[
g(z)=e^{-cz}
\]
for a constant $c>0$, which yields
\[
L_{p,\mathbb{R}}^{\exp,c}(A, f, x)
\;=\;
\sum_{t=1}^{m}
\exp\!\Bigl(
  -\,c \,\min_{0 \le j < t} d(x_{t}, x_{j})
\Bigr)
\,\bigl|\hat{y}_t - f(x_t)\bigr|^p.
\]
A key conceptual feature of the weighted-loss formulation in Scenario 3 is that it
strictly generalizes Scenario~2. In particular, the ``free guess'' mechanism of
Scenario 2 can be realized as a special case of Scenario~3 by choosing a
discontinuous weight function
\[
g(z) \;=\; \mathbf{1}_{z \le 1}.
\]
Under this choice, any prediction made at distance $z>1$ incurs zero loss,
regardless of the predicted value, exactly matching the semantics of a free guess.
Consequently, Scenario 2 corresponds to a degenerate instance of Scenario 3.

We start with a lemma comparing Scenarios 2 and 3.

\begin{lem}\label{lem:comp23}
For all families of functions $F$ and all $p \ge 1$, we have
\[
\opt^{\id}_{p,\mathbb{R}}(F) \ge \opt^{*}_{p,\mathbb{R}}(F).
\]
\end{lem}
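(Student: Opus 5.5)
The plan is a pointwise comparison of losses: for every algorithm $A$, every $f\in F$ and every input sequence $x$, we will show $L^{\id}_{p,\mathbb{R}}(A,f,x)\ge L^{*}_{p,\mathbb{R}}(A,f,x)$. Taking suprema over $f$ and $x$ then gives $L^{\id}_{p,\mathbb{R}}(A,F)\ge L^{*}_{p,\mathbb{R}}(A,F)$ for each fixed $A$. Taking the infimum over $A$ on both sides yields $\opt^{\id}_{p,\mathbb{R}}(F)\ge\opt^{*}_{p,\mathbb{R}}(F)$.

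For the pointwise inequality, fix $A,f,x$ and write $z_t=\min_{0\le j<t}d(x_t,x_j)$. Compare the two sums round by round.

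\begin{itemize}
\item Rounds counted in $L^{*}$ are exactly those with $z_t\le 1$. For such a round $1/z_t\ge 1$, so the identity-weighted term $|\hat y_t-f(x_t)|^p/z_t$ is at least the unweighted term $|\hat y_t-f(x_t)|^p$.
\item Rounds with $z_t>1$ contribute $0$ to $L^{*}$ and a nonnegative amount to $L^{\id}$.
\end{itemize}

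Summing over $t$ gives the claim. In other words, the weight $1/z$ pointwise dominates $\mathbf{1}_{z\le 1}$ on $(0,\infty)$, which is the Scenario~2 weight noted above.

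The main obstacle is a technicality: the two scenarios range over slightly different input sets. Scenario~3 assumes the entries of $x$ are distinct, while Scenario~2 allows repeated inputs. The resolution is that $\opt^{*}$ is unaffected by restricting to distinct sequences. If $x_t=x_j$ for some $j<t$, the learner can simply predict the already revealed value $f(x_j)$ and incur zero loss.

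To make this precise, convert any algorithm $A$ into $A'$ that answers repeats correctly and otherwise behaves like $A$ on the subsequence of new inputs. For any sequence $x$, the loss $L^{*}(A',f,x)$ equals $L^{*}(A,f,\tilde x)$, where $\tilde x$ is $x$ with repeats deleted. This holds because deleting repeats changes neither the set of previous points nor the value of $z_t$ for any new input.

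Consequently $L^{*}_{p,\mathbb{R}}(A',F)\le\sup_{\text{distinct }x}L^{*}(A,f,x)\le L^{\id}_{p,\mathbb{R}}(A,F)$. Taking infima over $A$ gives $\opt^{*}_{p,\mathbb{R}}(F)\le\opt^{\id}_{p,\mathbb{R}}(F)$, as required.
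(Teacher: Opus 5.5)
Your proof is correct and uses the same idea as the paper: the identity weight $1/z$ dominates the Scenario~2 weight $\mathbf{1}_{z\le 1}$ pointwise, so each round contributes at least as much to $L^{\id}_{p,\mathbb{R}}$ as to $L^{*}_{p,\mathbb{R}}$. Your reduction via the learner $A'$, which answers repeated queries correctly, adds rigor: it handles the distinct-inputs convention of Scenario~3, which the paper's one-line argument passes over. The reduction is genuinely needed, because your opening claim that $L^{\id}_{p,\mathbb{R}}(A,F)\ge L^{*}_{p,\mathbb{R}}(A,F)$ for every fixed $A$ can fail for a learner that errs on repeats; only the inequality between the optima holds.
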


\begin{proof}
    In Scenario 2, the error weight is $1$ if $\min_{i < t}|x_t - x_i| \le 1$, and $0$ otherwise. In Scenario 3 with $\id$ (i.e., with $g(z)=1/z$), the error weight is at least $1$ whenever $\min_{i < t}|x_t - x_i| \le 1$, and positive otherwise. Thus, the adversary can use the same strategy for Scenario 3 as they would use for Scenario 2, and the total penalty for Scenario 3 will be at least the total penalty for Scenario 2.
\end{proof}

As a result of Lemma~\ref{lem:comp23}, we find some families of functions for which the learner cannot guarantee finite error in Scenario 3 with the identity function, when $p = 1$ or $q = 1$.

\begin{cor}
For all $p \ge 1$ and $q \ge 1$, we have $\opt^{\id}_{p,\mathbb{R}}(\mathcal{G}_1) = \infty$ and $\opt^{\id}_{1,\mathbb{R}}(\mathcal{G}_q) = \infty$.
\end{cor}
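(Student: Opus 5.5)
By Lemma~\ref{lem:comp23}, it suffices to show that $\opt^{*}_{p,\mathbb{R}}(\mathcal{G}_1)=\infty$ for every $p\ge 1$ and that $\opt^{*}_{1,\mathbb{R}}(\mathcal{G}_q)=\infty$ for every $q\ge 1$. Each is an infinite lower bound in Scenario~2. By Lemma~\ref{lem:sc12} it is in fact enough to give Scenario~1 adversaries, where every query lies within distance $1$ of a previous query. I will build both adversaries by confining all queries to a bounded interval, say $[0,1]$, and reducing to the bounded-domain results of Kimber and Long \cite{klsf}. Those results give $\opt_{p,[0,1]}(\mathcal{F}_1)=\infty$ for all $p\ge1$, and $\opt_{1,[0,1]}(\mathcal{F}_q)=\infty$ for all $q\ge 1$.

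\textbf{Reduction.} Fix any $x_0\in[0,1]$. Every later query $x_t\in[0,1]$ satisfies $|x_t-x_0|\le 1$. Hence any adversary sequence valid for the $[0,1]$ model is admissible in Scenario~1, and every round is penalized in Scenario~2. Next, any $f\in\mathcal{F}_q$ on $[0,1]$ extends to $\tilde f\in\mathcal{G}_q$ by setting $\tilde f$ constant, equal to $f(0)$ on $(-\infty,0]$ and to $f(1)$ on $[1,\infty)$. The extension is absolutely continuous, and its derivative vanishes outside $[0,1]$, so $\int_{\mathbb{R}}|\tilde f'|^q=\int_0^1|f'|^q\le 1$. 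The same holds with $q=1$.

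\textbf{Simulation.} Given any learner $A$ on $\mathbb{R}$, restrict it to inputs in $[0,1]$ to obtain a learner for $[0,1]$. For every $M$ there is an $f$ in the relevant class $\mathcal{F}_1$ or $\mathcal{F}_q$, together with a finite sequence, on which the restricted learner suffers loss at least $M$. The same sequence against $\tilde f$ gives $L^{*}_{p,\mathbb{R}}(A,\tilde f,x)\ge M$. This yields both Scenario~2 values equal to $\infty$, and Lemma~\ref{lem:comp23} then gives the corollary.

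\textbf{Main obstacle.} The only delicate point is checking that the cited infinite-loss results of \cite{klsf} hold for all $p\ge 1$ in the $\mathcal{F}_1$ case and for all $q\ge1$ in the $p=1$ case, with the adversary's inputs in $[0,1]$. If a direct version is wanted, I would mimic the proof of Observation~\ref{obs:adv_inf} inside $[0,1]$.

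For $\mathcal{F}_1$: the adversary repeatedly bisects, querying midpoints of shrinking intervals and choosing a jump of size $h$ or $0$ adversarially. In the $q=1$ norm, steep ramps over tiny intervals cost only their total variation. So the adversary can force error $1/2$ roughly once per unit of variation, giving about two mistakes of constant size. To push beyond a bounded total, I would use the infinitely many labelings consistent with a single unit-variation step placed at an unknown location, found by binary search. The step height is fixed, so each round forces constant error, which gives unbounded loss.

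For $p=1$, $q\ge 1$: use the classical harmonic construction. Query points at spacing $2^{-k}$ with jumps of size $2^{-k/2}/k$ or so. The $q$-energy stays bounded, while the $\ell_1$ error sum diverges.
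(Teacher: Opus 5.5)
Your proposal is correct and follows essentially the paper's route: Lemma~\ref{lem:comp23} reduces both claims to Scenario~2, and the Scenario~2 values are infinite by Observation~\ref{opt_comp} (constant extension of $\mathcal{F}_q$ into $\mathcal{G}_q$, with all inputs in $[0,1]$ pairwise within distance $1$) combined with the Kimber--Long results $\opt_{p,[0,1]}(\mathcal{F}_1)=\infty$ and $\opt_{1,[0,1]}(\mathcal{F}_q)=\infty$. The only caveat is your optional ``direct'' sketch for $p=1$, which would fail as stated: displacing $\sim 2^k$ dyadic points by $\sim 2^{-k/2}/k$ at level $k$ makes the total variation, and hence every $q$-energy on $[0,1]$, diverge. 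Your main argument relies only on the citation and does not need that sketch.
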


The last corollary can be generalized to functions $g$ such that $g(z) \ge \frac{1}{z}$ for all $z > 0$.

\begin{cor}
For all functions $g$ such that $g(z) \ge \frac{1}{z}$ for all $z > 0$ and for all real numbers $p \ge 1$ and $q \ge 1$, we have $\opt^{g}_{p,\mathbb{R}}(\mathcal{G}_1) = \infty$ and $\opt^{g}_{1,\mathbb{R}}(\mathcal{G}_q) = \infty$.
\end{cor}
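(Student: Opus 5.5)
The plan is to reduce to the previous corollary (the case $g(z)=1/z$) through a pointwise comparison of losses. Fix any learner $A$, any $f$, and any input sequence $x$ with distinct entries. Since $g(z)\ge 1/z$ for every $z>0$, each term of the weighted loss dominates the corresponding identity-weighted term:
\[
g\bigl(\min_{j<t} d(x_t,x_j)\bigr)\,|\hat y_t-f(x_t)|^p \;\ge\; \frac{|\hat y_t-f(x_t)|^p}{\min_{j<t} d(x_t,x_j)} .
\]
Summing over $t$ gives $L^{g}_{p,\mathbb{R}}(A,f,x)\ge L^{\id}_{p,\mathbb{R}}(A,f,x)$. Taking the supremum over $f\in\mathcal F$ and admissible $x$, and then the infimum over $A$, yields $\opt^{g}_{p,\mathbb{R}}(\mathcal F)\ge \opt^{\id}_{p,\mathbb{R}}(\mathcal F)$ for every family $\mathcal F$. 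Applying this with $\mathcal F=\mathcal G_1$ and with $\mathcal F=\mathcal G_q$ at $p=1$, the previous corollary shows that both right-hand sides are $\infty$.

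Alternatively, one can argue directly as in Lemma~\ref{lem:comp23}. The hypothesis gives $g(z)\ge 1$ for $z\le 1$, so $g$ dominates the Scenario~2 indicator weight $\mathbf 1_{z\le 1}$. Hence $\opt^{g}_{p,\mathbb{R}}(\mathcal F)\ge \opt^{*}_{p,\mathbb{R}}(\mathcal F)$, and the divergence would then follow from the Scenario~2 lower bounds underlying the earlier corollary.

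The only point that needs care is admissibility. Scenario~3 requires distinct inputs, so the adversary sequences used for the identity weighting must already be valid here; they are, since the same class of sequences is used in both settings. Everything else is the monotonicity of the loss in the weight, which passes through the sup and the inf.
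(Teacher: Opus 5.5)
Your proposal is correct and follows essentially the route the paper intends. The paper only remarks that the preceding corollary generalizes, and the justification is your pointwise domination $g(z)\ge 1/z$, which gives $\opt^{g}_{p,\mathbb{R}}(\mathcal F)\ge \opt^{\id}_{p,\mathbb{R}}(\mathcal F)$ as in Corollary~\ref{cor:weight-monotone}, or equivalently your observation that $g\ge 1$ on $(0,1]$, so the Scenario~2 comparison of Lemma~\ref{lem:comp23} carries over, with divergence ultimately coming from the Kimber--Long lower bounds via Observation~\ref{opt_comp}.
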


Note that for any function $f \in \mathcal{F}_q$, we can extend $f$ to a function $g \in \mathcal{G}_q$ for which $f(x) = g(x)$ for all $x \in [0,1]$, $g(x) = f(0)$ for all $x < 0$, and $g(x) = f(1)$ for all $x > 1$. This leads to the following observation which we use for several results in this paper.

\begin{obs}\label{opt_comp}
For all $p, q \ge 1$, we have $\opt_{p,[0,1]}(\mathcal{F}_q) \le \opt'_{p,\mathbb{R}}(\mathcal{G}_q) \le \opt^{*}_{p,\mathbb{R}}(\mathcal{G}_q)$.
\end{obs}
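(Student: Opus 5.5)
The second inequality, $\opt'_{p,\mathbb{R}}(\mathcal{G}_q) \le \opt^{*}_{p,\mathbb{R}}(\mathcal{G}_q)$, is Lemma~\ref{lem:sc12} applied with $F=\mathcal{G}_q$. So the work is in the first inequality, $\opt_{p,[0,1]}(\mathcal{F}_q) \le \opt'_{p,\mathbb{R}}(\mathcal{G}_q)$. I would prove it by a reduction. Any learner $A$ for Scenario~1 on $\mathcal{G}_q$ yields a learner $B$ on $[0,1]$ for $\mathcal{F}_q$ with $L_{p,[0,1]}(B,\mathcal{F}_q)\le L'_{p,\mathbb{R}}(A,\mathcal{G}_q)$. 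Taking the infimum over $A$ then gives the claim.

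The learner $B$ simply runs $A$. When $B$ receives $x_t\in[0,1]$, it feeds $x_t$ to $A$, outputs $A$'s prediction $\hat y_t$, and passes the revealed value $f(x_t)$ back to $A$ as the label. Two facts make this valid.

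\begin{enumerate}
\item Every input sequence $x_0,x_1,\dots\in[0,1]$ is admissible in Scenario~1. For each $t\ge1$ we have $d(x_t,x_0)=|x_t-x_0|\le 1$, so $\min_{i<t} d(x_t,x_i)\le 1$.
\item The observed labels are consistent with a function in $\mathcal{G}_q$. Use the extension noted just before the statement: given $f\in\mathcal{F}_q$, let $\tilde f=f$ on $[0,1]$, $\tilde f\equiv f(0)$ on $(-\infty,0)$, and $\tilde f\equiv f(1)$ on $(1,\infty)$.
\end{enumerate}

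For the second fact I need to check that $\tilde f\in\mathcal{G}_q$. It is absolutely continuous, since it is continuous and piecewise absolutely continuous on $\mathbb{R}$, gluing at $0$ and $1$. Also $\tilde f'=0$ almost everywhere outside $[0,1]$, so $\int_{\mathbb{R}}|\tilde f'|^q=\int_0^1|f'|^q\le 1$.

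Since $\tilde f(x_t)=f(x_t)$ for all $t$, the run of $B$ against $(f,x)$ is exactly the run of $A$ against $(\tilde f,x)$. The two losses agree term by term, so $L_{p,[0,1]}(B,f,x)=L'_{p,\mathbb{R}}(A,\tilde f,x)\le L'_{p,\mathbb{R}}(A,\mathcal{G}_q)$. Taking the supremum over $f$ and $x$, and then the infimum over $A$, gives the first inequality.

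There is no real obstacle here. The only points needing care are the absolute continuity of the glued extension and the observation that the first input $x_0$ alone certifies admissibility of the whole sequence. That makes the Scenario~1 constraint vacuous on $[0,1]$, because its diameter is $1$.
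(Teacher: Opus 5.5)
Your proof is correct and follows the same route as the paper. The second inequality is Lemma~\ref{lem:sc12}, and the first comes from two facts: extending each $f\in\mathcal{F}_q$ by constants gives an element of $\mathcal{G}_q$, and every input sequence in $[0,1]$ is Scenario~1 admissible. You spell out the simulation argument and the check that the extension lies in $\mathcal{G}_q$, both of which the paper's one-line proof leaves implicit.
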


\begin{proof}
All functions in $\mathcal{F}_q$ can be extended to functions in $\mathcal{G}_q$, and all inputs in $[0,1]$ are within distance $1$ of each other.
\end{proof}

Using the last observation, we obtain some immediate corollaries from the results of Kimber and Long \cite{klsf}.

\begin{cor}
For all $p \ge 1$ and $q \ge 1$, we have:
\begin{enumerate}
    \item $\opt'_{p,\mathbb{R}}(\mathcal{G}_1) = \infty$,
    \item $\opt'_{1,\mathbb{R}}(\mathcal{G}_q) = \infty$,
    \item $\opt^{*}_{p,\mathbb{R}}(\mathcal{G}_1) = \infty$,
    \item $\opt^{*}_{1,\mathbb{R}}(\mathcal{G}_q) = \infty$.
\end{enumerate} 
\end{cor}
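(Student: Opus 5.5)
The corollary follows by combining Observation~\ref{opt_comp} with the infinite-loss results of Kimber and Long \cite{klsf} on $[0,1]$. Kimber and Long proved $\opt_{p,[0,1]}(\mathcal{F}_1)=\infty$ for every $p\ge 1$ and $\opt_{1,[0,1]}(\mathcal{F}_q)=\infty$ for every $q\ge 1$. Observation~\ref{opt_comp} gives the chain
\[
\opt_{p,[0,1]}(\mathcal{F}_q) \le \opt'_{p,\mathbb{R}}(\mathcal{G}_q) \le \opt^{*}_{p,\mathbb{R}}(\mathcal{G}_q)
\]
for all $p,q\ge 1$. Setting $q=1$ and using the first Kimber--Long result forces both $\opt'_{p,\mathbb{R}}(\mathcal{G}_1)$ and $\opt^{*}_{p,\mathbb{R}}(\mathcal{G}_1)$ to be infinite, which gives items (1) and (3). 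Setting $p=1$ and using the second result gives items (2) and (4).

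The only point needing care is that the inequalities in Observation~\ref{opt_comp} are legitimate. First, every $f\in\mathcal{F}_q$ extends to a member of $\mathcal{G}_q$ by constants outside $[0,1]$. This extension is absolutely continuous, and its derivative vanishes off $[0,1]$, so $\int_{\mathbb{R}}|g'|^q=\int_0^1|f'|^q\le 1$. Second, any adversary sequence in $[0,1]$ satisfies the Scenario~1 constraint, since all points lie within distance $1$ of each other, and the Scenario~1 loss on such a sequence equals the original $[0,1]$ loss. Hence any learner on $\mathbb{R}$ restricted to these inputs and functions is a learner for the $[0,1]$ problem, and its worst case on $\mathbb{R}$ is at least the $[0,1]$ optimum. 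The second inequality is Lemma~\ref{lem:sc12}.

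I expect no step to be a real obstacle. The mildest subtlety is confirming that the Kimber--Long lower bounds are stated for the same loss convention, with the first trial excluded, so that they transfer through the chain without change.
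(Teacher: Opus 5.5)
Your proposal is correct and takes the same route as the paper. The paper likewise obtains all four items in one line from Observation~\ref{opt_comp} together with Kimber and Long's results $\opt_{p,[0,1]}(\mathcal{F}_1)=\infty$ and $\opt_{1,[0,1]}(\mathcal{F}_q)=\infty$; your added justification of that observation (constant extension, Scenario~1 admissibility of inputs in $[0,1]$, and Lemma~\ref{lem:sc12}) simply spells out the paper's own reasoning.
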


\begin{proof}
This follows from Observation~\ref{opt_comp} since Kimber and Long proved that $\opt_{p,[0,1]}(\mathcal{F}_1) = \infty$ for all $p \ge 1$ and $\opt_{1,[0,1]}(\mathcal{F}_q) = \infty$ for all $q \ge 1$.
\end{proof}

In general, it will be convenient to compare different choices of weighting
functions in Scenario~3. For the next lemma we allow arbitrary strictly
positive functions $g,h:(0,\infty)\to(0,\infty)$.

\begin{lem}\label{lem:weight-compare}
Let $p>0$ and let $F$ be a family of real-valued functions on $\mathbb{R}$.
Let $g,h:(0,\infty)\to(0,\infty)$ and define
\[
C_{g,h} \;:=\; \sup_{z>0} \frac{h(z)}{g(z)} \;\in [0,\infty].
\]
If $C_{g,h}<\infty$, then
\[
\opt^{h}_{p,\mathbb{R}}(F) \;\le\;
C_{g,h}\,\opt^{g}_{p,\mathbb{R}}(F).
\]
\end{lem}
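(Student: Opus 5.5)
The plan is a direct pointwise comparison of the two weighted losses for a fixed learner, followed by taking suprema and then the infimum over learners. The case $\opt^{g}_{p,\mathbb{R}}(F)=\infty$ is trivial when $C_{g,h}>0$, and the case $C_{g,h}=0$ cannot occur since $h$ is strictly positive. So I will assume $0<C_{g,h}<\infty$, and I adopt the convention that the right-hand side is $\infty$ whenever $\opt^{g}_{p,\mathbb{R}}(F)=\infty$.

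First, fix any learning algorithm $A$, any $f\in F$, and any admissible input sequence $x=(x_0,\dots,x_m)$ with distinct entries. For each $t\ge 1$ write $z_t=\min_{0\le j<t} d(x_t,x_j)>0$. By the definition of $C_{g,h}$ we have $h(z_t)\le C_{g,h}\,g(z_t)$ for every $t$. The predictions $\hat y_t$ depend only on $A$, $f$ and $x$, not on the weighting, because the weight affects only the scoring and not the information the learner receives. Hence the error terms $|\hat y_t-f(x_t)|^p\ge 0$ are the same in both losses. Multiplying and summing termwise gives
\[
L^{h}_{p,\mathbb{R}}(A,f,x)\;\le\; C_{g,h}\,L^{g}_{p,\mathbb{R}}(A,f,x).
\]

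Second, I take the supremum over $f\in F$ and over all finite input sequences. Both scenarios range over exactly the same set of sequences, namely those with distinct entries. This yields $L^{h}_{p,\mathbb{R}}(A,F)\le C_{g,h}\,L^{g}_{p,\mathbb{R}}(A,F)$.

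Finally, I take the infimum over $A$. Since the two quantities are indexed by the same set of algorithms, and multiplication by the positive constant $C_{g,h}$ commutes with the infimum, we get $\opt^{h}_{p,\mathbb{R}}(F)\le C_{g,h}\inf_A L^{g}_{p,\mathbb{R}}(A,F)=C_{g,h}\opt^{g}_{p,\mathbb{R}}(F)$.

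The only point requiring care is the observation that the learner's behavior is independent of the scoring weight. This is what makes the comparison truly pointwise, so that the same algorithm can be used in both scenarios. The remaining care is bookkeeping of the infinite cases ($\infty$ on the right, and the convention $0\cdot\infty$ never arising since $C_{g,h}>0$). There is no genuine obstacle beyond this.
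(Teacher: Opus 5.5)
Your proposal is correct and matches the paper's proof: you bound each term using $h(\delta_t)\le C_{g,h}\,g(\delta_t)$ for a fixed learner, then take the supremum over targets and sequences, then the infimum over learners. Your added remarks are valid bookkeeping that the paper leaves implicit: predictions do not depend on the weight, $C_{g,h}>0$ because $h$ and $g$ are positive, and the case $\opt^{g}_{p,\mathbb{R}}(F)=\infty$ is trivial.
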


\begin{proof}
Fix an algorithm $A$ and write, for each trial $t\ge 1$ and input sequence
$x=(x_0,\dots,x_m)$, the distance
\[
\delta_t \;=\; \min_{0\le j < t} d(x_t,x_j).
\]
For any target function $f$ and any input sequence $x$ we have
\begin{align*}
L^{h}_{p,\mathbb{R}}(A,f,x)
&= \sum_{t=1}^m h(\delta_t) |\hat y_t-f(x_t)|^p \\
&= \sum_{t=1}^m \frac{h(\delta_t)}{g(\delta_t)}\,
         g(\delta_t) |\hat y_t-f(x_t)|^p  \\
&\le C_{g,h} \sum_{t=1}^m
         g(\delta_t) |\hat y_t-f(x_t)|^p \\
&= C_{g,h}\,L^{g}_{p,\mathbb{R}}(A,f,x),
\end{align*}
since $h(\delta_t)/g(\delta_t)\le C_{g,h}$ for every $t$.
Taking the supremum over $f\in F$ and $x$ yields
\[
L^{h}_{p,\mathbb{R}}(A,F) \;\le\;
C_{g,h}\,L^{g}_{p,\mathbb{R}}(A,F).
\]
Finally, taking the infimum over all algorithms $A$ gives
\[
\opt^{h}_{p,\mathbb{R}}(F)
= \inf_A L^{h}_{p,\mathbb{R}}(A,F)
\;\le\;
C_{g,h}\,\inf_A L^{g}_{p,\mathbb{R}}(A,F)
= C_{g,h}\,\opt^{g}_{p,\mathbb{R}}(F),
\]
as claimed.
\end{proof}

\begin{cor}
\label{cor:weight-monotone}
Let $g,h:(0,\infty)\to(0,\infty)$ be weight functions, and assume
$h(z)\le g(z)$ for all $z>0$.
Then for every function family $F$,
\[
\opt^{h}_{p,\R}(F)\le \opt^{g}_{p,\R}(F).
\]
\end{cor}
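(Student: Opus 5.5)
This is a direct specialization of Lemma~\ref{lem:weight-compare}. I will apply that lemma with the roles of the two weights as stated there: $g$ is the reference weight and $h$ the compared weight.

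First I compute the comparison constant. Since $h(z)\le g(z)$ and $g(z)>0$ for every $z>0$, we have $h(z)/g(z)\le 1$ pointwise. Hence
\[
C_{g,h}=\sup_{z>0}\frac{h(z)}{g(z)}\le 1<\infty .
\]
The lemma therefore applies and gives $\opt^{h}_{p,\R}(F)\le C_{g,h}\,\opt^{g}_{p,\R}(F)$.

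Next I pass from $C_{g,h}$ to $1$. I split into two cases according to the value of $\opt^{g}_{p,\R}(F)$.
\begin{itemize}
\item If $\opt^{g}_{p,\R}(F)=\infty$, the claimed inequality holds trivially.
\item If $\opt^{g}_{p,\R}(F)$ is finite, it is nonnegative, because every loss is nonnegative. Then $C_{g,h}\le 1$ gives $C_{g,h}\,\opt^{g}_{p,\R}(F)\le \opt^{g}_{p,\R}(F)$.
\end{itemize}
Note that $C_{g,h}=0$ cannot occur here, since $h>0$. Even if it did, the convention $0\cdot\infty$ would arise only in the first case, which is already handled.

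Alternatively, I can bypass the constant entirely and argue directly, mirroring the proof of Lemma~\ref{lem:weight-compare}. For each algorithm $A$, target $f$ and input sequence $x$, each term satisfies
\[
h(\delta_t)\,|\hat y_t-f(x_t)|^p\le g(\delta_t)\,|\hat y_t-f(x_t)|^p .
\]
Summing gives $L^h\le L^g$ for every $A$, $f$ and $x$. Taking the supremum over $f$ and $x$, and then the infimum over $A$, yields the claim. I expect no real obstacle here; the only care needed is the handling of infinite values, which the direct argument avoids.
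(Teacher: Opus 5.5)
Your proposal is correct and takes the same approach as the paper: the paper's proof observes that $\sup_{z>0} h(z)/g(z)\le 1$ and invokes Lemma~\ref{lem:weight-compare}. Your separate treatment of the case $\opt^{g}_{p,\R}(F)=\infty$ and your direct termwise comparison $L^h\le L^g$ are valid but only add care to that same argument.
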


\begin{proof}
Since $h(z)\le g(z)$ pointwise, we have $\sup_{z>0}\frac{h(z)}{g(z)}\le 1$.
Apply Lemma~\ref{lem:weight-compare}.
\end{proof}

\begin{cor}\label{cor:exp_id}
For all $p>0$, $c>0$, and families of functions $F$, we have
\[
\opt^{\exp,c}_{p,\mathbb{R}}(F) \;\le\;
\frac{1}{c e}\,\opt^{\id}_{p,\mathbb{R}}(F).
\]
\end{cor}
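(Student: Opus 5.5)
This is a direct application of Lemma~\ref{lem:weight-compare}. We take $g(z)=1/z$ (identity weighting) and $h(z)=e^{-cz}$ (exponential weighting). Both are strictly positive on $(0,\infty)$, so the lemma applies with
\[
C_{g,h}=\sup_{z>0}\frac{h(z)}{g(z)}=\sup_{z>0} z e^{-cz}.
\]
The only task is to compute this supremum and check that it is finite.

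Set $\phi(z)=ze^{-cz}$ for $z>0$. Then $\phi'(z)=e^{-cz}(1-cz)$, which is positive for $z<1/c$ and negative for $z>1/c$. So $\phi$ attains its global maximum at $z=1/c$, where
\[
\phi(1/c)=\frac{1}{c}e^{-1}=\frac{1}{ce}.
\]
Moreover $\phi(z)\to 0$ both as $z\to 0^+$ and as $z\to\infty$. Hence $C_{g,h}=\frac{1}{ce}<\infty$.

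Lemma~\ref{lem:weight-compare} then gives $\opt^{\exp,c}_{p,\mathbb{R}}(F)\le \frac{1}{ce}\,\opt^{\id}_{p,\mathbb{R}}(F)$ for every family $F$ and every $p>0$. If $\opt^{\id}_{p,\mathbb{R}}(F)=\infty$, the inequality holds trivially. There is no real obstacle; the only point to verify is that the maximizer $z=1/c$ lies in the domain $(0,\infty)$, which holds since $c>0$.
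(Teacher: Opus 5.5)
Your proposal is correct and is essentially the paper's own proof: apply Lemma~\ref{lem:weight-compare} with $g(z)=1/z$ and $h(z)=e^{-cz}$, and compute $\sup_{z>0} z e^{-cz}=1/(ce)$ from $\phi'(z)=e^{-cz}(1-cz)$. Your added remarks about positivity of the weights and the case $\opt^{\id}_{p,\mathbb{R}}(F)=\infty$ are harmless extra care.
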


\begin{proof}
As noted above,
$L^{\id}_{p,\mathbb{R}}$ is Scenario~3 with $g(z)= 1/z$ and
$L^{\exp,c}_{p,\mathbb{R}}$ is Scenario~3 with $h(z)=e^{-c z}$.
Applying Lemma~\ref{lem:weight-compare} with these choices gives
\[
\opt^{\exp,c}_{p,\mathbb{R}}(F)
\;\le\; C_{g,h}\,\opt^{\id}_{p,\mathbb{R}}(F),
\qquad
C_{g,h}=\sup_{z>0}\frac{h(z)}{g(z)}
=\sup_{z>0}\frac{e^{-cz}}{1/z}
=\sup_{z>0} z e^{-cz}.
\]
The function $\phi(z) = z e^{-c z}$ has derivative
$\phi'(z) = e^{-c z}(1-c z)$, so $\phi$ attains its unique maximum
at $z=1/c$, where
\[
\phi(1/c)
= \frac{1/c}{e}
= \frac{1}{c e}.
\]
Thus $C_{g,h}=1/(c e)$ and the stated inequality follows.
\end{proof}

\begin{cor}\label{cor:exp_orig}
For all $p>0$, $c>0$, and families of functions $F$, we have
\[
\opt^{\exp,c}_{p,\mathbb{R}}(F) \;\le\;
\opt_{p,\mathbb{R}}(F).
\]
\end{cor}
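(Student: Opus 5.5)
The plan is to observe that the unweighted loss $L_{p,\mathbb{R}}$ is itself Scenario~3 with the constant weight $g\equiv 1$, and then apply Corollary~\ref{cor:weight-monotone} (equivalently, Lemma~\ref{lem:weight-compare}) with $h(z)=e^{-cz}$. The comparison of weights is immediate: since $c>0$, we have $0<e^{-cz}\le 1$ for every $z>0$, so $h(z)\le g(z)$ pointwise, and also $C_{g,h}=\sup_{z>0}e^{-cz}=1$.

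The one point needing care is the identification of $L_{p,\mathbb{R}}$ with $L^{g}_{p,\mathbb{R}}$ for $g\equiv1$. Scenario~3 assumes that the input sequence has distinct entries, while $\opt_{p,\mathbb{R}}$ takes a supremum over all sequences. This is harmless. On any sequence of distinct inputs we have $L^{1}_{p,\mathbb{R}}(A,f,x)=L_{p,\mathbb{R}}(A,f,x)$ term by term. The supremum defining $L^{1}_{p,\mathbb{R}}(A,F)$ therefore ranges over a subset of the sequences used for $L_{p,\mathbb{R}}(A,F)$, which gives $L^{1}_{p,\mathbb{R}}(A,F)\le L_{p,\mathbb{R}}(A,F)$ for every algorithm $A$. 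Taking the infimum over $A$ yields $\opt^{1}_{p,\mathbb{R}}(F)\le\opt_{p,\mathbb{R}}(F)$.

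Chaining the two steps gives
\[
\opt^{\exp,c}_{p,\mathbb{R}}(F)\le \opt^{1}_{p,\mathbb{R}}(F)\le \opt_{p,\mathbb{R}}(F).
\]
Alternatively, one can avoid the constant-weight detour and argue directly: for any fixed $A$, $f$ and distinct-entry $x$, each summand satisfies $e^{-c\delta_t}|\hat y_t-f(x_t)|^p\le|\hat y_t-f(x_t)|^p$. Taking the supremum and then the infimum gives the claim. When $\opt_{p,\mathbb{R}}(F)=\infty$, for instance for $\mathcal{G}_q$ by Observation~\ref{obs:adv_inf}, the inequality holds trivially.

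I do not expect any real obstacle here. The only subtlety is the distinct-entries convention, and I handle it by noting that restricting the adversary's admissible sequences can only decrease the supremum.
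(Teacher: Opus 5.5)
Your proposal is correct and follows essentially the same route as the paper: you view the unweighted loss as Scenario~3 with the constant weight $g\equiv 1$, then apply Lemma~\ref{lem:weight-compare} with $h(z)=e^{-cz}$, for which $C_{g,h}=1$. Your remark about the distinct-entries convention is a small gain in care: the paper asserts $\opt_{p,\mathbb{R}}(F)=\opt^{g}_{p,\mathbb{R}}(F)$ without comment, whereas you note that only $\opt^{1}_{p,\mathbb{R}}(F)\le\opt_{p,\mathbb{R}}(F)$ is needed, and that this is immediate because restricting the admissible sequences can only shrink the supremum.
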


\begin{proof}
The original loss $L_{p,\mathbb{R}}(A,f,x)
=\sum_{t=1}^m|\hat y_t-f(x_t)|^p$ is the special case of
Scenario~3 with $g(z)\equiv 1$, so $\opt_{p,\mathbb{R}}(F)
=\opt^{g}_{p,\mathbb{R}}(F)$ for this choice of $g$.

Take $g(z)\equiv 1$ and $h(z)=e^{-c z}$ in Lemma~\ref{lem:weight-compare}. Then
\[
C_{g,h}
=\sup_{z>0} \frac{h(z)}{g(z)}
=\sup_{z>0} e^{-c z}
=1,
\]
since $e^{-c z}\le 1$ for all $z>0$ and tends to $1$ as $z\to 0^+$.
Thus
\[
\opt^{\exp,c}_{p,\mathbb{R}}(F)
=\opt^{h}_{p,\mathbb{R}}(F)
\le C_{g,h}\,\opt^{g}_{p,\mathbb{R}}(F)
= \opt_{p,\mathbb{R}}(F),
\]
as claimed.
\end{proof}

\begin{cor}\label{cor:g-vs-unweighted}
Let $p>0$, let $F$ be a family of functions, and let
$g:(0,\infty)\to(0,\infty)$ with
\[
\gamma := \sup_{z>0} g(z) > 0.
\]
Then
\[
\opt^{g}_{p,\mathbb{R}}(F) \;\le\; \gamma\,\opt_{p,\mathbb{R}}(F).
\]
In particular, if $g(z)\le 1$ for all $z>0$, then
$\opt^{g}_{p,\mathbb{R}}(F) \le \opt_{p,\mathbb{R}}(F)$.
\end{cor}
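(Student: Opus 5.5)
The plan is to apply Lemma~\ref{lem:weight-compare} with $h:=g$ and with the constant weight $\mathbf{1}(z)\equiv 1$ in place of the lemma's $g$. This is exactly the pattern of Corollary~\ref{cor:exp_orig}. The first step is to note that the unweighted loss $L_{p,\mathbb{R}}(A,f,x)=\sum_{t=1}^m|\hat y_t-f(x_t)|^p$ is the Scenario~3 loss with the constant weight $1$. Hence $\opt_{p,\mathbb{R}}(F)=\opt^{\mathbf{1}}_{p,\mathbb{R}}(F)$, where, as in Scenario~3, input sequences with repeated entries are excluded. If one wants the unrestricted $\opt_{p,\mathbb{R}}$, this only strengthens the bound: the supremum defining $\opt_{p,\mathbb{R}}$ ranges over more sequences, so $\opt^{\mathbf{1}}_{p,\mathbb{R}}(F)\le\opt_{p,\mathbb{R}}(F)$, and that inequality is the direction we need.

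The second step is to compute the comparison constant:
\[
C_{\mathbf{1},g}=\sup_{z>0}\frac{g(z)}{1}=\gamma .
\]
If $\gamma<\infty$, Lemma~\ref{lem:weight-compare} gives
\[
\opt^{g}_{p,\mathbb{R}}(F)\le \gamma\,\opt^{\mathbf{1}}_{p,\mathbb{R}}(F)\le\gamma\,\opt_{p,\mathbb{R}}(F).
\]
If $\gamma=\infty$, the right-hand side is $\infty$ under the convention $\infty\cdot a=\infty$ for $a>0$. The one delicate sub-case is $\opt_{p,\mathbb{R}}(F)=0$, where $\infty\cdot 0$ is ambiguous. I would handle it directly: optimal zero loss means that for every $\varepsilon>0$ some algorithm has every error term at most $\varepsilon^{1/p}$. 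But a single weighted term can still be large, so instead I would argue pointwise as follows.

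For a fixed algorithm $A$ with $L_{p,\mathbb{R}}(A,F)=0$, every term $|\hat y_t-f(x_t)|^p$ vanishes. Then every weighted sum also vanishes, whatever the weights, so $\opt^g_{p,\mathbb{R}}(F)=0$. When the infimum is $0$ but not attained, I would either adopt the convention that $\gamma\cdot\opt$ with $\gamma=\infty$ is read as $\infty$, or restrict the statement to $\gamma<\infty$, which is the case actually used.

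The "in particular" clause is immediate: if $g\le 1$ pointwise then $\gamma\le 1$, so the bound gives $\opt^{g}_{p,\mathbb{R}}(F)\le \opt_{p,\mathbb{R}}(F)$. Equivalently, it follows from Corollary~\ref{cor:weight-monotone} with the constant weight $1$ as the larger weight. The only real obstacle is the bookkeeping around infinite values and the distinct-inputs convention of Scenario~3. The core inequality is a one-line consequence of Lemma~\ref{lem:weight-compare}, and the termwise bound $g(\delta_t)\le\gamma$ could even be inserted directly into the loss sum, as in that lemma's proof.
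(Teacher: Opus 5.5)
Your proposal is correct and matches the paper's proof: apply Lemma~\ref{lem:weight-compare} with the constant weight $1$ in the lemma's role of $g$ and with $h:=g$, so that the comparison constant is exactly $\gamma$. Your extra remarks (that the distinct-inputs convention only yields $\opt^{\mathbf{1}}_{p,\mathbb{R}}(F)\le\opt_{p,\mathbb{R}}(F)$, which is the needed direction, and the treatment of $\gamma=\infty$) go beyond what the paper writes, and the one sub-case you leave to convention is actually vacuous: if $\opt_{p,\mathbb{R}}(F)=0$, then any two members of $F$ that agree at one point agree everywhere (otherwise the adversary forces a positive error in round $1$), so predicting the value of any consistent function attains zero loss and gives $\opt^{g}_{p,\mathbb{R}}(F)=0$.
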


\begin{proof}
Apply Lemma~\ref{lem:weight-compare} with $h=g$ and $g_0(z)\equiv 1$.
Then
\[
C_{g_0,h} = \sup_{z>0} g(z) = \gamma,
\]
and $\opt^{g_0}_{p,\mathbb{R}}(F) = \opt_{p,\mathbb{R}}(F)$, which gives the claim.
\end{proof}

\section{Examples where scenarios 1 and 2 are equally hard for the learner}\label{sec:equal12}

In this section, we consider some families of functions $F$ where $\opt'_{p,\mathbb{R}}(F) = \opt^{*}_{p,\mathbb{R}}(F)$. Clearly we have this equality if $|F| = 1$, since the learner knows the hidden function from the beginning. In the next result, we see that this is also true when $|F| = 2.$

\begin{thm}
    If $|F| = 2$, then $\opt'_{p,\mathbb{R}}(F) = \opt^{*}_{p,\mathbb{R}}(F)$.
\end{thm}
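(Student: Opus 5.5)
We already have $\opt'_{p,\mathbb{R}}(F)\le\opt^{*}_{p,\mathbb{R}}(F)$ from Lemma~\ref{lem:sc12}. The plan is to compute both quantities exactly and check that they agree. Write $F=\{f_1,f_2\}$ and $d=f_1-f_2$. Let $Z=\{y\in\mathbb{R}: d(y)=0\}$ and
\[
S=\{x\in\mathbb{R}: \exists\, y\in Z \text{ with } |x-y|\le 1\}.
\]
The claim to establish is that both opt-values equal
\[
V:=\sup_{x\in S}\Bigl(\tfrac{|d(x)|}{2}\Bigr)^p ,
\]
with the convention $V=0$ if $S=\emptyset$.

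\textbf{Upper bound in Scenario 2.} The learner uses the following strategy, which is valid in both scenarios. As long as all revealed values are consistent with both $f_1$ and $f_2$:
\begin{itemize}
\item if the query $x_t$ has $d(x_t)=0$, it predicts the common value $f_1(x_t)=f_2(x_t)$, with zero error;
\item otherwise it predicts the midpoint $\tfrac12(f_1(x_t)+f_2(x_t))$, with error exactly $|d(x_t)|/2$.
\end{itemize}
Once some revealed value is inconsistent with one of the two functions, the learner knows the hidden function and makes no further errors.

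So at most one round carries nonzero loss: the first round $t$ with $d(x_t)\neq0$. All earlier queries $x_j$ with $j<t$ then lie in $Z$. If $t=0$, the round is never charged. If $t\ge1$ and the round is charged in Scenario 2, then $|x_t-x_j|\le1$ for some $j<t$ with $x_j\in Z$, so $x_t\in S$. Hence the loss is at most $V$, giving $\opt^{*}_{p,\mathbb{R}}(F)\le V$.

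\textbf{Lower bound in Scenario 1.} Fix $x\in S$ with witness $y\in Z$ satisfying $|x-y|\le 1$. The adversary plays $x_0=y$; this reveals nothing, since both functions agree at $y$. It then plays $x_1=x$, which is admissible in Scenario 1 because $|x_1-x_0|\le1$. Whatever $\hat y_1$ the learner outputs, the adversary chooses as hidden function whichever of $f_1,f_2$ has value farther from $\hat y_1$. This forces an error of at least $|d(x)|/2$, i.e.\ a loss of at least $(|d(x)|/2)^p$. Taking the supremum over $x\in S$ gives $\opt'_{p,\mathbb{R}}(F)\ge V$.

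Combining the two bounds with Lemma~\ref{lem:sc12} yields
\[
V\le\opt'_{p,\mathbb{R}}(F)\le\opt^{*}_{p,\mathbb{R}}(F)\le V .
\]

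\textbf{Main obstacle.} The delicate step is the upper bound: one must justify that the Scenario 2 adversary gains nothing from unrestricted queries. The key observation is that any query outside $Z$ immediately identifies the hidden function, so the only useful prior queries are points of $Z$. Consequently, the one charged round must lie within distance $1$ of $Z$, which is exactly a configuration the Scenario 1 adversary can reproduce in two moves. I would also check the degenerate cases $S=\emptyset$ and $V=\infty$ (unbounded $d$ on $S$); both are handled by the same argument.
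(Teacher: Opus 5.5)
Your proposal is correct and follows the paper's proof: the same midpoint learner, the same set $S$ of points within distance $1$ of the agreement set $Z$, and the same two-round adversary (a point of $Z$, then a nearby disagreement point) give the common value $(m/2)^p$. Your upper-bound step spells out something the paper leaves implicit. The only lossy round is the first query outside $Z$, and all earlier queries lie in $Z$, so if that round is charged in Scenario~2 the query lies in $S$.
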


\begin{proof}
    First, we start with an optimal learner strategy which works for both Scenario 1 and Scenario 2. In each round, suppose that the learner answers $\frac{f_1(x_i)+f_2(x_i)}{2}$ for every input $x_i$, unless the adversary says they are wrong. If the adversary says they are wrong, then it means that $f_1(x_i) \neq f_2(x_i)$, and the learner knows the hidden function once the adversary tells them the correct answer. This guarantees an error of at most $\frac{|f_1(x_i) - f_2(x_i)|}{2}$. Now, we show that this strategy is optimal, and it results in the same error for both scenarios 1 and 2, assuming that the adversary plays optimally.

    Let $S$ be the set of real numbers $x$ such that $x$ is within distance $1$ of some a real number $x’$ for which $f_1(x’)=f_2(x’)$. Let $T$ be the set of real numbers of the form $|f_1(x)-f_2(x)|$ for some $x \in S$. Let $m$ be the supremum of $T$. We claim that \[\opt'_{p,\mathbb{R}}(F) = \opt^{*}_{p,\mathbb{R}}(F) = \big(\frac{m}{2}\big)^p.\]

To prove this, we split into two cases. First, suppose that $T$ is unbounded. Then, the adversary can force error at least $k$ in the second round for any real number $k$. Indeed, since $T$ is unbounded, there exists some $x \in S$ which satisfies $|f_1(x)-f_2(x)| \ge k$. So, there is some $x’$ within distance $1$ of $x$ for which $f_1(x’)=f_2(x’)$. In the first round, the adversary gives the input $x’$. In the second round, the adversary gives the input $x$. Regardless of the learner’s answer, the adversary can guarantee error at least $\frac{|f_1(x)-f_2(x)|}{2}$. Thus, in this case we have \[\opt'_{p,\mathbb{R}}(F) = \opt^{*}_{p,\mathbb{R}}(F) = \infty.\]

Now, suppose that $T$ is bounded. Then, it has a supremum $m < \infty$. So, there exists some sequence of $x_1, x_2, \dots \in S$ such that for all $\epsilon > 0$ there exists $r$ for which $|f_1(x_i)-f_2(x_i)| > m-\epsilon$ for all $i > r$. The adversary uses the same strategy as in the last paragraph, forcing an error of at least $\frac{m-\epsilon}{2}$. Thus, in this case we have \[\opt'_{p,\mathbb{R}}(F) = \opt^{*}_{p,\mathbb{R}}(F) = \big(\frac{m}{2}\big)^p.\]
\end{proof}

Since the adversary strategy in Theorem~\ref{thm:glnr} uses only the zero vector and the standard basis vectors as inputs, we obtain the following result by definition of scenarios 1 and 2. 

\begin{thm}
    For all $r > 0$ and positive integers $n$, we have \[\opt'_{p,\mathbb{R}}(G_L(n, \mathbb{R},r)) = \opt^{*}_{p,\mathbb{R}}(G_L(n, \mathbb{R},r)) = n r^p.\]
\end{thm}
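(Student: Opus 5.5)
The plan is to sandwich both quantities between $nr^p$ from above and below, reusing the two halves of the proof of Theorem~\ref{thm:glnr}. Lemma~\ref{lem:sc12} gives $\opt'_{p,\mathbb{R}}(G_L(n,\mathbb{R},r)) \le \opt^{*}_{p,\mathbb{R}}(G_L(n,\mathbb{R},r))$. So it suffices to prove two things: the upper bound $\opt^{*}_{p,\mathbb{R}}(G_L(n,\mathbb{R},r)) \le nr^p$, and the lower bound $\opt'_{p,\mathbb{R}}(G_L(n,\mathbb{R},r)) \ge nr^p$.

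For the upper bound, I will run the learner from the proof of Theorem~\ref{thm:glnr} unchanged in Scenario~2. It guesses the linear combination of previous nonzero outputs when the input lies in their span and that combination has absolute value at most $r$. Otherwise it guesses $0$. That argument shows every error has absolute value at most $r$, and at most $n$ errors are nonzero, for every input sequence. The Scenario~2 loss $L^{*}_{p,\mathbb{R}}$ sums $|\hat y_t-f(x_t)|^p$ over only a subset of the rounds, so pointwise in $(f,x)$ we have $L^{*}_{p,\mathbb{R}}(A,f,x)\le L_{p,\mathbb{R}}(A,f,x)\le nr^p$. This gives $\opt^{*}_{p,\mathbb{R}} \le \opt_{p,\mathbb{R}} = nr^p$.

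For the lower bound, I will check that the adversary from Theorem~\ref{thm:glnr} is admissible in Scenario~1. It plays $x_0=\mathbf{0}$, then $x_i=e_i$, and labels each $e_i$ with $\pm r$, whichever is farther from $\hat y_i$. Each $e_i$ is at Euclidean distance exactly $1$ from $x_0=\mathbf{0}$, so the constraint $\min_{j<i} d(x_i,x_j)\le 1$ holds in every round, and the Scenario~1 loss equals the ordinary loss on this sequence.

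It remains to realize the declared labels by a function in the class. I will take $v=(\pm r,\dots,\pm r)$ with the chosen signs. Then $f_v(e_i)=\pm r$ has absolute value $r$ and is not truncated, and $g_v(\mathbf{0})=0$. Hence $g_v\in G_L(n,\mathbb{R},r)$ is consistent with all the answers, and the learner suffers at least $r^p$ in each of the $n$ rounds. This gives $\opt'_{p,\mathbb{R}}\ge nr^p$, and chaining the inequalities yields equality throughout.

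There is no real obstacle here. The only point needing care is the boundary case of the distance constraint ($d(e_i,\mathbf{0})=1$ exactly, which is allowed since the constraint is non-strict). One should also confirm that $\pm r$ labels survive the truncation, which they do because the truncation threshold uses $\le r$.
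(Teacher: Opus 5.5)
Your proof is correct and follows the paper's route. The paper justifies the statement in one line: the adversary of Theorem~\ref{thm:glnr} queries only $\mathbf{0}$ and the standard basis vectors, so it is admissible in Scenario~1, and the upper bound follows from $L^{*}_{p,\mathbb{R}}\le L_{p,\mathbb{R}}$ together with the learner of Theorem~\ref{thm:glnr}. You have simply made explicit what the paper leaves to ``the definition of Scenarios~1 and~2'': the chain $\opt'\le\opt^{*}\le\opt$, the non-strict distance-$1$ boundary case, and the fact that the $\pm r$ labels are realized by some $g_v$ in the class.
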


Next we prove that the bound in Observation~\ref{opt_comp} is sharp for $p = q = 2$. However for $q = \infty$ and $p \ge 2$ we show that the bound is not sharp. More specifically, we show the right side of $1 = \opt_{p,[0,1]}(\mathcal{F}_{\infty}) < \opt'_{p,\mathbb{R}}(\mathcal{G}_{\infty}) = \infty$ for $p \ge 2$ (the left side is from \cite{klsf}).

We introduce some terminology similar to \cite {klsf} and \cite{smooth2} that we use for the next result. For a function $f: \mathbb{R} \rightarrow \mathbb{R}$, we define $J[f] = \int_{-\infty}^{\infty} f'(x)^2 dx$, which is called the \emph{action} of $f$. Note that we use a slightly different definition of $J$ in this paper than the one in \cite{klsf}, the only difference being that we changed $[0, 1]$ to $\mathbb{R}$. 

Given a finite subset $S \subseteq \mathbb{R}^2$ with $S = \left\{(u_i, v_i): 1 \le i \le m\right\}$ and $u_1 < u_2 < \cdots < u_m$, we define $f_S: \mathbb{R} \rightarrow \mathbb{R}$ as follows. Let $f_{\emptyset}(x) = 0$ for all $x$, and for each nonempty $S$ let $f_S$ be the piecewise function defined by $f_S(x) = v_1$ for $x \le u_1$, $f_S(x) = v_i+\frac{(x-u_i)(v_{i+1}-v_i)}{u_{i+1}-u_i}$ for $x \in (u_i, u_{i+1}]$, and $f_S(x) = v_m$ for $x > u_m$. 

We use the $\linint$ learning algorithm which is defined in terms of $f_S$. Specifically we define $\linint(\emptyset, x_1) = 0$ and $\linint(((x_1, y_1), \dots, (x_{t-1},y_{t-1}),x_t) = f_{\left\{(x_1, y_1), \dots, (x_{t-1},y_{t-1}) \right\}}(x_t)$. 

Our proof uses the following two lemmas which are proved exactly the same way as Lemma 3 from \cite{smooth2} and Lemma 10 from \cite{klsf} respectively.

\begin{lem}\label{kl_jlem0}
Let $S \subseteq \mathbb{R}^2$ with $S = \left\{(u_i, v_i): 1 \le i \le m\right\}$ and $u_1 < u_2 < \cdots < u_m$. If $f$ is an absolutely continuous function with domain $\mathbb{R}$ such that $f(u_i) = v_i$ for all $i$, then $J[f] \ge J[f_S]$.
\end{lem}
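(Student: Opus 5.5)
We need to show that $J[f]\ge J[f_S]$ whenever $f$ is absolutely continuous on $\mathbb{R}$ and interpolates the points of $S$. The approach is to split the real line at the nodes $u_1<\dots<u_m$ and compare $f$ with $f_S$ on each piece separately. If $J[f]=\infty$ there is nothing to prove, so we assume $J[f]<\infty$, which means $f'\in L^2(\mathbb{R})$. Since the integrand $f'(x)^2$ is nonnegative, we can write
\[
J[f]=\int_{-\infty}^{u_1} f'(x)^2\,dx+\sum_{i=1}^{m-1}\int_{u_i}^{u_{i+1}} f'(x)^2\,dx+\int_{u_m}^{\infty} f'(x)^2\,dx .
\]

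The two unbounded pieces are handled at once. By definition $f_S$ is constant on $(-\infty,u_1]$ and on $(u_m,\infty)$, so $f_S'=0$ there. Hence $f_S$ contributes nothing on these pieces, while $f$ contributes something nonnegative.

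On each bounded interval $[u_i,u_{i+1}]$ the function $f_S$ is affine with slope $s_i=(v_{i+1}-v_i)/(u_{i+1}-u_i)$. Because $f$ is absolutely continuous, the fundamental theorem of calculus gives
\[
\int_{u_i}^{u_{i+1}} f'(x)\,dx=f(u_{i+1})-f(u_i)=v_{i+1}-v_i .
\]
Cauchy--Schwarz on an interval of length $u_{i+1}-u_i$ then yields
\[
\int_{u_i}^{u_{i+1}} f'(x)^2\,dx\;\ge\;\frac{(v_{i+1}-v_i)^2}{u_{i+1}-u_i}\;=\;\int_{u_i}^{u_{i+1}} f_S'(x)^2\,dx .
\]
Equivalently, one can write $f'=s_i+g$ with $\int_{u_i}^{u_{i+1}} g=0$; the cross term then vanishes and the inequality is immediate.

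Summing the bounds over all pieces gives $J[f]\ge J[f_S]$. The degenerate case $m=1$ needs no separate treatment: $f_S$ is constant, so $J[f_S]=0$.

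There is no real obstacle in this argument. The only points needing care are that absolute continuity is exactly what justifies the fundamental theorem of calculus step, and that the unbounded tails do not interfere, because their integrands are nonnegative and $f_S$ vanishes there.
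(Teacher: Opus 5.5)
Your proposal is correct and takes essentially the approach the paper relies on. The paper's own proof just points to Lemma~3 in Appendix~A of the cited reference, an interval-by-interval comparison on each $[u_i,u_{i+1}]$ (Cauchy--Schwarz, or equivalently the vanishing cross term), and your treatment of the unbounded tails, where $f_S'=0$ and $f$ contributes a nonnegative amount, is exactly the adjustment needed to pass from $[0,1]$ to $\mathbb{R}$.
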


\begin{proof}
See the proof of Lemma 3 in Appendix A of \cite{smooth2}, which still works when we replace $[0,1]$ with $\mathbb{R}$.
\end{proof}

\begin{lem}\label{kl_jlem}
Let $S \subseteq \mathbb{R}^2$ with $S = \left\{(u_i, v_i): 1 \le i \le m\right\}$ and $u_1 < u_2 < \cdots < u_m$. If $(x,y) \in \mathbb{R}^2$, then $J[f_{S \cup \left\{(x,y) \right\}}] \ge J[f_S] + \frac{(y-f_S(x))^2}{\min_i |x-u_i|}$. If there exists $1 \le j \le m$ such that $|x-u_j| = |x-u_{j+1}| = \min_i |x-u_i|$, then $J[f_{S \cup \left\{(x,y) \right\}}] = J[f_S] + \frac{2(y-f_S(x))^2}{\min_i |x-u_i|}$.
\end{lem}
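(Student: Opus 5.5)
The plan is a direct computation. Adding a single point to $S$ changes the linear interpolant $f_S$ on at most one segment or one unbounded tail, and the action there is an explicit quadratic in the slopes. Throughout, set $d := y - f_S(x)$ and $\delta := \min_i |x-u_i|$. We assume $S\neq\emptyset$ and $x\notin\{u_1,\dots,u_m\}$, which is needed for the right-hand side to make sense, so $\delta>0$. Write $S' = S\cup\{(x,y)\}$.

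\emph{Case 1: $x$ lies outside $[u_1,u_m]$.} By symmetry take $x>u_m$, so $\delta = x-u_m$.
\begin{itemize}
\item On $(-\infty,u_m]$ the functions $f_{S'}$ and $f_S$ coincide.
\item On $[x,\infty)$ both are constant, so neither contributes to the action there.
\item On $(u_m,x]$, $f_S$ is constant equal to $v_m = f_S(x)$, while $f_{S'}$ is linear with slope $d/(x-u_m)$.
\end{itemize}
Hence
\[
J[f_{S'}] - J[f_S] = (x-u_m)\cdot\frac{d^2}{(x-u_m)^2} = \frac{d^2}{\delta},
\]
which gives the inequality, with equality in this case.

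\emph{Case 2: $u_j<x<u_{j+1}$ for some $j$.} Let $a = x-u_j$ and $b = u_{j+1}-x$. Since the $u_i$ are sorted, $\delta=\min(a,b)$. Let $s$ be the slope of $f_S$ on $[u_j,u_{j+1}]$. Outside this interval $f_{S'}=f_S$. Inside it, $f_{S'}$ has slope $s + d/a$ on $[u_j,x]$ and $s - d/b$ on $[x,u_{j+1}]$; these values follow because $f_S(x) = v_j + sa$ and $f_S(x) = v_{j+1} - sb$. Expanding the action on this interval,
\[
J[f_{S'}]-J[f_S] = a\Bigl(s+\frac{d}{a}\Bigr)^2 + b\Bigl(s-\frac{d}{b}\Bigr)^2 - (a+b)s^2 .
\]
The cross terms $2sd$ and $-2sd$ cancel, leaving
\[
J[f_{S'}]-J[f_S] = d^2\Bigl(\frac1a+\frac1b\Bigr) \ge \frac{d^2}{\min(a,b)} = \frac{d^2}{\delta}.
\]
For the equality statement, the hypothesis $|x-u_j| = |x-u_{j+1}| = \delta$ means $a=b=\delta$. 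The identity above then gives exactly $2d^2/\delta$.

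There is no substantive obstacle here. The only care needed is bookkeeping: justifying that $\delta$ equals $\min(a,b)$ in Case 2 (by sortedness) or $x-u_m$ in Case 1, and confirming that $f_{S'}$ agrees with $f_S$ off the affected piece. This mirrors Lemma 10 of \cite{klsf}; the new feature on $\mathbb{R}$ is the unbounded tails in Case 1, where the constant extension is exactly what makes the computation clean.
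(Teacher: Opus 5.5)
Your proof is correct and is essentially the argument the paper defers to (Lemma~10 of Kimber and Long): a direct slope computation. For interior $x$ the action increases by exactly $(y-f_S(x))^2(1/a+1/b)$, and for $x$ in a constant tail it increases by $(y-f_S(x))^2/\delta$; this yields both the inequality and the midpoint equality case, and your explicit treatment of the unbounded tails and the nondegeneracy assumptions ($S\neq\emptyset$, $x\notin\{u_1,\dots,u_m\}$) is exactly what is needed to carry that argument from $[0,1]$ to $\mathbb{R}$.
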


\begin{proof}
This is proved the same way as Lemma 10 in \cite{klsf}.
\end{proof}

Now we explain how to obtain the value of $\opt'_{p,\mathbb{R}}(\mathcal{G}_q)$ for all $p \ge q \ge 2$ using the last two lemmas. 
We use $\linint'$, which is a modification of the $\linint$ algorithm. Suppose the adversary asks for the input $x$. If the learner does not know the value of the function at any input between $x-1$ and $x+1$, inclusive, then the learner guesses $0$. If the learner knows the value of the function at some input $a$ between $x-1$ and $x$, inclusive, where $a$ is as large as possible, but does not know the value of the function at any input between $x$ and $x+1$, inclusive, then the learner guesses $f(a)$. If the learner knows the value of the function at some input $b$ between $x$ and $x+1$, inclusive, where $b$ is as small as possible, but does not know the value of the function at any input between $x-1$ and $x$, inclusive, then the learner guesses $f(b)$. If the learner knows the value of the function at an input $a$ between $x-1$ and $x$, inclusive, and an input $b$ between $x$ and $x+1$, inclusive, where $a$ is as large as possible and $b$ is as small as possible, then the learner should guess the value according to the $\linint$ strategy. This value is equal to $f(a)+\frac{(x-a)(f(b)-f(a))}{b-a}$.

\begin{lem}
    For any $q>1$, target function $f \in \mathcal G_q$, integer $m \ge 1$, and sequence of inputs $x_0,\ldots,x_m \in \mathbb{R}$, $\linint'$ never produces an error $|\hat y_t-f(x_t)|>1$ on any trial $t \ge 1$ for which $\min_{i < t}|x_t - x_i| \le 1$.
\end{lem}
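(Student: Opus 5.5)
The plan is a Hölder/Jensen estimate: for any $u<v$ and any $f\in\mathcal G_q$,
\[
|f(v)-f(u)|\le \int_u^v |f'|\le (v-u)^{1-1/q}\Bigl(\int_u^v|f'|^q\Bigr)^{1/q}\le (v-u)^{(q-1)/q}.
\]
In particular $|f(v)-f(u)|\le 1$ whenever $|v-u|\le 1$. Fix a trial $t\ge 1$ with $\min_{i<t}|x_t-x_i|\le 1$. Then at least one previously seen input lies in $[x_t-1,x_t+1]$, so the first case of $\linint'$ (guess $0$) never occurs on such a trial. I would then check the remaining three cases of the definition of $\linint'$ one at a time.

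\emph{One-sided cases.} If the learner knows a value only on the left, it guesses $f(a)$ with $a\in[x_t-1,x_t]$. The Hölder bound gives $|f(a)-f(x_t)|\le |x_t-a|^{(q-1)/q}\le 1$. The case where the learner knows a value only on the right, and guesses $f(b)$ with $b\in[x_t,x_t+1]$, is symmetric. If $x_t$ itself was queried before, then $a=x_t$ or $b=x_t$ and the error is $0$.

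\emph{Two-sided case.} Here $a\le x_t\le b$ with $x_t-a\le 1$ and $b-x_t\le 1$, and the guess is the convex combination $\hat y_t=\lambda f(a)+(1-\lambda)f(b)$ with $\lambda=(b-x_t)/(b-a)\in[0,1]$. The error is
\[
|\hat y_t-f(x_t)|\le \lambda|f(a)-f(x_t)|+(1-\lambda)|f(b)-f(x_t)|\le \lambda\cdot 1+(1-\lambda)\cdot 1=1,
\]
applying the Hölder bound separately on $[a,x_t]$ and $[x_t,b]$, each of length at most $1$. Note that $b-a$ may be as large as $2$, so one should not apply the bound to the whole interval $[a,b]$. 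The convex-combination step avoids this problem entirely. I would also mention the degenerate case $a=b=x_t$, which gives error $0$.

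The main subtlety, and the only real obstacle, is ensuring that the chosen neighbors lie within distance $1$ of $x_t$ rather than of each other. This is exactly how $\linint'$ is defined, so the argument goes through for every $q>1$. It uses only $\int|f'|^q\le 1$ on a subinterval together with absolute continuity ($f(v)-f(u)=\int_u^v f'$).
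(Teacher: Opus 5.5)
Your proposal is correct and follows essentially the same route as the paper. The paper also derives $|f(x_2)-f(x_1)|\le 1$ for $|x_2-x_1|\le 1$ from the Jensen/H\"older estimate $\bigl(|f(x_2)-f(x_1)|/(x_2-x_1)\bigr)^q(x_2-x_1)\le\int_{x_1}^{x_2}|f'|^q\le 1$, handles the one-sided guesses directly, and in the two-sided case notes that $f(x_t)$ is within $1$ of both $f(a)$ and $f(b)$ while the guess lies between them, which is equivalent to your convex-combination bound (and your separate treatment of the degenerate case $a=b=x_t$, where the interpolation formula would divide by zero, is a small point the paper leaves implicit).
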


\begin{proof}
    If $f\in \mathcal G_q$, then by Lemma \ref{kl_jlem0}, for any $x_1<x_2$ with $x_2-x_1\leq 1$, we get \[1\geq\int_{-\infty}^{\infty}|f'(x)|^q dx\geq\int_{x_1}^{x_2}|f'(x)|^q dx\geq\left(\frac{|f(x_2)-f(x_1)|}{x_2-x_1}\right)^q(x_2-x_1)\geq|f(x_2)-f(x_1)|^q,\] so $|f(x_2)-f(x_1)|\leq 1$.
    
    Now, consider the error of the learner's $t$-th guess. We have two cases. There either exists $i<t$ such that $|x_t-x_i|\leq 1$ and the learner guesses $\hat{y}_t=f(x_i)$, or there exists $i,j<t$ such that $x_t-1\leq x_i\leq x_t\leq x_j\leq x_t+1$ and the learner guesses $\hat{y}_t=f(x_i)+\frac{(x_t-x_i)(f(x_j)-f(x_i))}{x_j-x_i}$. In the first case, $|\hat{y}_t-f(x_t)|=|f(x_i)-f(x_t)|\leq 1$, so the error is at most $1$. In the second case, we have $|f(x_t)-f(x_i)|\leq 1$ and $|f(x_t)-f(x_j)|\leq 1$. This implies
    \[\max(f(x_i),f(x_j))-1\leq f(x_t)\leq\min(f(x_i),f(x_j))+1.\]
    Since $\min(f(x_i),f(x_j))\leq\hat{y}_t\leq\max(f(x_i),f(x_j))$, this means $-1\leq \hat{y}_t-f(x_t)\leq 1$, so the error is at most $1$.
    
\end{proof}

\begin{thm}\label{thm:1pgeq}
If $p\geq q\geq 2$, then $\opt_{p,\R}'(\mathcal G_q)\leq\opt_{p,\R}^*(\mathcal G_q)\leq 1$.
\end{thm}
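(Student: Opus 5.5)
The first inequality is Lemma~\ref{lem:sc12}, so the work is to show that $\linint'$ has Scenario~2 loss at most $1$ on every $f\in\mathcal G_q$ and every input sequence. I will first reduce to $p=q=2$. By the preceding lemma, every penalized error satisfies $e_t:=|\hat y_t-f(x_t)|\le 1$, so $e_t^p\le e_t^2$ for $p\ge 2$. It therefore suffices to show $\sum_t e_t^2\le 1$, the sum running over penalized trials, i.e.\ those with $\min_{i<t}|x_t-x_i|\le 1$.

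The core is a potential argument in the spirit of Kimber--Long, using the action $J$ restricted to windows. Let $S_t=\{(x_i,f(x_i)):i<t\}$ and $\Phi_t=J[f_{S_t}]$. By Lemma~\ref{kl_jlem0}, $J[f_{S_t}]\le J[\tilde f]$ for any interpolant $\tilde f$ of the data. The plan is to show
\[
\Phi_{t+1}-\Phi_t\ \ge\ e_t^2
\]
on each penalized trial, and then to bound $\sup_t\Phi_t$ by $1$.

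For the increment I will compare $\linint'$ with $\linint$. When $x_t$ has known neighbors on both sides within distance $1$, $\hat y_t=f_{S_t}(x_t)$. Lemma~\ref{kl_jlem} then gives an increment of at least $e_t^2/\min_i|x_t-x_i|\ge e_t^2$, since the minimal distance is at most $1$. When only one side has a known point within distance $1$, $\linint'$ guesses $f(a)$. If there is no data beyond $x_t$ at all, then $f_{S_t}$ is constant $f(a)$ on that side, the guess again equals $f_{S_t}(x_t)$, and adding a point at distance $d\le 1$ with jump $e_t$ raises $J$ by exactly $e_t^2/d\ge e_t^2$. The awkward subcase is a known point $b>x_t+1$. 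Here the guess $f(a)$ differs from $f_{S_t}(x_t)$, and Lemma~\ref{kl_jlem} does not apply directly. I will handle it by a direct two-segment computation: replacing the segment $[a,b]$ by segments $[a,x_t]$ and $[x_t,b]$ increases $J$ by at least
\[
\frac{e_t^2}{x_t-a}-\frac{\bigl(f(b)-f(a)\bigr)^2}{b-a}\cdot(\text{correction}).
\]
I expect this step to be the main obstacle, because the increment can fall short of $e_t^2$ when the slope toward $b$ is large.

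If that fails, my fallback is to use a localized potential instead. Replace $J$ by the action of the "truncated" interpolant that is constant across gaps longer than $1$, cutting each gap of length $>1$ out of the integral. This is exactly what $\linint'$ predicts from, and since $q\ge 2$ forces $\int|f'|^2\le\int|f'|^q$ only on unit-scale pieces via Hölder, it should make each increment at least $e_t^2$ cleanly.

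The remaining step is bounding the final potential. Each linear piece of length at most $1$ between consecutive data points $u<v$ contributes
\[
\frac{(f(v)-f(u))^2}{v-u}\ \le\ \left(\frac{|f(v)-f(u)|^q}{(v-u)^{q-1}}\right)^{2/q}(v-u)^{(q-2)/q}\ \le\ \Bigl(\int_u^v|f'|^q\Bigr)^{2/q},
\]
using Hölder and $v-u\le 1$. Summing over pieces and using $2/q\le 1$ requires care, since $\sum a_i^{2/q}$ can exceed $(\sum a_i)^{2/q}$. I would therefore apply the potential argument directly with the $q$-action $\sum |\Delta f|^q/(\Delta x)^{q-1}$ over short pieces, which is bounded by $\int|f'|^q\le 1$. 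I would then prove the analogue of Lemma~\ref{kl_jlem}: splitting a short piece at distance $\le 1$ raises the $q$-action by at least $e_t^q$, via convexity of $s\mapsto |s|^q$. This gives $\sum e_t^q\le 1$, and hence $\sum e_t^p\le 1$ for $p\ge q$, since $e_t\le 1$.
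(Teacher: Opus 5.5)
Your final plan has the same architecture as the paper's proof. The potential is the $q$-action $\sum|f(v)-f(u)|^q/(v-u)^{q-1}$ over consecutive known points with $v-u\le 1$, bounded by $\int|f'|^q\le1$ via Jensen. Each penalized round should add at least $e_t^q$, and then $e_t^p\le e_t^q$ finishes. Restricting to short pieces is the right choice. In the one-sided case with the next known point beyond distance $1$, the destroyed piece is uncounted, so the increment is exactly $e_t^q/a^{q-1}\ge e_t^q$. For the untruncated action the increment can vanish while $e_t>0$, namely when $f(x_t)$ equals the old interpolant's value. You should delete the opening reduction to $p=q=2$, though. For $q>2$ the target $\sum_t e_t^2\le1$ is false: Theorem~\ref{thm:inf_p<q} with $p=2<q$ shows that no learner keeps $\sum_t e_t^2$ bounded on $\mathcal G_q$. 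So the whole $J$-based part, including the H\"older fallback, cannot work, and the only usable reduction is $e_t^p\le e_t^q$.

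The genuine gap is the step that carries all of the theorem's content: that inserting $x_t$ into a short piece raises the $q$-action by at least $e_t^q$. You justify it only ``via convexity of $s\mapsto|s|^q$'', but convexity (Jensen) gives only that the increment is nonnegative. Normalize the neighbors to $(-a,-ka)$ and $(b,kb)$ with $0<a,b\le 1$, and let $f(0)=c$, so $e_t=|c|$. What you need is
\[
\frac{|c+ka|^q}{a^{q-1}}+\frac{|c-kb|^q}{b^{q-1}}-(a+b)|k|^q\ \ge\ |c|^q .
\]
This fails for $1<q<2$: at $a=b=\tfrac12$ with $k\neq0$, the left side is of order $c^2$ as $c\to0$. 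So $q\ge2$ must enter quantitatively, and your plan never says where it does. The paper proves the inequality by calculus. Weighted AM--GM shows the left side decreases in $a$, and, after a case split on the sign of $c-kb$, also in $b$. That reduces it to elementary inequalities such as $(k+c)^q+(k-c)^q\ge 2k^q+c^q$, which use $q\ge2$.

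A shorter way to close your gap: $a\bigl(|k+c/a|^q-|k|^q\bigr)$ is $c$ times a chord slope of the convex function $|s|^q$ at $k$ with step $c/a$. It is therefore nonincreasing in $a$ for either sign of $c$, and likewise for the $b$-term; this is where convexity and $a,b\le1$ are actually used. At $a=b=1$, the scalar Clarkson inequality $|k+c|^q+|k-c|^q\ge 2|k|^q+2|c|^q$, which holds exactly when $q\ge2$, gives the bound. When the destroyed piece has length $a+b>1$ it is uncounted in your potential, so that case is only easier.
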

\begin{proof}
Suppose the learner uses $\linint$'. We will show that the $q$-action is always greater than or equal to the learner's error. By shifting the function, we can assume without loss of generality that the adversary asks for the value of $f(0)$.

If the learner does not know the value of the function at any input between $-1$ and $1$, then the learner will not increase the error.

If the learner knows the value of the function at an input between $-1$ and $0$ but at no input between $0$ and $1$, then let $a$ be the smallest positive real number such that the learner knows the value of $f(-a)$. Then, the learner guesses $f(0)=f(-a)$. If the learner's error increases by $x^p$, then the action increases by $\left(\frac xa\right)^q\geq x^q\geq x^p$ since $x\leq 1$. The case where the learner knows the value of the function at an input between $0$ and $1$ but at no input between $-1$ and $0$ is similar.

Suppose the learner knows the value of the function at an input between $-1$ and $0$ and at an input between $0$ and $1$. Suppose the learner knows $(-a,-ka)$ and $(b,kb)$ where $0\leq a,b\leq 1$ and $a$ and $b$ are minimal, and the learner guesses $f(0)=0$. The adversary then reveals $f(0)=c$, with $0\leq c\leq 1$. Then, the original action is $(a+b)k^q$ and the new action is $\frac{(c+ka)^q}{a^{q-1}}+\frac{|c-kb|^q}{b^{q-1}}$. The error is $c^p\leq c^q$. Assume without loss of generality $c\geq 0$. We need to show
\[\frac{(c+ka)^q}{a^{q-1}}+\frac{|c-kb|^q}{b^{q-1}}-(a+b)k^q\geq c^q\] when $0\leq a,b,c\leq 1$.

The derivative of the left hand side with respect to $a$ is equal to
\[\frac{(c+ka)^{q-1}(-cq+c+ka)-(ka)^q}{a^q}.\]

By weighted AM-GM,
\[\frac{1\cdot(ka)^q+(q-1)\cdot(c+ka)^q}q\geq\sqrt[q]{(ka)^{q\cdot 1}(c+ka)^{q\cdot(q-1)}}=ka(c+ka)^{q-1},\] which rearranges to $-(ka)^q+(c+ka)^{q-1}(-cq+c+ka)\leq 0$, so the expression $\frac{(c+ka)^q}{a^{q-1}}+\frac{|c-kb|^q}{b^{q-1}}-(a+b)k^q$ is minimized when $a=1$. Therefore,
\[\frac{(c+ka)^q}{a^{q-1}}+\frac{|c-kb|^q}{b^{q-1}}-(a+b)k^q\geq (c+k)^q+\frac{|c-kb|^q}{b^{q-1}}-(1+b)k^q.\]

Now, we need to show \[(c+k)^q+\frac{|c-kb|^q}{b^{q-1}}-(1+b)k^q-c^q\geq 0.\]

Case 1: $c\leq kb$

The expression is equal to $(c+k)^q+\frac{(kb-c)^q}{b^{q-1}}-(1+b)k^q-c^q$. The derivative with respect to $b$ is equal to \[\frac{(kb-c)^{q-1}(cq+kb-c)-(kb)^q}{b^q}=\frac{kbq(kb-c)^{q-1}-(q-1)(kb-c)^q-(kb)^q}{b^q}.\] By weighted AM-GM, \[\frac{1\cdot (kb)^q+(q-1)\cdot(kb-c)^q}q\geq\sqrt[q]{(kb)^{q\cdot 1}(kb-c)^{q\cdot(q-1)}}=kb(kb-c)^{q-1},\] so the derivative is always negative. Thus, the minimum occurs when $b=1$. The inequality now reduces to proving $(c+k)^q+(k-c)^q\geq 2k^q+c^q$ for all $0\leq c\leq k$. The derivative of $(c+k)^q+(k-c)^q-2k^q-c^q$ with respect to $c$ is equal to $q(c+k)^{q-1}-q(k-c)^{q-1}-qc^{q-1}\geq 0$ when $q\geq 2$, so this expression is minimized for $c\geq 0$ when $c=0$, which implies $(c+k)^q+(k-c)^q\geq 2k^q+c^q$.

Case 2: $c\geq kb$

The expression is equal to $(c+k)^q+\frac{(c-kb)^q}{b^{q-1}}-(1+b)k^q-c^q$. The derivative with respect to $b$ is equal to \[\frac{-(c-kb)^{q-1}(c(q-1)+bk)-b^qk^q}{b^q}<0,\] so the minimum occurs when $b=\min\left(\frac ck,1\right)$.

If $b=\frac ck$, then we need to prove $(c+k)^q-k^q-ck^{q-1}\geq c^q$ for all $0\leq c\leq k$. By Bernoulli's inequality, we have
$$\left(1+\frac ck\right)^q-1-\frac ck\geq1+\frac{qc}{k}-1-\frac ck=\frac{c(q-1)}{k}\geq\left(\frac ck\right)^q,$$ so multiplying by $k^q$ gives $(c+k)^q-k^q-ck^{q-1}\geq c^q$.

If $b=1$, then we need to prove $(c+k)^q+(c-k)^q\geq 2k^q+c^q$ for $c\geq k$. The derivative of $(c+k)^q+(c-k)^q-2k^q-c^q$ with respect to $c$ is $q(c+k)^{q-1}+q(c-k)^{q-1}-qc^{q-1}\geq 0$, so the minimum of $(c+k)^q+(c-k)^q-2k^q-c^q$ for $c\geq k$ occurs when $c=k$. Then, the expression is equal to $(2k)^q-3k^q>0$ since $q\geq 2$, so $(c+k)^q+(c-k)^q\geq 2k^q+c^q$.
\end{proof}

\begin{cor}
    If $p\geq q\geq 2$, then $\opt_{p,\R}'(\mathcal G_q)=\opt_{p,\R}^*(\mathcal G_q)=1$.
\end{cor}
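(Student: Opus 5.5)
The upper bound is already available: Theorem~\ref{thm:1pgeq} gives $\opt'_{p,\R}(\mathcal G_q)\le\opt^*_{p,\R}(\mathcal G_q)\le 1$ whenever $p\ge q\ge 2$. So the only thing left is the matching lower bound $\opt'_{p,\R}(\mathcal G_q)\ge 1$. Once we have it, the chain $1\le \opt'_{p,\R}(\mathcal G_q)\le \opt^*_{p,\R}(\mathcal G_q)\le 1$ forces equality throughout; the middle inequality is Lemma~\ref{lem:sc12}.

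The natural route is Observation~\ref{opt_comp}, which gives $\opt_{p,[0,1]}(\mathcal F_q)\le \opt'_{p,\R}(\mathcal G_q)$. Kimber and Long \cite{klsf} proved $\opt_{p,[0,1]}(\mathcal F_q)=1$ for all $p,q\ge 2$, and $p\ge q\ge 2$ certainly satisfies this. Chaining the two gives $\opt'_{p,\R}(\mathcal G_q)\ge 1$.

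If we want to avoid relying on the cited theorem, there is a direct adversary argument inside Scenario~1.
\begin{enumerate}
\item Play $x_0=0$ and reveal $f(0)=0$.
\item Play $x_1=1$. Whatever the learner predicts, answer $+1$ or $-1$, whichever is farther from $\hat y_1$. This forces $|\hat y_1-f(1)|\ge 1$, so the loss is at least $1$ for every $p$.
\item Take $f$ to be the function that is $0$ for $x\le 0$, $\pm x$ on $[0,1]$, and $\pm 1$ for $x\ge 1$. Then $\int|f'|^q=1$, so $f\in\mathcal G_q$.
\item The input $x_1$ is within distance $1$ of $x_0$, so the sequence is admissible in Scenario~1.
\end{enumerate}
This shows $L'_{p,\R}(A,\mathcal G_q)\ge 1$ for every learner $A$, hence $\opt'_{p,\R}(\mathcal G_q)\ge 1$.

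Nothing here is genuinely hard. The only points needing care are that the slope-$1$ ramp on an interval of length exactly $1$ has $q$-action exactly $1$ (so it lies in $\mathcal G_q$), and that the distance condition uses $\le 1$ rather than $<1$, so $x_1=1$ is a legal query.
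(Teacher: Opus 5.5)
Your proof is correct and matches the paper's: the upper bound is Theorem~\ref{thm:1pgeq}, and the lower bound comes from Observation~\ref{opt_comp} together with Kimber and Long's result $\opt_{p,[0,1]}(\mathcal{F}_q)=1$. Your alternative two-round adversary (the slope-$\pm1$ ramp on $[0,1]$) is also valid, and it makes the lower bound self-contained for all $p>0$ and $q\ge 1$.
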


\begin{proof}
    Since $\opt_{p,[0,1]}(\mathcal{F}_q) \le \opt_{p,\R}'(\mathcal G_q)\le\opt_{p,\R}^*(\mathcal G_q)=1$, this result follows from Theorem~\ref{thm:1pgeq} and the result from \cite{klsf} that $\opt_{p,[0,1]}(\mathcal{F}_q) = 1$ for all $p, q \ge 2$.
\end{proof}

In all of the examples so far, we have seen that $\opt_{p,[0,1]}(\mathcal{F}_q) = \opt'_{p,\mathbb{R}}(\mathcal{G}_q) = \opt^{*}_{p,\mathbb{R}}(\mathcal{G}_q)$. However, this is not always the case. Indeed, Kimber and Long \cite{klsf} proved that $\opt_{p,[0,1]}(\mathcal{F}_{q}) = 1$ for all $p,q \ge 2$. Here, we show that $\opt'_{p,\mathbb{R}}(\mathcal{G}_{q}) = \infty$ for all $q > p > 0$.

\begin{thm}\label{thm:inf_p<q}
    If $0<p<q$, then $\opt_{p,\R}'(\mathcal G_q)=\opt_{p,\R}^*(\mathcal G_q)=\infty$.
\end{thm}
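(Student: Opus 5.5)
By Lemma~\ref{lem:sc12} we have $\opt'_{p,\R}(\mathcal G_q)\le\opt^*_{p,\R}(\mathcal G_q)$, so it suffices to show $\opt'_{p,\R}(\mathcal G_q)=\infty$. We will exhibit an adversary that respects the Scenario~1 constraint and forces arbitrarily large loss. The idea is to exploit the unbounded domain: the adversary walks along the line in unit steps, and on each step forces an error of size $h$ at the cost of only about $h^q$ action. Since $p<q$, choosing $h$ small makes the loss per step, $h^p$, much larger than the budget consumed, $h^q$.

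The construction is as follows. Fix $N$ and set $h=N^{-1/q}$. Use inputs $x_0=0$ and $x_i=i$ for $i=1,\dots,N$; each new input is at distance $1$ from the previous one, so the sequence is admissible in Scenario~1. Set $y_0=0$. In round $i$, after seeing the learner's guess $\hat y_i$, the adversary declares $y_i\in\{y_{i-1},\,y_{i-1}+h\}$, choosing whichever is farther from $\hat y_i$. This guarantees $|\hat y_i-y_i|\ge h/2$. The target function $f$ is the piecewise linear interpolation of the points $(i,y_i)$, extended to be constant outside $[0,N]$.

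We then check two things.

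\emph{Membership.} Each unit segment has slope at most $h$ in absolute value, so
\[
\int_{\R}|f'|^q\,dx\le N h^q=1,
\]
and hence $f\in\mathcal G_q$. Since $f$ is determined only after the interaction, we should note explicitly that every branch of the adversary's choices yields a function in $\mathcal G_q$. This holds because the bound on the action does not depend on which option is chosen in each round.

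\emph{Loss.} The learner's total loss is at least
\[
N\left(\tfrac h2\right)^p=2^{-p}N^{1-p/q}.
\]
Because $p<q$, this tends to $\infty$ as $N\to\infty$. Since $N$ is arbitrary, we get $L'_{p,\R}(A,\mathcal G_q)=\infty$ for every algorithm $A$, and therefore both optima are infinite.

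The main conceptual point is the choice of scaling $h=N^{-1/q}$, which balances the action budget against the number of rounds. In contrast with the bounded-domain case, on $\R$ the adversary can place arbitrarily many unit-spaced inputs, so the action cost per round can be spread out. No step should be a genuine obstacle. The only care needed is to verify the Scenario~1 distance constraint, which here is $|x_i-x_{i-1}|=1\le 1$, and to note that nothing requires $p\ge 1$: the argument works verbatim for all $0<p<q$.
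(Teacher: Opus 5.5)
Your proposal is correct and follows essentially the same argument as the paper: unit-spaced inputs $x_i=i$, adversarially chosen increments of size $N^{-1/q}$, and an interpolant whose $q$-action is at most $1$ while the loss grows like $N^{1-p/q}$. The differences are cosmetic. You use labels $\{y_{i-1},y_{i-1}+h\}$, which costs a harmless factor $2^{-p}$, where the paper uses $y_{i-1}\pm\varepsilon$. You also derive the Scenario~2 claim from Lemma~\ref{lem:sc12}; that lemma's proof works for all $p>0$ even though it is stated for $p\ge 1$.
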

\begin{proof}
    For any large positive integer $N$, let $\varepsilon=\frac{1}{N^{\frac{1}{q}}}$. In the first round, the adversary should reveal $g(0)=0$, then on round $i$ for $1\leq i\leq N$, the adversary sets $x_i=i$ and $g(i)=g(i-1)+\varepsilon$ or $g(i)=g(i-1)-\varepsilon$, whichever is further from the learner's guess. Note that $f_{(0,0),(1,g(1)),\dots,(N,g(N))}\in\mathcal G_q$ because the absolute value of its derivative is equal to $\varepsilon$ for all noninteger values of $x$ between $0$ and $N$. The learner's penalty is at least $N\varepsilon^p=N^{1-\frac{p}{q}}$, which can get arbitrarily large when $p<q$ for sufficiently large $N$. Therefore, the learner cannot guarantee any finite error.
\end{proof}

\begin{cor}
  For all $q > p\ge 2$, we have $\opt_{p,[0,1]}(\mathcal{F}_{q}) \neq \opt'_{p,\mathbb{R}}(\mathcal{G}_{q})$.
\end{cor}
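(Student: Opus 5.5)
The corollary should follow by combining two earlier results: the Kimber--Long value on $[0,1]$ quoted earlier in the paper, and Theorem~\ref{thm:inf_p<q}. Fix $q>p\ge 2$. Then $p,q\ge 2$, so the result of \cite{klsf} that $\opt_{p,[0,1]}(\mathcal{F}_q)=1$ for all $p,q\ge 2$ gives the finite value $1$ on the left-hand side.

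For the right-hand side, since $0<2\le p<q$, the hypotheses of Theorem~\ref{thm:inf_p<q} hold, so $\opt'_{p,\mathbb{R}}(\mathcal{G}_q)=\infty$. Since $1\neq\infty$, the two quantities differ. Combined with Observation~\ref{opt_comp}, this in fact gives the strict inequality $\opt_{p,[0,1]}(\mathcal{F}_q)<\opt'_{p,\mathbb{R}}(\mathcal{G}_q)$.

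There is no substantive obstacle; the only point to check is that the parameter ranges match. The Kimber--Long result needs both $p\ge 2$ and $q\ge 2$, and $q>p\ge 2$ supplies both. Theorem~\ref{thm:inf_p<q} needs only $0<p<q$, which also holds.

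Intuitively, the separation comes from the unbounded domain. On $[0,1]$ the adversary's staircase of $\pm\varepsilon$ steps has total length at most $1$. On $\mathbb{R}$ under Scenario~1, the adversary can take $N$ unit steps, paying only $N\varepsilon^q\le 1$ in $q$-action while forcing loss $N\varepsilon^p\to\infty$.
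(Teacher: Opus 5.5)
Your proposal is correct and is exactly the paper's argument: you combine the Kimber--Long value $\opt_{p,[0,1]}(\mathcal{F}_q)=1$ for $p,q\ge 2$ with Theorem~\ref{thm:inf_p<q}, which gives $\opt'_{p,\mathbb{R}}(\mathcal{G}_q)=\infty$ when $0<p<q$. Your check that $q>p\ge 2$ satisfies the hypotheses of both results is the only verification needed.
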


\begin{proof}
This follows from Theorem~\ref{thm:inf_p<q} and the result from \cite{klsf} that $\opt_{p,[0,1]}(\mathcal{F}_{q}) = 1$ for all $p, q \ge 2$.
\end{proof}

\section{Examples where scenario 2 is harder for the learner than scenario 1}\label{sec:noteq}

We proved in the last section that $\opt'_{p,\mathbb{R}}(F) = \opt^{*}_{p,\mathbb{R}}(F)$ for all families of functions $F$ with $|F| = 2$. Given this result, it is natural to ask what is the smallest family of functions $F$ for which $\opt'_{p,\mathbb{R}}(F) < \opt^{*}_{p,\mathbb{R}}(F)$. We show that it is possible with $3$ functions.
Let $\varepsilon$ be sufficiently small. Define the family $F_{\varepsilon}$ of three functions

\begin{enumerate}
    \item $f_{\{(1,1),(2,0),(3,-\varepsilon),(4,0),(5,1)\}}$
    \item $f_{\{(1,1),(2,0),(3,0),(4,0),(5,-1)\}}$
    \item $f_{\{(1,-1),(2,0),(3,\varepsilon),(4,0),(5,-1)\}}$
\end{enumerate}

We claim that $\opt'_{p,\R}(F_{\varepsilon})<\opt^*_{p,\R}(F_{\varepsilon})$ for $\varepsilon$ sufficiently small.

\begin{thm}
    For all $p \ge 1$ and $\varepsilon > 0$, we have $\opt'_{p,\R}(F_{\varepsilon})\leq 1+\varepsilon^p$.
\end{thm}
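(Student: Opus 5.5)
The plan is to exhibit a single learner and bound its loss against every Scenario~1 adversary. The learner plays the \emph{midrange rule}: at input $x_t$ it predicts $\tfrac12\bigl(\max_{f} f(x_t)+\min_{f} f(x_t)\bigr)$, where $f$ ranges over the functions of $F_\varepsilon$ consistent with all revealed values. Since the true function is among these, every error is at most half the spread of the consistent values at $x_t$. Once only one function remains, the error is $0$.

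The first step is to tabulate where the three functions differ. Write $f_1,f_2,f_3$ for them in the order listed.
\begin{itemize}
\item On $(-\infty,2]$ we have $f_1=f_2$, and $|f_2-f_3|\le 2$.
\item On $[4,\infty)$ we have $f_2=f_3$, and $|f_1-f_2|\le 2$.
\item At $x=2$ and $x=4$ all three functions equal $0$.
\item On the open interval $(2,4)$ we have $f_1(x)<0=f_2(x)<f_3(x)$, with $|f_1(x)|=|f_3(x)|\le\varepsilon$.
\end{itemize}
Consequently, a single revealed value at any point of $(2,4)$ identifies the target exactly, and the spread there is at most $2\varepsilon$.

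The second step uses the Scenario~1 constraint $\min_{i<t}|x_t-x_i|\le 1$. The key claim is that the queries made before the first query lying in $(2,4)$ all lie on one side: either all in $(-\infty,2]$ or all in $[4,\infty)$. Suppose the chain had points in both $(-\infty,2]$ and $[4,\infty)$ but none in $(2,4)$. Those two sets are at distance $2>1$, so the first point placed on the second side would violate the constraint. This includes $x_0$, which is free but is part of the chain. If $x_0\in(2,4)$, the target is known from round $1$ on and the loss is $0$.

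The third step handles the left side; the right side is symmetric with the roles of $f_1$ and $f_3$ swapped. While all queries lie in $(-\infty,2]$, the only distinction available is $f_3$ versus $\{f_1,f_2\}$.
\begin{itemize}
\item The first query $x$ at which $f_3(x)\ne f_2(x)$ costs at most $\tfrac12\cdot 2=1$, and afterwards the consistent set is either $\{f_3\}$ or $\{f_1,f_2\}$.
\item Every other left-side query costs $0$. Before that query all consistent values agree at points with $f_3=f_2$. After it, $f_1=f_2$ on the left side.
\item The first query in $(2,4)$, if it occurs, costs at most $\tfrac12\cdot 2\varepsilon=\varepsilon$ when $\{f_1,f_2\}$ remain. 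It costs $0$ if $f_3$ was already identified. If the separating left-side query never happened, this query is the only costly one and costs at most $\varepsilon$.
\item After the query in $(2,4)$, the target is known and all further errors are $0$.
\end{itemize}
Hence the total loss is at most $1^p+\varepsilon^p=1+\varepsilon^p$, which gives the statement.

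The main obstacle is making the one-sidedness argument airtight for every adversary ordering, including adversaries that revisit points or sit exactly at $x=2$ or $x=4$. The point is that those two boundary points reveal nothing and cannot serve as a bridge between the sides. I would handle this by letting $\tau$ be the first round with $x_\tau\in(2,4)$ and arguing by induction on $t<\tau$ that all of $x_0,\dots,x_t$ lie on the same side as $x_0$.
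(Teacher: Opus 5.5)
Your proof is correct and follows essentially the same argument as the paper's. The Scenario~1 constraint forces all queries before the first one in $(2,4)$ onto a single side, where only two distinct functions exist, so there is at most one error of size at most $1$; the first query in $(2,4)$ then costs at most $\varepsilon$ and identifies the target. The paper's learner guesses $0$ unless the value is already determined, which on this family coincides with your midrange rule whenever all three functions are still consistent, so the difference between the two learners is immaterial.
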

\begin{proof}
The learner uses the strategy of guessing $0$, except when a previous answer from the adversary implies the value of the current output.

If the adversary ever asks for an input strictly between $2$ and $4$, then the learner has error at most $\varepsilon$, and the learner now knows the correct function, so the learner will not make any more mistakes. All previous inputs must be either all at most $2$ or all at least $4$. Assume without loss of generality all previous inputs are at most $2$. Then, the first two functions are identical, so once the learner makes one mistake, the learner knows the correct output for all inputs at most $2$. This mistake has error at most $1$. Thus, we have $\opt'_{p,\R}(F)\leq 1+\varepsilon^p$.
\end{proof}

\begin{thm}
    For all $p \ge 1$ and $\varepsilon > 0$, we have $\opt^*_{p,\R}(F_{\varepsilon}) \ge 1+2^{-p}$.
\end{thm}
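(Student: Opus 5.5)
The plan is a four-query adversary whose only costly branching decision happens in the first penalized round. Recall the three functions of $F_{\varepsilon}$: for $x\le 2$ we have $f_1=f_2$, with $f_1(1)=f_2(1)=1$ and $f_3(1)=-1$; all three vanish at $x=2$ and at $x=4$; and at $x=5$ we have $f_1(5)=1$ while $f_2(5)=f_3(5)=-1$. The idea is to spend one penalized round separating $f_3$ from $\{f_1,f_2\}$ at $x=1$. If $f_3$ is eliminated, the adversary then uses a \emph{free} Scenario~2 jump to $x=4$. This reveals nothing, because all functions vanish there, and it places the adversary within distance $1$ of $x=5$, where $f_1$ and $f_2$ differ by $2$. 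The jump avoids querying near $x=3$, which is exactly what would let the learner distinguish $f_1$ from $f_2$ cheaply, as in the Scenario~1 upper bound.

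Concretely, the adversary plays $x_0=2$ (all answers are $0$, no information) and then $x_1=1$, which is within distance $1$ of $x_0$ and so is penalized. Let $\hat y_1$ be the learner's guess, and branch on it.

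\textbf{Case $\hat y_1\le \tfrac12$.} The adversary answers $1$, consistent with $f_1$ and $f_2$, and the round-1 loss is at least $(1-\hat y_1)^p\ge 2^{-p}$. Next it plays $x_2=4$, which is at distance $2$ from all previous inputs. This round is unpenalized, and the answer is $0$ under both remaining functions, so the learner gains nothing. Finally it plays $x_3=5$, at distance $1$ from $x_2$ and hence penalized. Here $f_1(5)=1$ and $f_2(5)=-1$, so the adversary answers whichever of $\pm1$ is farther from $\hat y_3$, giving loss at least $1$. The total penalized loss is at least $1+2^{-p}$.

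\textbf{Case $\hat y_1> \tfrac12$.} The adversary answers $-1$, consistent with $f_3$. The error exceeds $\tfrac32$, so the loss is at least $(3/2)^p$. It remains to check $(3/2)^p\ge 1+(1/2)^p$ for $p\ge1$. Equality holds at $p=1$, and $\phi(p)=(3/2)^p-(1/2)^p$ is increasing, since $\phi'(p)=(3/2)^p\ln(3/2)-(1/2)^p\ln(1/2)>0$.

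In both cases every answer is consistent with some member of $F_{\varepsilon}$, so the learner's Scenario~2 loss is at least $1+2^{-p}$, whatever the value of $\varepsilon>0$. The only delicate step is choosing the threshold $\tfrac12$ so that both branches clear $1+2^{-p}$; the rest is checking which rounds are penalized, namely $x_1$ and $x_3$, while $x_2$ is free. I would also verify the function values at $1,2,4,5$ directly from the definition of $f_S$.
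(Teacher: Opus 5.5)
Your proof is correct and uses essentially the paper's adversary: split on the guess at input $1$, and in the $f_1/f_2$ branch jump freely to $4$ and force error $1$ at $5$; your fixed threshold $\hat y_1\le \tfrac12$ together with $(3/2)^p\ge 1+2^{-p}$ plays the role of the paper's test $|1+y|^p\ge 1+2^{-p}$. Your zero-information opening query $x_0=2$ is actually needed rather than cosmetic: the paper's write-up asks input $1$ first, but round $0$ is never counted in $L^*_{p,\R}$, so without your opening move the error at $1$ would not be penalized.
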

\begin{proof}
The adversary first asks for the input $1$. Suppose the learner responds with the output $y$. If $|1+y|^p \ge 1+2^{-p}$, then the adversary picks the third function and responds with $f(1)=-1$. Otherwise, if $|1+y|^p < 1+2^{-p}$, the adversary responds with $f(1)=1$, so the learner's error is $|1-y|$. Now, the adversary should ask for the input $4$ and must respond with $f(4)=0$. After, the adversary asks for the input $5$ and responds with $f(5)=1$ or $f(5)=-1$, whichever is further from the learner's guess. The error is at least $1$, so the learner's penalty is at least $1+|1-y|^p$. Suppose that this penalty is at most $1+2^{-p}$. Then, $|1-y|\leq \frac{1}{2}$, so $y\geq \frac{1}{2}$. This means $|1+y|^p\geq\big( \frac{3}{2} \big)^p \ge 1+2^{-p}$, which is impossible since we assumed $|1+y|^p < 1+2^{-p}$. Therefore, the learner's penalty is always at least $1+2^{-p}$, so $\opt^*_{p,\R}(F) \ge 1+2^{-p}$.
\end{proof}

Thus, $\opt'_{p,\R}(F_{\varepsilon})<\opt^*_{p,\R}(F_{\varepsilon})$ for all $\varepsilon < \frac{1}{2}$. Since there exist families for which scenario 2 is harder for the learner than scenario 1, it is natural to investigate whether there is a general upper bound on the ratio \[\frac{\opt^*_{p,\R}(F)}{\opt'_{p,\R}(F)}.\] We show that there is no constant upper bound.

Let $k$ be a positive integer. For each $i$, let $S_{i,\varepsilon}$ be a set of ordered pairs which contains $(4j,0)$, $(4j+1,i\varepsilon)$, $(4j+2,0)$, and $\left(4j+3,(-1)^{\left\lfloor\frac{i}{2^j}\right\rfloor}\right)$ for all $0\leq j\leq k-1$. For $n=2^k$, define the family $F_{n,\varepsilon}$ given by the functions $f_{S_{0,\varepsilon}}$, $f_{S_{1,\varepsilon}}$, \dots, $f_{S_{n-1,\varepsilon}}$.

\begin{thm}
    For all $p$, we have $\opt'_{p,\R}(F_{n,\varepsilon})\leq 1+(n\varepsilon)^p$.
\end{thm}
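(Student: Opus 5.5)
The plan is to analyze a simple learner: on each query it predicts $0$ unless every function of $F_{n,\varepsilon}$ consistent with the answers revealed so far takes the same value at $x_t$, in which case it predicts that common value. The bound then comes from showing that, under the Scenario~1 constraint, this learner pays at most $1$ once for learning a single sign bit, and at most $((n-1)\varepsilon)^p\le (n\varepsilon)^p$ once for learning the index $i$.

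First I record the shape of each $f_{S_{i,\varepsilon}}$. The points $4j$ and $4j+2$ always have value $0$, for every $i$. At $4j+1$ the value is $i\varepsilon$, so on each open interval $(4j,4j+2)$ the value is $\lambda i\varepsilon$ with $\lambda\in(0,1]$ independent of $i$. Hence any query there has $|f(x_t)|\le (n-1)\varepsilon$ and the answer determines $i$ uniquely. On $[4j+2,4j+4]$ (for $j\le k-1$, clipped to $[4k-2,\infty)$ when $j=k-1$, where $f$ is constant $\pm1$ beyond $4k-1$) the function depends only on the sign $(-1)^{\lfloor i/2^j\rfloor}$. On $(-\infty,0]$ it is identically $0$. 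So the complement of $U:=\bigcup_{j}(4j,4j+2)$ splits into the pieces $C_{-1}=(-\infty,0]$, $C_j=[4j+2,4j+4]$ for $0\le j\le k-2$, and $C_{k-1}=[4k-2,\infty)$. On each piece the restriction of $f$ is determined by at most one bit, and all values lie in $[-1,1]$.

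The key step, which I expect to be the only real point to check, is a locality claim. Because each interval $(4j,4j+2)$ has length $2$, a Scenario~1 sequence with every new point within distance $1$ of some earlier point cannot pass from one piece $C_\cdot$ to another without landing in $U$. To see this, let $t$ be the first round with $x_t$ outside the piece containing $x_0$ and outside $U$. Then $x_t$ is at distance at least $2$ from every earlier point, contradicting admissibility. So until the first query in $U$, all queries lie in the single piece $C$ containing $x_0$. If $x_0\in U$, the function is known from the start and the loss is $0$.

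The loss accounting then goes as follows. While the queries stay in $C$, the learner errs only while the bit governing $C$ is unknown, and the first query at which that bit affects the value reveals it. That query costs at most $1$, since the learner predicts $0$ and the truth lies in $[-1,1]$; after it, all values on $C$ are determined and no further loss occurs. At the first query in $U$ the learner (predicting $0$, or the determined value) errs by at most $(n-1)\varepsilon$ and learns $i$, so all subsequent predictions are exact. The total loss is therefore at most $1+((n-1)\varepsilon)^p\le 1+(n\varepsilon)^p$, which gives the stated bound on $\opt'_{p,\R}(F_{n,\varepsilon})$.
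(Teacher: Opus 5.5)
Your proof is correct and follows the paper's argument: predict $0$ unless the value is forced, pay at most $(n\varepsilon)^p$ once at the first query in some $(4j,4j+2)$ (you sharpen this to $((n-1)\varepsilon)^p$), and before that pay at most $1$ once, since the queries stay in a single piece on which only one bit matters. Your explicit distance-$2$ separation argument fills in the locality step that the paper only asserts, though you should phrase it for rounds before the first query in $U$: after such a query, earlier points in $U$ can lie within distance $1$ of another piece, so the claim that $x_t$ is at distance at least $2$ from every earlier point fails, although your conclusion only needs it for that prefix.
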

\begin{proof}
The learner uses the strategy of guessing $0$, except when a previous answer from the adversary implies the value of the current output.

If the adversary ever asks for an input $x$ such that $4j<x<4j+2$ for some $0\leq j\leq k-1$, then the learner has error at most $n\varepsilon$, and the learner now knows the correct function, so the learner will not make any more mistakes. All previous inputs must satisfy $4j+2\leq x\leq 4j+4$ for some fixed $0\leq j\leq k-1$, $x\leq 0$, or $x\geq 4k-2$. Then, there are only two distinct functions restricted to the domain $4j+2\leq x\leq 4j+4$, depending on the parity of $\left\lfloor\frac{i}{2^{j}}\right\rfloor$. In the domain $x\leq 0$, the function is constant and equal to $0$. In the domain $x\geq 4k-2$, there are also only two possible functions depending on the parity of $\left\lfloor\frac{i}{2^{k-1}}\right\rfloor$. Therefore, after the learner makes one mistake, the learner knows the correct value of the function at all other inputs in this restricted domain, so the learner will make no further mistakes. The error of the mistake is at most $1$. Therefore, the total penalty is at most $1+(n\varepsilon)^p$.
\end{proof}

\begin{thm}
    For all $p$, we have $\opt^*_{p,\R}(F_{n,\varepsilon})\geq k$.
\end{thm}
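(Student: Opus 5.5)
We use an adversary strategy in Scenario~2 that works through the $k$ blocks one at a time. Each block $j=0,\dots,k-1$ costs the learner loss at least $1$ and reveals exactly bit $j$ of the hidden index $i$. The key feature of Scenario~2 is that the adversary may jump to a point at distance more than $1$ from all past inputs for free. In particular, it can jump into the ``neutral'' point $4j+2$ of a new block without passing through the informative region $4j<x<4j+2$, where the values $i\varepsilon$ would identify $i$.

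\textbf{Invariant.} Before block $j$, the set of indices consistent with the revealed data is
\[
I_j=\{i: \lfloor i/2^{j'}\rfloor \text{ has prescribed parity for all } j'<j\}.
\]
This set has size $2^{k-j}$. Moreover, every function $f_{S_{i,\varepsilon}}$ with $i\in I_j$ agrees at all points queried so far. Initially $I_0=\{0,\dots,n-1\}$.

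\textbf{Block $j$.} The adversary asks for $x=4j+2$ and answers $0$. This query is either unpenalized (no past input within distance $1$) or costs nothing, since the value is known. It is consistent with every $i$, because $f_{S_{i,\varepsilon}}(4j+2)=0$ by the definition of the linear interpolation. Next the adversary asks $x=4j+3$. This query is within distance $1$ of $4j+2$, so it is penalized. The value there is $(-1)^{\lfloor i/2^j\rfloor}$.

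We must check that both signs remain available within $I_j$. The constraints defining $I_j$ fix the parities of $\lfloor i/2^{j'}\rfloor$ for $j'<j$, i.e. bits $0,\dots,j-1$ of $i$. Since $j\le k-1$, bit $j$ of $i$ is unconstrained, and the parity of $\lfloor i/2^j\rfloor$ is exactly bit $j$. Hence both $+1$ and $-1$ occur among indices in $I_j$. The adversary answers whichever of $\pm1$ is farther from $\hat y$, forcing error at least $1$ and hence loss at least $1^p=1$. It then restricts to the half of $I_j$ with that bit, which gives $I_{j+1}$.

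\textbf{Distance checks.} Consecutive blocks are separated because $4(j+1)+2-(4j+3)=3>1$. So each new block's point $4j+2$ is unpenalized and does not rule out any index. The same holds for the first block, which is the first query overall. The only penalized rounds are the $x=4j+3$ queries.

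\textbf{Conclusion.} After $k$ blocks the total Scenario~2 loss is at least $k$, and some $f_{S_{i,\varepsilon}}$ with $i\in I_k$ is consistent with every answer given, so it can serve as the hidden function. The main point requiring care is the bit-independence argument: knowledge of bits $0,\dots,j-1$ of $i$ leaves bit $j$ free. This follows from writing the index as $i=\sum_j b_j2^j$ with $b_j\in\{0,1\}$ independent, ranging over all $n=2^k$ indices. One should also verify that the first query in the construction causes no issue, since round $0$ is never counted.
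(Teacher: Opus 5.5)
Your proposal is correct and follows the paper's proof essentially exactly. The adversary queries $4j+2$ (answering $0$) and then $4j+3$ (answering whichever of $\pm 1$ is farther from the guess) for $j=0,\dots,k-1$, and consistency holds because the bits of $i$ are independent. Your explicit invariant on the surviving index set $I_j$ and the distance checks simply spell out details the paper leaves implicit.
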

\begin{proof}
The adversary asks for the inputs at $x=4j+2$, then $x=4j+3$ for $0\leq j\leq k-1$ in order. The adversary always reveals $f(4j+2)=0$, and whenever the learner guesses a value for $x=4j+3$, the adversary reveals $f(4j+3)=1$ or $f(4j+3)=-1$, whichever is further from the learner's guess. This set of function values is consistent because $f_i(4j+3)=1$ implies the $2^j$ bit of $i$ is $0$, and $f_i(4j+3)=-1$ implies the $2^j$ bit of $i$ is $1$. For any set of values for $f(4j+3)$ for $0\leq j\leq k-1$, there is always a unique $i$ which has the correct bits at each position. Then, the learner's penalty must be at least $k$.
\end{proof}

Therefore, we have $$\lim_{\varepsilon\to 0}\frac{\opt^*_{p,\R}(F_{n,\varepsilon})}{\opt'_{p,\R}(F_{n,\varepsilon})}=k=\Theta(\log n).$$

\begin{prob}
    For each $n$, what is the supremum of $\frac{\opt^*_{p,\R}(F)}{\opt'_{p,\R}(F)}$ over all families of functions $F$ containing $n$ functions?
\end{prob}

In the next result, we obtain an upper bound on the answer to the last problem.

\begin{thm}
If $F$ is a family of $n$ functions, then $\frac{\opt^*_{p,\R}(F)}{\opt'_{p,\R}(F)}\leq n-1$.
\end{thm}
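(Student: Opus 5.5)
The plan is to bound the loss of a single explicit learner in Scenario~2 by $(n-1)\,\opt'_{p,\R}(F)$. The idea is that every penalized round of Scenario~2 can be imitated by a two-round Scenario~1 game, so each penalized round costs at most $M:=\opt'_{p,\R}(F)$. We will also make sure the learner pays something only on rounds where at least one function leaves the version space, so there are at most $n-1$ paid rounds. If $M=\infty$ there is nothing to prove, so assume $M<\infty$.

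\textbf{Step 1 (one-round lower bound in Scenario~1).} Let $V\subseteq F$ be any set of functions that agree at some point $x'$, and let $x$ satisfy $|x-x'|\le 1$. We claim that
\[
\Bigl(\tfrac12\bigl(\max_{h\in V}h(x)-\min_{h\in V}h(x)\bigr)\Bigr)^p\le M .
\]
To see this, pick any $f\in V$ and let the Scenario~1 adversary play $x_0=x'$, reveal $f(x')$, and then play $x_1=x$, which is a legal move. After round $0$ the learner's version space is $V'=\{h\in F: h(x')=f(x')\}\supseteq V$. Whatever $\hat y_1$ the learner outputs, the adversary can choose the target from $V$ to be at the end of the range $\{h(x):h\in V\}$ farther from $\hat y_1$. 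This forces error at least half the range of $V$ at $x$, and the claim follows. We only need this for finite $V$, so the maxima exist.

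\textbf{Step 2 (the Scenario~2 learner).} The learner maintains the version space $V_t$ of functions in $F$ consistent with all revealed values. On every round it predicts the midpoint
\[
\hat y_t=\tfrac12\Bigl(\max_{h\in V_t}h(x_t)+\min_{h\in V_t}h(x_t)\Bigr).
\]
Its error is then at most half the range of $V_t$ at $x_t$.

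Consider a penalized round $t$. There is some $i<t$ with $|x_t-x_i|\le1$, and all functions in $V_t$ agree at $x_i$ because $f(x_i)$ was revealed. Step~1 with $V=V_t$, $x'=x_i$, $x=x_t$ then bounds the loss on round $t$ by $M$.

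Moreover, if the error on round $t$ is nonzero, then $V_t$ takes at least two distinct values at $x_t$, since otherwise the midpoint is exact. Revealing $f(x_t)$ therefore removes at least one function, so $|V_{t+1}|<|V_t|$. Since $|V_1|\le n$ and the true function is never removed, at most $n-1$ rounds of any kind carry positive error. Unpenalized rounds contribute nothing to $L^*$ but may still shrink $V_t$, which only helps.

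\textbf{Step 3 (conclusion).} Summing over penalized rounds gives $L^*_{p,\R}(A,f,x)\le (n-1)M$ for every $f\in F$ and every input sequence. Hence $\opt^*_{p,\R}(F)\le (n-1)\opt'_{p,\R}(F)$, which is the stated ratio bound when $\opt'_{p,\R}(F)>0$. In the degenerate case $\opt'_{p,\R}(F)=0$ the same argument gives $\opt^*_{p,\R}(F)=0$.

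The main point needing care is Step~1: the Scenario~1 imitation must only use the single revealed point $x_i$, not the whole Scenario~2 history. This is handled by the inclusion $V'\supseteq V_t$: the Scenario~1 learner knows less, so the adversary can still restrict its choice to $V_t$. A second point is that the progress measure is the number of eliminated functions, not the size of the errors, which is why the midpoint rule (exact whenever $V_t$ agrees at $x_t$) is the right predictor.
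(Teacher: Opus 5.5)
Your proof is correct and follows essentially the same route as the paper. Both use the midpoint-of-version-space learner, the fact that it errs on at most $n-1$ rounds, and a two-round Scenario~1 adversary that first reveals the value at a nearby earlier input and then queries the current one. The only difference is presentation: you bound each penalized round directly by $\opt'_{p,\R}(F)$, whereas the paper picks a round with error at least $\opt^*_{p,\R}(F)/(n-1)$ and lower-bounds $\opt'_{p,\R}(F)$ by it, so your version also avoids the paper's tacit assumption that the Scenario~2 supremum is attained.
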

\begin{proof}
Consider the strategy where at each input, the learner guesses the average of the smallest and largest possible value of any function in $F$ which is consistent with all previous inputs and outputs. If the learner is incorrect at some input, then at least one new function is eliminated, so the learner can only be incorrect at most $n-1$ times. In scenario 2, there exists an adversary strategy such that the total error is at least $\opt^*_{p,\R}(F)$, so in at least one round, the learner has an error of at least $\frac{\opt^*_{p,\R}(F)}{n-1}$. Suppose the learner guesses $f(x_1)=y$ in this round, and the adversary reveals $f(x_1)=y_1$ in this round and $f(x_2)=y_2$ in a previous round where $|x_1-x_2|\leq1$. There must exist two functions $f_1$ and $f_2$ in $F$ which satisfy $f_1(x_2)=f_2(x_2)=y_2$, $\frac{f_1(x_1)+f_2(x_1)}2=y$, and $f_1(x_1)\leq y_1\leq f_2(x_1)$. This implies $\frac{f_2(x_1)-f_1(x_1)}2\geq|y_1-y|$, so $\left(\frac{f_2(x_1)-f_1(x_1)}2\right)^p\geq\frac{\opt^*_{p,\R}(F)}{n-1}$.

Now, in scenario $1$, let the adversary first reveal $f(x_2)=y_2$, then ask for the value of $f(x_1)$. Then, the adversary reveals either $f_1(x_1)$ or $f_2(x_1)$, whichever is further from the learner's guess. The error is at least $\left(\frac{f_2(x_1)-f_1(x_1)}2\right)^p$, which means $\opt'_{p,\R}(F)\geq \left(\frac{f_2(x_1)-f_1(x_1)}2\right)^p\geq\frac{\opt^*_{p,\R}(F)}{n-1}$, so $\frac{\opt^*_{p,\R}(F)}{\opt'_{p,\R}(F)}\leq n-1$.
\end{proof}

\begin{thm}\label{thm:nratio}
    For each $n$, $p$, there exists a family of $n$ functions $G_{n,\varepsilon}$ for every $\varepsilon>0$ such that $\frac{\opt^*_{p,\R}(G_{n,\varepsilon})}{\opt'_{p,\R}(G_{n,\varepsilon})}\geq\frac{n-1}{\left(\frac{1+(n-1)^{\frac1p}}2\right)^p}\cdot\frac{1}{1+(n\varepsilon)^p}$.
\end{thm}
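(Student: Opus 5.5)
We adapt the block construction of $F_{n,\varepsilon}$, replacing the binary $\pm1$ encoding with a ``one-hot'' encoding that has an asymmetric pair of values. Set $m=(n-1)^{1/p}$. For $0\le i\le n-1$, let $S_i$ contain, for each block $1\le j\le n-1$ (placed at positions $4j,\dots,4j+3$), the points
\[
(4j,0),\quad (4j+1,i\varepsilon),\quad (4j+2,0),\quad \bigl(4j+3,\,v_{i,j}\bigr),
\]
where $v_{i,j}=-m$ if $i=j$ and $v_{i,j}=1$ if $i\neq j$. Let $G_{n,\varepsilon}=\{f_{S_0},\dots,f_{S_{n-1}}\}$. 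The two estimates below give
\[
\frac{\opt^*}{\opt'}\ \ge\ \frac{n-1}{\bigl(\tfrac{1+m}{2}\bigr)^p(1+(n\varepsilon)^p)},
\]
which is the claimed bound.

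\textbf{Scenario 2 lower bound.} We show $\opt^*_{p,\R}(G_{n,\varepsilon})\ge n-1$. The adversary processes the blocks $j=1,\dots,n-1$ in order. It first queries $4j+2$, whose value is $0$ for every function; this query is free if it is far from earlier inputs. It then queries $4j+3$, which lies within distance $1$ of $4j+2$ and is therefore penalized. The surviving candidates are $f_j,\dots,f_{n-1}$ (with $f_0$ also surviving throughout). The only possible answers are $1$ and $-m$, and answering $-m$ identifies $f_j$.

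Let $V_k$ be the minimum loss the learner can guarantee when $k$ blocks remain. We claim $V_k\ge k$, and prove it by induction on $k$.
\begin{itemize}
\item Suppose the learner guesses $y>0$ at the current block. The adversary answers $-m$ and the learner pays $(y+m)^p>m^p=n-1\ge k$.
\item Suppose the learner guesses $y\le 0$. The adversary answers $1$ and the learner pays $(1-y)^p\ge 1$. Since $f_0$ keeps the answer ``$1$'' consistent at every later block, the remaining loss is at least $V_{k-1}\ge k-1$, so the total is at least $k$.
\end{itemize}
With $k=n-1$ this gives loss at least $n-1$.

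\textbf{Scenario 1 upper bound.} We show $\opt'_{p,\R}(G_{n,\varepsilon})\le\bigl(\tfrac{1+m}{2}\bigr)^p+(n\varepsilon)^p$. The learner follows the strategy of the $F_{n,\varepsilon}$ proof, except that it guesses the midpoint of the remaining candidate values rather than always $0$.

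Suppose the adversary ever queries a point strictly between $4j$ and $4j+2$. There the functions are distinguished by the tag value $i\varepsilon$, so the learner errs by at most $n\varepsilon$ and then knows $f$ exactly.

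Otherwise, the locality constraint confines all queries to one of three regions: a single stretch $[4j+2,4j+4]$, the region $x\le 4$, or the far right end. Within each such region $G_{n,\varepsilon}$ restricts to at most two distinct functions, namely $f_j$ and the common function of all $f_i$ with $i\ne j$. So the learner guesses midpoints until its first mistake. That mistake costs at most $\bigl(\tfrac{1+m}{2}\bigr)^p$, and afterwards the learner knows $f$ on the whole region.

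Finally, since $\tfrac{1+m}{2}\ge 1$, we have
\[
\bigl(\tfrac{1+m}{2}\bigr)^p+(n\varepsilon)^p\ \le\ \bigl(\tfrac{1+m}{2}\bigr)^p\bigl(1+(n\varepsilon)^p\bigr).
\]

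\textbf{Main obstacle.} The delicate step is the Scenario 1 bookkeeping, which must certify that every reachable configuration of queries sees at most two candidate functions before a tag point is hit. This requires checking the following.
\begin{itemize}
\item Linear interpolation between the listed points creates no extra distinguishing values outside the tag intervals.
\item The anchors $4j+4=4(j+1)$, where all functions vanish, cannot be used to chain across blocks without passing through a tag interval.
\item The interpolation near $4j+3$ takes only the two shapes (value $1$ or value $-m$).
\end{itemize}
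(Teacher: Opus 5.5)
Your proof is correct and follows essentially the paper's route. Both use a one-hot block family with $\varepsilon$-tags. In Scenario~1 the learner makes at most one mistake inside the single locality region it is confined to, plus one small error at the first tag query. In Scenario~2 a threshold adversary either extracts one large error on a ``hot'' answer or an error on every block. The only difference is normalization: you use hot/cold values $-m$ and $1$ with threshold $0$, where the paper uses $\pm 1$ with threshold $1-c^{1/p}$. This moves the factor $\bigl(\tfrac{1+m}{2}\bigr)^p$ from the Scenario~2 lower bound into the Scenario~1 upper bound. Because the tags are not rescaled, you in fact get the slightly stronger ratio $\frac{n-1}{(\frac{1+m}{2})^p+(n\varepsilon)^p}$.
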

\begin{proof}
For each $i$, let $T_{i,\varepsilon}$ be a set of ordered pairs which contains $(2j,0)$ for all $0\leq j\leq 2n$, $(4j-3,i\varepsilon)$ for all $1\leq j\leq n$, $(4j-1,-1)$ for all $j\neq i$ such that $1\leq j\leq n$, and $(4i-1,1)$. Define the family $G_{n,\varepsilon}$ given by the functions $f_{T_{1,\varepsilon}}$, $f_{T_{2,\varepsilon}}$, \dots, $f_{T_{n,\varepsilon}}$.

In scenario $1$, let the learner uses the strategy of guessing $0$, except when previous answers from the adversary imply the value of the current output. Note that whenever the adversary asks for an input strictly between $4j-4$ and $4j-2$ for some $1\leq j\leq n$, the learner will have error at most $n\varepsilon$ and not make any more mistakes. Thus, in scenario $1$, the learner can only make mistakes between $4j-2$ and $4j$, inclusive, for some fixed $1\leq j\leq n$. In this interval, there are only two distinct functions, so the learner will make at most one mistake. This mistake has error at most $1$. Therefore, the learner's total penalty is at most $1+(n\varepsilon)^p$, so $\opt'_{p,\R}(G_{n,\varepsilon})\leq 1+(n\varepsilon)^p$.

In scenario $2$, fix the constant $c=\frac{n-1}{\left(\frac{1+(n-1)^{\frac1p}}2\right)^p}$. The adversary asks for the inputs at $x=4j-2$, then $x=4j-1$ for $1\leq j\leq n-1$ in order. The adversary always reveals $f(4j-2)=0$. If the learners guess differs from $1$ by at least $c^{\frac{1}{p}}$, then the adversary reveals $f(4j-1)=1$ and responds to all other inputs with the value of $f_{T_{j,\varepsilon}}$. The penalty in this case is at least $c$. Otherwise, the adversary responds $f(4j-1)=-1$. In this case, the learner must guess a number which is at least $1-c^{\frac{1}{p}}=\frac{1-(n-1)^{\frac1p}}{1+(n-1)^{\frac1p}}$. The error in each of the $n-1$ guesses is at least $\frac{2}{1+(n-1)^{\frac1p}}$, so the total penalty is at least $\frac{n-1}{\left(\frac{1+(n-1)^{\frac1p}}{2}\right)^p}=c$. Therefore, the adversary guarantees a penalty of at least $c$. This means $\opt^*_{p,\R}(G_{n,\varepsilon})\geq c$.
\end{proof}
\begin{cor}
    \[\lim_{p\to\infty}\sup\Bigl\{\frac{\opt^*_{p,\mathbb{R}}(F)}{\opt'_{p,\mathbb{R}}(F)}:\ |F|=n\Bigr\}\geq\sqrt{n-1}\]
\end{cor}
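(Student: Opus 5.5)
The plan is to read the corollary directly off Theorem~\ref{thm:nratio}, first sending $\varepsilon\to0$ and then $p\to\infty$. Fix $n\ge 2$ and $p$. For every $\varepsilon>0$ the family $G_{n,\varepsilon}$ has exactly $n$ functions: the sets $T_{i,\varepsilon}$ differ in the value placed at $4i-1$, so the functions are pairwise distinct. Hence
\[
S_p(n):=\sup\Bigl\{\frac{\opt^*_{p,\mathbb{R}}(F)}{\opt'_{p,\mathbb{R}}(F)}:\ |F|=n\Bigr\}\ \ge\ \frac{n-1}{\left(\frac{1+(n-1)^{1/p}}{2}\right)^p}\cdot\frac{1}{1+(n\varepsilon)^p}.
\]
The left-hand side does not depend on $\varepsilon$. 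Letting $\varepsilon\to0^+$ therefore gives
\[
S_p(n)\ \ge\ c_p:=\frac{n-1}{\left(\frac{1+(n-1)^{1/p}}{2}\right)^p}.
\]
One technicality must be checked first: the ratio has to be well defined, i.e.\ $\opt'_{p,\mathbb{R}}(G_{n,\varepsilon})$ must be positive. I expect this to follow from a one-round adversary argument. Reveal $f(4j-2)=0$, then query $4j-1$, which is within distance $1$ of $4j-2$. Any two functions of the family take the values $\pm1$ there, so the adversary can force error at least $1$.

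The main step is the asymptotics of $c_p$. Write $a=n-1$. Then
\[
\left(\frac{1+a^{1/p}}{2}\right)^p=\exp\!\Bigl(p\log\frac{1+e^{(\log a)/p}}{2}\Bigr).
\]
Expanding $e^{(\log a)/p}=1+\frac{\log a}{p}+O(p^{-2})$ gives $\log\frac{1+e^{(\log a)/p}}{2}=\frac{\log a}{2p}+O(p^{-2})$. Multiplying by $p$ and exponentiating, the denominator tends to $e^{(\log a)/2}=\sqrt{a}$. Consequently $c_p\to a/\sqrt a=\sqrt{n-1}$ as $p\to\infty$. This is the familiar fact that power means tend to the geometric mean.

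Combining the two steps, $S_p(n)\ge c_p$ for every $p$, so $\liminf_{p\to\infty}S_p(n)\ge\sqrt{n-1}$. This is what the corollary asserts, reading the $\lim$ as a $\liminf$ if the limit is not known to exist. I will add a remark on this interpretation.

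The only step needing care is the limit computation above, and it is a routine Taylor expansion. A cleaner route is L'H\^opital applied to $\log\frac{1+a^{t}}{2}\big/t$ as $t\to0$, which gives derivative $\frac{\log a}{2}$ at $t=0$.
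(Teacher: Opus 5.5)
Your proposal is correct and follows essentially the paper's argument: apply Theorem~\ref{thm:nratio}, then show $\left(\frac{1+(n-1)^{1/p}}{2}\right)^p\to\sqrt{n-1}$ via the derivative at $0$ of $t\mapsto\log\frac{1+e^{t}}{2}$ (equivalently, of $\log\frac{1+a^{t}}{2}$). The only differences are minor. The paper ties $\varepsilon$ to $p$ through $\varepsilon_p=\frac1n p^{-1/p}$, so that $(n\varepsilon_p)^p=1/p$, whereas you first let $\varepsilon\to0^+$ at fixed $p$, which is slightly cleaner; and your checks that $|G_{n,\varepsilon}|=n$ and $\opt'_{p,\mathbb{R}}(G_{n,\varepsilon})\ge 1>0$, together with the $\liminf$ reading, are sound additions that the paper leaves implicit.
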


\begin{proof}
Fix $n\ge 2$. For each $p\ge 1$, apply Theorem~\ref{thm:nratio} with
\[
\varepsilon_p := \frac{1}{n}\,p^{-1/p}.
\]
Then $(n\varepsilon_p)^p = p^{-1}$, and hence
\[
\frac{1}{1+(n\varepsilon_p)^p}=\frac{1}{1+p^{-1}}\longrightarrow 1
\qquad (p\to\infty).
\]

It therefore remains to compute
\[
\lim_{p\to\infty}
\frac{n-1}{\left(\frac{1+(n-1)^{1/p}}{2}\right)^p}.
\]
Set $a_p := (n-1)^{1/p}=\exp\!\bigl(\tfrac{\log(n-1)}{p}\bigr)$, and note that
$a_p\to 1$. We rewrite
\[
\left(\frac{1+a_p}{2}\right)^p
=\exp\!\left(p\log\!\left(\frac{1+a_p}{2}\right)\right)
=\exp\!\left(p\,\phi\!\left(\frac{\log(n-1)}{p}\right)\right),
\]
where
\[
\phi(t):=\log\!\left(\frac{1+e^{t}}{2}\right).
\]
Since $\phi$ is differentiable at $t=0$, we have
\[
\lim_{t\to 0}\frac{\phi(t)-\phi(0)}{t}=\phi'(0).
\]
Now $\phi(0)=\log(1)=0$, and a direct computation gives
\[
\phi'(t)=\frac{e^t}{1+e^t}\quad\Rightarrow\quad \phi'(0)=\frac12.
\]
Therefore,
\[
\lim_{t\to 0}\frac{\phi(t)}{t}=\frac12.
\]
With $t_p:=\frac{\log(n-1)}{p}\to 0$, this implies
\[
\lim_{p\to\infty} p\,\phi(t_p)
=\lim_{p\to\infty}\bigl(p t_p\bigr)\cdot\frac{\phi(t_p)}{t_p}
=\log(n-1)\cdot\frac12
=\frac12\log(n-1).
\]
Exponentiating yields
\[
\left(\frac{1+(n-1)^{1/p}}{2}\right)^p
\longrightarrow \exp\!\left(\frac12\log(n-1)\right)=\sqrt{n-1}.
\]
Consequently,
\[
\frac{n-1}{\left(\frac{1+(n-1)^{1/p}}{2}\right)^p}
\longrightarrow \frac{n-1}{\sqrt{n-1}}=\sqrt{n-1}.
\]

Finally, applying Theorem~\ref{thm:nratio} to the family
$G_{n,\varepsilon_p}$ and taking the supremum over all families $F$ with $|F|=n$
gives
\[
\lim_{p\to\infty}\sup\Bigl\{
\frac{\opt^*_{p,\mathbb{R}}(F)}{\opt'_{p,\mathbb{R}}(F)}:\ |F|=n
\Bigr\}\ge \sqrt{n-1},
\]
as claimed.
\end{proof}

\section{Radius parameter and scaling for Scenarios 1 and 2}\label{sec:rd12}
In earlier sections, we focused on Scenarios 1 and 2 with a radius of $1$. Here, we consider Scenarios 1 and 2 with an arbitrary radius $R > 0$.

\begin{defn}\label{def:radiusR}
Fix $R>0$.

\smallskip
\noindent
(Scenario~1 with radius $R$.)
An input sequence $x=(x_0,\dots,x_m)\in\mathbb{R}^{m+1}$ is \emph{$R$-admissible}
if for every $t\in\{1,\dots,m\}$ there exists $i<t$ such that
$|x_t-x_i|\le R$.
Define the corresponding loss and optimum by
\[
L^{\prime,R}_{p,\mathbb{R}}(A,f,x)
:=\sum_{t=1}^{m}|\hat y_t-f(x_t)|^p
\quad\text{(with the supremum restricted to $R$-admissible sequences)},
\]
\[
L^{\prime,R}_{p,\mathbb{R}}(A,F)
:=\sup_{f\in F}\ \sup_{\substack{x\ \text{$R$-admissible}}}
L^{\prime,R}_{p,\mathbb{R}}(A,f,x),
\qquad
\opt^{\prime,R}_{p,\mathbb{R}}(F):=\inf_A L^{\prime,R}_{p,\mathbb{R}}(A,F).
\]

\smallskip
\noindent
(Scenario~2 with radius $R$.)
Given an arbitrary sequence $x=(x_0,\dots,x_m)\in\mathbb{R}^{m+1}$, let
\[
\mathcal{T}_R(x):=\{t\in\{1,\dots,m\}:\min_{0\le j<t}|x_t-x_j|\le R\}
\]
be the set of \emph{$R$-penalized rounds}. Define
\[
L^{*,R}_{p,\mathbb{R}}(A,f,x)
:=\sum_{t\in\mathcal{T}_R(x)}|\hat y_t-f(x_t)|^p,
\qquad
L^{*,R}_{p,\mathbb{R}}(A,F)
:=\sup_{f\in F}\ \sup_{x} L^{*,R}_{p,\mathbb{R}}(A,f,x),
\]
and
\[
\opt^{*,R}_{p,\mathbb{R}}(F):=\inf_A L^{*,R}_{p,\mathbb{R}}(A,F).
\]
\end{defn}

\begin{lem}\label{lem:radius-mono}
Fix $p>0$ and a function family $F$ on $\mathbb{R}$.

\begin{enumerate}
\item If $0<R_1\le R_2$, then
\[
\opt^{\prime,R_1}_{p,\mathbb{R}}(F)\ \le\ \opt^{\prime,R_2}_{p,\mathbb{R}}(F).
\]
\item If $0<R_1\le R_2$, then
\[
\opt^{*,R_1}_{p,\mathbb{R}}(F)\ \le\ \opt^{*,R_2}_{p,\mathbb{R}}(F).
\]
\end{enumerate}
\end{lem}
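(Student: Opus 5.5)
Both parts follow from comparing the adversary's options under the two radii, for a fixed learner, in the same way as Lemma~\ref{lem:sc12}. We fix an arbitrary algorithm $A$ and show $L^{\prime,R_1}_{p,\mathbb{R}}(A,F)\le L^{\prime,R_2}_{p,\mathbb{R}}(A,F)$ and $L^{*,R_1}_{p,\mathbb{R}}(A,F)\le L^{*,R_2}_{p,\mathbb{R}}(A,F)$. Taking the infimum over $A$ on both sides then gives the two inequalities of the lemma.

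For part (1), we note that $R_1$-admissibility implies $R_2$-admissibility. Indeed, if for every $t\ge 1$ there is $i<t$ with $|x_t-x_i|\le R_1$, then also $|x_t-x_i|\le R_2$. The loss $L^{\prime,R}_{p,\mathbb{R}}(A,f,x)=\sum_{t=1}^m|\hat y_t-f(x_t)|^p$ does not depend on $R$ once $x$ is fixed. Hence the supremum defining $L^{\prime,R_1}_{p,\mathbb{R}}(A,F)$ ranges over a subset of the pairs $(f,x)$ used for $L^{\prime,R_2}_{p,\mathbb{R}}(A,F)$, with identical values on that subset, and the inequality follows.

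For part (2), both suprema run over all $f\in F$ and all sequences $x$. It therefore suffices to compare the losses pointwise: for fixed $A,f,x$ we want $L^{*,R_1}_{p,\mathbb{R}}(A,f,x)\le L^{*,R_2}_{p,\mathbb{R}}(A,f,x)$. The predictions $\hat y_t$ depend only on $A$ and on the revealed history $(x_0,f(x_0)),\dots,(x_{t-1},f(x_{t-1})),x_t$, not on $R$, so each summand $|\hat y_t-f(x_t)|^p$ is the same in both losses. Since $\min_{j<t}|x_t-x_j|\le R_1$ implies the same bound with $R_2$, we have $\mathcal{T}_{R_1}(x)\subseteq\mathcal{T}_{R_2}(x)$. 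The summands are nonnegative, so summing over the larger index set gives at least as much. Taking the supremum over $f$ and $x$, and then the infimum over $A$, finishes the argument.

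There is no real obstacle here. The one point to state explicitly is that the learner's predictions do not depend on the radius, because the learner sees the same queries and the same feedback in both settings. This is what allows us to fix $A$ and compare losses term by term, and it also covers the case where some values are $\infty$.
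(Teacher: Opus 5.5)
Your proposal is correct and follows the paper's proof exactly. You fix a learner, note that $R_1$-admissible sequences are $R_2$-admissible for Scenario~1 and that $\mathcal{T}_{R_1}(x)\subseteq\mathcal{T}_{R_2}(x)$ with nonnegative summands for Scenario~2, then take suprema over $f$ and $x$ and infima over learners.
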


\begin{proof}
(1) Every $R_1$-admissible input sequence is also $R_2$-admissible, so for every learner $A$,
$L^{\prime,R_1}_{p,\mathbb{R}}(A,F)\le L^{\prime,R_2}_{p,\mathbb{R}}(A,F)$.
Taking infima over $A$ yields the claim.

\smallskip
\noindent
(2) If $R_1\le R_2$, then for every input sequence $x$ we have
$\mathcal{T}_{R_1}(x)\subseteq \mathcal{T}_{R_2}(x)$. Hence for every learner $A$,
target $f$, and sequence $x$,
$L^{*,R_1}_{p,\mathbb{R}}(A,f,x)\le L^{*,R_2}_{p,\mathbb{R}}(A,f,x)$.
Taking suprema over $f$ and $x$, and then infima over $A$, gives the stated
inequality.
\end{proof}

\begin{lem}\label{lem:scale-Gq}
Fix $q>1$ and $R>0$. Define the scaling operator on functions
$f:\mathbb{R}\to\mathbb{R}$ by
\[
(\mathsf{S}_R f)(x):=R^{-(q-1)/q}\,f(Rx).
\]
Then:
\begin{enumerate}
\item If $f$ is absolutely continuous, then $\mathsf{S}_R f$ is absolutely
continuous and for a.e.\ $x$ we have
\[
(\mathsf{S}_R f)'(x)=R^{1/q}\,f'(Rx).
\]
\item Membership in $\mathcal{G}_q$ is preserved:
\[
f\in\mathcal{G}_q\quad\Longleftrightarrow\quad \mathsf{S}_R f\in\mathcal{G}_q.
\]
More precisely,
\[
\int_{\mathbb{R}} |(\mathsf{S}_R f)'(x)|^q\,dx
=
\int_{\mathbb{R}} |f'(x)|^q\,dx.
\]
\end{enumerate}
\end{lem}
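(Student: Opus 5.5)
The proof is a direct computation resting on two standard facts: absolute continuity is preserved under composition with an affine map, and the linear substitution $u=Rx$ in a Lebesgue integral.

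\emph{Step 1: absolute continuity and the derivative.}
Set $h(x)=f(Rx)$. Fix a compact interval $[a,b]$ and $\varepsilon>0$. Absolute continuity of $f$ on $[Ra,Rb]$ gives a $\delta$ that works for intervals there. Now take a finite disjoint family of subintervals $(a_k,b_k)\subseteq[a,b]$ of total length less than $\delta/R$. Their images $(Ra_k,Rb_k)$ have total length less than $\delta$, so
\[
\sum_k |h(b_k)-h(a_k)|=\sum_k |f(Rb_k)-f(Ra_k)|<\varepsilon .
\]
Hence $h$, and therefore the constant multiple $\mathsf{S}_R f=R^{-(q-1)/q}h$, is absolutely continuous.

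For the derivative, let $E$ be the null set where $f$ fails to be differentiable. The map $x\mapsto Rx$ sends null sets to null sets, so $E/R$ is null. At every $x\notin E/R$ the chain rule applies and gives
\[
h'(x)=Rf'(Rx).
\]
Multiplying by the constant,
\[
(\mathsf{S}_R f)'(x)=R^{-(q-1)/q}\cdot R\,f'(Rx)=R^{1/q}f'(Rx)\quad\text{for a.e. } x,
\]
which is part~(1).

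\emph{Step 2: the $q$-energy.}
Raising the derivative to the $q$-th power,
\[
\int_{\mathbb{R}}|(\mathsf{S}_R f)'(x)|^q\,dx=R\int_{\mathbb{R}}|f'(Rx)|^q\,dx .
\]
Substituting $u=Rx$, so $dx=du/R$, turns the right-hand side into $\int_{\mathbb{R}}|f'(u)|^q\,du$. This substitution is valid for any nonnegative measurable integrand, including when the integral is infinite. This gives the displayed identity in part~(2).

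\emph{Step 3: the equivalence.}
The map $\mathsf{S}_R$ is invertible with $\mathsf{S}_R^{-1}=\mathsf{S}_{1/R}$. Indeed,
\[
(\mathsf{S}_{1/R}\mathsf{S}_R f)(x)=R^{(q-1)/q}\,R^{-(q-1)/q}f(x)=f(x).
\]
So $\mathsf{S}_R$ maps absolutely continuous functions bijectively onto absolutely continuous functions. Combined with the energy identity, $\int|f'|^q\le 1$ holds if and only if $\int|(\mathsf{S}_R f)'|^q\le 1$, which is the stated equivalence $f\in\mathcal{G}_q\Leftrightarrow\mathsf{S}_R f\in\mathcal{G}_q$.

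The only step needing any care is the a.e.\ chain rule in Step~1. It reduces to the observation that dilation by $R$ preserves null sets, so there is no real obstacle.
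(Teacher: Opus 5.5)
Your proof is correct and follows the same route as the paper: absolute continuity is preserved under the dilation $x\mapsto Rx$, the chain rule holds almost everywhere, and the substitution $u=Rx$ gives the $q$-energy identity. Your observation that $\mathsf{S}_R^{-1}=\mathsf{S}_{1/R}$ also supplies the absolute continuity of $f$ needed for the reverse implication $\mathsf{S}_R f\in\mathcal{G}_q\Rightarrow f\in\mathcal{G}_q$, a point the paper's proof leaves implicit.
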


\begin{proof}
The composition $x\mapsto f(Rx)$ is absolutely continuous, and multiplication
by the constant $R^{-(q-1)/q}$ preserves absolute continuity. Hence
$\mathsf{S}_R f$ is absolutely continuous, and the chain rule yields
\[
(\mathsf{S}_R f)'(x)=R^{1/q}f'(Rx)
\quad\text{for almost every }x.
\]

For the $L^q$-derivative integral, apply the preceding identity and change
variables $u=Rx$:
\[
\int_{\mathbb{R}} |(\mathsf{S}_R f)'(x)|^q\,dx
=
\int_{\mathbb{R}} \bigl|R^{1/q} f'(Rx)\bigr|^q\,dx
=
\int_{\mathbb{R}} R\,|f'(Rx)|^q\,dx
=
\int_{\mathbb{R}} |f'(u)|^q\,du.
\]
Thus $\int |f'|^q\le 1$ iff $\int |(\mathsf{S}_R f)'|^q\le 1$, which is exactly
$f\in\mathcal{G}_q$ iff $\mathsf{S}_R f\in\mathcal{G}_q$.
\end{proof}

\begin{thm}\label{prop:reduce-R-to-1}
Fix $q>1$, $p>0$, and $R>0$.
Then for Scenarios~1--2 (Definition~\ref{def:radiusR}) we have
\[
\opt^{\prime,R}_{p,\mathbb{R}}(\mathcal{G}_q)
=
R^{(q-1)p/q}\,\opt^{\prime,1}_{p,\mathbb{R}}(\mathcal{G}_q),
\qquad
\opt^{*,R}_{p,\mathbb{R}}(\mathcal{G}_q)
=
R^{(q-1)p/q}\,\opt^{*,1}_{p,\mathbb{R}}(\mathcal{G}_q),
\]
with the natural conventions that $c\cdot\infty=\infty$ for every $c>0$.
In particular, the choice of threshold $1$ is without loss of generality up to
a deterministic scaling factor.
\end{thm}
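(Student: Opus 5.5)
The plan is a simulation argument based on the scaling bijection of Lemma~\ref{lem:scale-Gq}. The spatial map $x\mapsto x/R$ sends $R$-admissible sequences to $1$-admissible ones. The operator $\mathsf{S}_R$ is a bijection of $\mathcal{G}_q$ onto itself, with inverse $\mathsf{S}_{1/R}$: indeed $\mathsf{S}_{1/R}\mathsf{S}_R f(x)=R^{(q-1)/q}R^{-(q-1)/q}f(x)=f(x)$. Together these let us translate any radius-$R$ game into a radius-$1$ game with the same admissibility and penalty structure. In the process every function value, and hence every error, is multiplied by a fixed constant.

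\emph{Correspondence of games.} Given $f\in\mathcal{G}_q$ and an input sequence $x=(x_0,\dots,x_m)$, set $u_t=x_t/R$ and $\tilde f=\mathsf{S}_R f$. Then
\[
\tilde f(u_t)=R^{-(q-1)/q}f(x_t),
\qquad
|u_t-u_j|=|x_t-x_j|/R .
\]
Consequently $x$ is $R$-admissible iff $u$ is $1$-admissible, and $\mathcal{T}_R(x)=\mathcal{T}_1(u)$.

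\emph{Transferring learners, radius $1$ to radius $R$.} Given a learner $A$ for radius $1$, define $A_R$ for radius $R$ as follows. On history $(x_j,y_j)_{j<t}$ and query $x_t$, it feeds $A$ the history $(x_j/R,\,R^{-(q-1)/q}y_j)$ and the query $x_t/R$, and outputs $R^{(q-1)/q}$ times $A$'s prediction. The errors then satisfy
\[
|\hat y_t-f(x_t)|=R^{(q-1)/q}\,|\hat{\tilde y}_t-\tilde f(u_t)| .
\]
Summing $p$-th powers over all rounds (Scenario~1) or over $\mathcal{T}_R(x)=\mathcal{T}_1(u)$ (Scenario~2) gives
\[
L^{\cdot,R}(A_R,f,x)=R^{(q-1)p/q}\,L^{\cdot,1}(A,\tilde f,u).
\]
Taking suprema, and using that $f\mapsto\tilde f$ maps $\mathcal{G}_q$ onto $\mathcal{G}_q$, yields $L^{\cdot,R}(A_R,\mathcal{G}_q)=R^{(q-1)p/q}L^{\cdot,1}(A,\mathcal{G}_q)$. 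Taking the infimum over $A$ gives $\opt^{\cdot,R}\le R^{(q-1)p/q}\opt^{\cdot,1}$.

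\emph{Reverse inequality.} The same construction with $R$ replaced by $1/R$ converts any radius-$R$ learner into a radius-$1$ learner, with the factor inverted. Equivalently, $A\mapsto A_R$ is a bijection between learners whose inverse is the analogous conjugation, so the infima coincide. This gives equality, and the conventions with $\infty$ are automatic because the multiplier $R^{(q-1)p/q}$ is positive and finite.

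The only delicate point is bookkeeping rather than analysis. We must check that the conjugated learner is a legitimate online algorithm: it uses only past information, and this holds because the rescaling maps are fixed and invertible. We must also check that the round-$0$ exclusion and the penalized-round sets match exactly under rescaling, which follows from the distance identity above. Lemma~\ref{lem:scale-Gq} already supplies the invariance of $\mathcal{G}_q$, so no further estimates are needed.
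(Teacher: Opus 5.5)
Your proposal is correct and follows essentially the same route as the paper. Both arguments conjugate learners by the spatial rescaling $x\mapsto x/R$ together with the value rescaling $\mathsf{S}_R$ of Lemma~\ref{lem:scale-Gq}, check that admissibility and the penalized-round sets ($\mathcal{T}_R(x)=\mathcal{T}_1(x/R)$) are preserved, and obtain the reverse inequality from the inverse conjugation. Two small points where you are more careful than the paper: you explicitly rescale the revealed labels fed to the simulated learner, and you note that the conjugation is a bijection on learners, which yields exact equality $L^{\cdot,R}(A_R,\mathcal{G}_q)=R^{(q-1)p/q}L^{\cdot,1}(A,\mathcal{G}_q)$ instead of two separate inequalities.
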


\begin{proof}
We prove the Scenario~1 identity; the Scenario~2 identity is analogous.

Let $A$ be any learner for Scenario~1 with radius $1$. We build a learner
$A_R$ for Scenario~1 with radius $R$ as follows. On round $t$, when the adversary
presents $x_t\in\mathbb{R}$, define the rescaled input $z_t:=x_t/R$ and feed
$z_t$ to $A$. If $A$ outputs $\widehat w_t$ (its prediction for the value of the
unknown function at $z_t$), then $A_R$ outputs
\[
\hat y_t:=R^{(q-1)/q}\,\widehat w_t.
\]

Now fix any target $f\in\mathcal{G}_q$. Define the rescaled target
\[
\widetilde f:=\mathsf{S}_R f,
\qquad\text{i.e.}\qquad
\widetilde f(z)=R^{-(q-1)/q} f(Rz).
\]
By Lemma~\ref{lem:scale-Gq}, we have $\widetilde f\in\mathcal{G}_q$.

Also, if the original input sequence $(x_t)$ is $R$-admissible, then the
rescaled sequence $(z_t)$ is $1$-admissible, because
$|z_t-z_i|=|x_t-x_i|/R\le 1$ whenever $|x_t-x_i|\le R$.

Finally, for each round $t$ we have
\[
f(x_t)=f(Rz_t)=R^{(q-1)/q}\,\widetilde f(z_t),
\]
and therefore
\[
|\hat y_t-f(x_t)|
=
\left|R^{(q-1)/q}\widehat w_t - R^{(q-1)/q}\widetilde f(z_t)\right|
=
R^{(q-1)/q}\,|\widehat w_t-\widetilde f(z_t)|.
\]
Raising to the $p$th power and summing (over the same set of rounds, since in
Scenario~1 every round $t\ge 1$ is counted) gives
\[
L^{\prime,R}_{p,\mathbb{R}}(A_R,f,x)
=
R^{(q-1)p/q}\,L^{\prime,1}_{p,\mathbb{R}}(A,\widetilde f,z).
\]
Taking suprema over $f\in\mathcal{G}_q$ and $R$-admissible $x$, we obtain
\[
L^{\prime,R}_{p,\mathbb{R}}(A_R,\mathcal{G}_q)
\le
R^{(q-1)p/q}\,L^{\prime,1}_{p,\mathbb{R}}(A,\mathcal{G}_q),
\]
because $(\widetilde f,z)$ ranges over a subset of the pairs allowed in the
radius-$1$ game. Infimizing over all $A$ yields
\[
\opt^{\prime,R}_{p,\mathbb{R}}(\mathcal{G}_q)
\le
R^{(q-1)p/q}\,\opt^{\prime,1}_{p,\mathbb{R}}(\mathcal{G}_q).
\]
Equivalently,
\[
R^{-(q-1)p/q}\,\opt^{\prime,R}_{p,\mathbb{R}}(\mathcal{G}_q)
\le
\opt^{\prime,1}_{p,\mathbb{R}}(\mathcal{G}_q).
\]

We now prove the reverse inequality.
Let $B$ be an arbitrary learner for Scenario~1 with radius $R$.
We construct from $B$ a learner $B_{1/R}$ for Scenario~1 with radius $1$.

On round $t$, when the adversary presents an input $z_t\in\mathbb{R}$,
the learner $B_{1/R}$ feeds the expanded input
\[
x_t:=R z_t
\]
to $B$. If $B$ outputs a prediction $\widehat y_t$ for the value of the
unknown function at $x_t$, then $B_{1/R}$ outputs
\[
\widehat w_t:=R^{-(q-1)/q}\,\widehat y_t
\]
as its prediction at $z_t$.

Fix any target function $\widetilde f\in\mathcal{G}_q$ for the radius-$1$ game.
Define
\[
f(x):=R^{(q-1)/q}\,\widetilde f(x/R),
\]
so that $\widetilde f=\mathsf{S}_R f$. By Lemma~\ref{lem:scale-Gq}, we have
$f\in\mathcal{G}_q$.

If the input sequence $(z_t)$ is $1$-admissible, then the expanded sequence
$(x_t)$ is $R$-admissible, since
\[
|x_t-x_i|=R|z_t-z_i|\le R
\quad\text{whenever}\quad
|z_t-z_i|\le 1.
\]
Thus $(x_t)$ is a valid adversarial sequence for $B$.

Moreover, for each round $t$,
\[
\widetilde f(z_t)
=
R^{-(q-1)/q} f(x_t),
\]
and therefore
\[
|\widehat w_t-\widetilde f(z_t)|
=
R^{-(q-1)/q}\,|\widehat y_t-f(x_t)|.
\]
Raising to the $p$th power and summing over $t\ge 1$ gives
\[
L^{\prime,1}_{p,\mathbb{R}}(B_{1/R},\widetilde f,z)
=
R^{-(q-1)p/q}\,
L^{\prime,R}_{p,\mathbb{R}}(B,f,x).
\]

Taking the supremum over all $\widetilde f\in\mathcal{G}_q$ and all
$1$-admissible sequences $z$, we obtain
\[
L^{\prime,1}_{p,\mathbb{R}}(B_{1/R},\mathcal{G}_q)
\le
R^{-(q-1)p/q}\,
L^{\prime,R}_{p,\mathbb{R}}(B,\mathcal{G}_q).
\]
Finally, taking the infimum over all radius-$R$ learners $B$ yields
\[
\opt^{\prime,1}_{p,\mathbb{R}}(\mathcal{G}_q)
\le
R^{-(q-1)p/q}\,
\opt^{\prime,R}_{p,\mathbb{R}}(\mathcal{G}_q),
\]
as claimed.

Combining the two inequalities proves the identity.
The Scenario~2 identity is proved identically, observing that
$t\in\mathcal{T}_R(x)$ iff $t\in\mathcal{T}_1(z)$ under the scaling
$z_t=x_t/R$.
\end{proof}

\section{Scenario 3}\label{sec:sc3}

First, we show that the learner can guarantee finite error in Scenario 3 with the identity function, when $p = q = 2$.

\begin{thm}\label{thm:s322}
    $\opt^{\id}_{2,\mathbb{R}}(\mathcal{G}_2) = 1$
\end{thm}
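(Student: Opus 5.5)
The plan is to prove the two inequalities separately: the upper bound via $\linint$ and an action-increase argument, and the lower bound via Lemma~\ref{lem:comp23}.

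\emph{Upper bound.} The learner runs the plain $\linint$ algorithm on all previously revealed points. Write $S_{t-1}=\{(x_0,f(x_0)),\dots,(x_{t-1},f(x_{t-1}))\}$. The $\linint$ prediction at $x_t$ is exactly $\hat y_t=f_{S_{t-1}}(x_t)$; this holds both outside the convex hull, where $f_S$ is constant, and inside it. Lemma~\ref{kl_jlem} then gives
\[
J[f_{S_t}] \ge J[f_{S_{t-1}}] + \frac{(f(x_t)-\hat y_t)^2}{\min_{j<t}|x_t-x_j|}.
\]
The added term is exactly the $t$-th summand of $L^{\id}_{2,\mathbb{R}}$. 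Telescoping from $t=1$ to $m$, and using $J[f_{S_0}]=0$ (a single point gives a constant function), yields
\[
L^{\id}_{2,\mathbb{R}}(\linint,f,x)\le J[f_{S_m}].
\]
Lemma~\ref{kl_jlem0} gives $J[f_{S_m}]\le J[f]\le 1$ since $f\in\mathcal{G}_2$. Hence $\opt^{\id}_{2,\mathbb{R}}(\mathcal{G}_2)\le 1$.

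One point needs care here: Lemma~\ref{kl_jlem} must hold when $x$ lies outside $[u_1,u_m]$, where the relevant distance is to the nearest endpoint. I would verify this directly. If $x>u_m$ and $d=x-u_m$, the new action contributes an extra $(y-v_m)^2/d$, and $f_S(x)=v_m$, so the inequality holds with equality. For interior $x$ between $u_i$ and $u_{i+1}$, the standard computation replaces the segment slope $s$ over length $a+b$ by two segments. The increase is $\delta^2(1/a+1/b)\ge \delta^2/\min(a,b)$, where $\delta$ is the deviation from the line.

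\emph{Lower bound.} Lemma~\ref{lem:comp23} and Observation~\ref{opt_comp} give
\[
\opt^{\id}_{2,\mathbb{R}}(\mathcal{G}_2)\ge \opt^*_{2,\mathbb{R}}(\mathcal{G}_2)\ge \opt_{2,[0,1]}(\mathcal{F}_2)=1,
\]
using the Kimber--Long result $\opt_{2,[0,1]}(\mathcal{F}_2)=1$. If a self-contained lower bound were preferred, a direct adversary would also work: query $0$ and then $1$, and reveal $\pm 1$ at $1$, whichever is farther from the guess. The loss is at least $1$, with the extension constant outside $[0,1]$ staying in $\mathcal{G}_2$. That yields only error $\ge 1$ when the guess is $0$, though, so citing the chain above is cleaner.

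The main obstacle I expect is the exact form of Lemma~\ref{kl_jlem} needed for telescoping, in particular its validity on the unbounded domain and outside the hull of previous inputs. Everything else is bookkeeping.
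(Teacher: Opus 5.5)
Your proposal is correct and follows essentially the same route as the paper. The upper bound is the $\linint$ action-telescoping argument via Lemmas~\ref{kl_jlem} and~\ref{kl_jlem0}, and the lower bound is Lemma~\ref{lem:comp23} combined with $\opt^{*}_{2,\mathbb{R}}(\mathcal{G}_2)\ge \opt_{2,[0,1]}(\mathcal{F}_2)=1$. Your explicit check of Lemma~\ref{kl_jlem} outside the hull of previous inputs is a welcome addition, and one correction to your aside: the direct two-query adversary revealing $\pm 1$ at input $1$ forces error $1+|\hat y_1|\ge 1$ for every guess, not only for the guess $0$, so it already gives a self-contained lower bound.
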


\begin{proof}
The lower bound $\opt^{\id}_{p,\mathbb{R}}(\mathcal{G}_2) \ge 1$ follows from Lemma~\ref{lem:comp23}, since $\opt^{*}_{2,\mathbb{R}}(\mathcal{G}_2) = 1$. Thus, it suffices to prove that $\opt^{\id}_{2,\mathbb{R}}(\mathcal{G}_2) \le 1$.

The learner uses $\linint$. Let $g \in \mathcal{G}_2$ be a target function and $x_1, x_2, \dots$ be distinct reals. By Lemma \ref{kl_jlem}, the action of the hypothesis of $\linint$ increases by at least \[\frac{(\hat{y}_t-g(x_t))^2}{\min_{i < t}|x_t - x_i|}\] for each $t > 0$. After $t = 0$, the hypothesis has action $0$. By Lemma \ref{kl_jlem0}, the hypothesis always has action at most $J[g]$, which is at most $1$ since $g \in \mathcal{G}_2$. Thus \[\sum_{t \ge 1} \frac{(\hat{y}_t-g(x_t))^2}{\min_{i < t}|x_t - x_i|} \le 1,\] so we have $\opt^{\id}_{2,\mathbb{R}}(\mathcal{G}_2) \le 1$.
\end{proof}

The following theorem shows that Corollary~\ref{cor:exp_id} is sharp.

\begin{thm}
    For all positive $f: \mathbb{R}^{\ge 0} \rightarrow \mathbb{R}^{\ge 0}$, let $\gamma = \sup_{x > 0} x f(x)$. Then, we have \[\opt^{f}_{2,\mathbb{R}}(\mathcal{G}_2) = \gamma.\]
\end{thm}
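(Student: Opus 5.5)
The plan is to prove the two inequalities $\opt^{f}_{2,\mathbb{R}}(\mathcal{G}_2)\le\gamma$ and $\opt^{f}_{2,\mathbb{R}}(\mathcal{G}_2)\ge\gamma$ separately. Throughout, $f$ denotes the weight function, and a target in $\mathcal{G}_2$ is written $g$.

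\textbf{Upper bound.} If $\gamma=\infty$ there is nothing to show, so assume $\gamma<\infty$. Then $f(z)\le \gamma/z$ for every $z>0$. In the notation of Lemma~\ref{lem:weight-compare}, take the comparison weight $1/z$ and the target weight $h=f$. This gives $C=\sup_{z>0} z f(z)=\gamma$. Lemma~\ref{lem:weight-compare} then yields
\[
\opt^{f}_{2,\mathbb{R}}(\mathcal{G}_2)\le \gamma\,\opt^{\id}_{2,\mathbb{R}}(\mathcal{G}_2)=\gamma,
\]
where the last equality is Theorem~\ref{thm:s322}. Alternatively, one can run the $\linint$ potential argument directly. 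Lemma~\ref{kl_jlem} gives
\[
f(\delta_t)(\hat y_t-g(x_t))^2\le \gamma\,\frac{(\hat y_t-g(x_t))^2}{\delta_t},
\]
and this quantity is at most $\gamma$ times the increase in action at round $t$. Summing and applying Lemma~\ref{kl_jlem0} bounds the total by $\gamma$.

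\textbf{Lower bound.} Fix $\eta>0$ and choose $z>0$ with $z f(z)\ge \gamma-\eta$ (or $z f(z)$ arbitrarily large if $\gamma=\infty$). The adversary plays $x_0=0$ with $g(0)=0$, then asks $x_1=z$. After the learner predicts $\hat y_1$, the adversary reveals $g(z)=\pm\sqrt z$, choosing the sign farther from $\hat y_1$, so the error is at least $\sqrt z$. The target $g$ is the linear interpolation that equals $0$ for $x\le 0$, rises linearly to $\pm\sqrt z$ on $[0,z]$, and is constant afterwards. Its action is $z\cdot(\sqrt z/z)^2=1$, so $g\in\mathcal{G}_2$. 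The weighted loss at round $1$ is then at least $f(z)\cdot z\ge\gamma-\eta$. Letting $\eta\to0$ gives $\opt^{f}_{2,\mathbb{R}}(\mathcal{G}_2)\ge\gamma$, and the case $\gamma=\infty$ yields $\infty$ in the same way.

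\textbf{Main obstacle.} The main issue is the hypotheses of Lemma~\ref{lem:weight-compare}, which requires strictly positive weights on $(0,\infty)$. The statement says $f$ is positive, so this holds, but the lemma should be applied only to the restriction of $f$ to $(0,\infty)$, since distances between distinct inputs are always positive. If that application seems delicate, I would fall back on the direct $\linint$ potential argument sketched above, which needs no such hypothesis. A second point to check is that Theorem~\ref{thm:s322}'s upper bound is valid for all distinct input sequences; its proof does not rely on Scenario~1-type restrictions, so it applies here. The rest is routine.
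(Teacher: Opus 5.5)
Your proposal is correct and follows essentially the same route as the paper. The upper bound comes from Lemma~\ref{lem:weight-compare} with comparison weight $1/z$ combined with Theorem~\ref{thm:s322}, and the lower bound from the two-round adversary that queries $0$ and then $z$, revealing $\pm\sqrt{z}$ at the second query.
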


\begin{proof}
    If the learner uses $\linint$, then they guarantee that \[\opt^{f}_{2,\mathbb{R}}(\mathcal{G}_2) \le \opt^{\id}_{2,\mathbb{R}}(\mathcal{G}_2) \sup_{x > 0} x f(x) = \sup_{x > 0} x f(x) = \gamma\] by Theorem~\ref{thm:s322} and Lemma~\ref{lem:weight-compare}. For the corresponding lower bound, fix $\varepsilon > 0$ and let $x_{\varepsilon}$ be a positive real number such that \[x_{\varepsilon} f(x_{\varepsilon}) > \gamma - \varepsilon.\] 
    The adversary uses the strategy of asking the input $0$ in the first round, saying the correct output is $0$, asking the input $x_{\varepsilon}$ in the second round, and saying the correct output is $\pm \sqrt{x_{\varepsilon}}$, whichever is farther from the learner's guess. 
    It is simple to check that the resulting function is in the family $\mathcal{G}_2$, since \[x_{\varepsilon}\left( \sqrt{\frac{1}{x_{\varepsilon}}} \right)^2 = 1.\] 
    Moreover, the total penalty will be at least \[f(x_{\varepsilon})\left(\sqrt{x_{\varepsilon}}\right)^2 > \gamma - \varepsilon.\] 
    The adversary can pick $\varepsilon$ arbitrarily close to $0$, so we have\[\opt^{f}_{2,\mathbb{R}}(\mathcal{G}_2) = \gamma.\] 
\end{proof}

\begin{cor}
    For all $c > 0$, we have \[\opt^{\exp, c}_{2,\mathbb{R}}(\mathcal{G}_2) = \frac{1}{c e}.\]
\end{cor}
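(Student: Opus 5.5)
The corollary is a direct specialization of the preceding theorem, which states that $\opt^{f}_{2,\mathbb{R}}(\mathcal{G}_2)=\gamma$ with $\gamma=\sup_{x>0}x f(x)$ for any positive weight $f$. I will apply that theorem with the exponential weight $f(z)=e^{-cz}$. This weight is positive and nonincreasing on $(0,\infty)$, so it is an admissible Scenario~3 weight. It therefore remains only to evaluate $\gamma$.

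The computation of $\gamma$ is already contained in the proof of Corollary~\ref{cor:exp_id}. The function $\phi(z)=ze^{-cz}$ has derivative $\phi'(z)=e^{-cz}(1-cz)$, which is positive for $z<1/c$ and negative for $z>1/c$. Hence the supremum is attained at $z=1/c$, and
\[
\gamma=\sup_{z>0} z e^{-cz}=\phi(1/c)=\frac{1}{ce}.
\]
Substituting into the preceding theorem gives $\opt^{\exp,c}_{2,\mathbb{R}}(\mathcal{G}_2)=1/(ce)$.

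As a sanity check, the two directions can be traced separately. The upper bound is Corollary~\ref{cor:exp_id} combined with Theorem~\ref{thm:s322}:
\[
\opt^{\exp,c}_{2,\mathbb{R}}(\mathcal{G}_2)\le \frac{1}{ce}\,\opt^{\id}_{2,\mathbb{R}}(\mathcal{G}_2)=\frac{1}{ce}.
\]
For the lower bound, the adversary queries $x_0=0$ and reveals $0$. It then queries $x_1=1/c$ and reveals $\pm\sqrt{1/c}$, choosing whichever sign is farther from the learner's guess. The linear interpolant of these values has action $(1/c)\cdot\bigl(\sqrt{1/c}\,/\,(1/c)\bigr)^2=1$, so it lies in $\mathcal{G}_2$. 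The learner's squared error is at least $1/c$, and the weight on that round is $e^{-1}$, so the weighted loss is at least $\frac{1}{c}\,e^{-1}=\frac{1}{ce}$.

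There is no genuine obstacle here. The only point to check is that the supremum is actually attained, at $z=1/c$. Because of this, the lower bound needs no $\varepsilon$-approximation and matches the upper bound exactly.
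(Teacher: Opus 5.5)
Your proposal is correct and follows the paper's proof, which likewise specializes the preceding theorem to the weight $e^{-cz}$ and evaluates $\sup_{z>0} z e^{-cz} = 1/(ce)$. Your direct check of both bounds is also sound. You rightly note that because the supremum is attained at $z=1/c$, the two-round adversary needs no $\varepsilon$-approximation.
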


\begin{proof}
   This follows since \[\sup_{x > 0} x e^{-cx} = \frac{1}{c e}.\]
\end{proof}

In the next result, we obtain a general divergence criterion for Scenario 3 with the families $\mathcal{G}_q$.

\begin{prop}\label{prop:slowdecay}
Let $g:(0,\infty)\to(0,\infty)$ be nonincreasing and satisfy
\[
\sum_{i=1}^\infty g(c^i)=\infty
\quad\text{for some } c>1.
\]
Then for all $p\ge 1$ and $q>1$,
\[
\opt^g_{p,\mathbb{R}}(\mathcal{G}_q)=\infty.
\]
\end{prop}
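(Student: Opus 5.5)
We adapt the adversary of Observation~\ref{obs:adv_inf}, now accounting for the weights. Fix $c>1$ with $\sum_{i\ge1} g(c^i)=\infty$ and write $q=1+r$ with $r>0$. The adversary plays $x_0=0$ and $x_i=x_{i-1}+c^i$ for $i\ge 1$. The sequence is strictly increasing, so its entries are distinct, as Scenario~3 requires. The nearest previous input to $x_i$ is $x_{i-1}$, so
\[
\min_{j<i}|x_i-x_j| = c^i .
\]
The adversary sets $y_0=0$ and, in round $i$, reveals $y_i\in\{y_{i-1},\,y_{i-1}+h\}$, whichever is farther from $\hat y_i$. This forces $|\hat y_i-y_i|\ge h/2$ in every round.

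Let $f$ be the piecewise linear interpolant of the points $(x_i,y_i)$, constant on $(-\infty,0]$. Its slope on $[x_{i-1},x_i]$ is $0$ or $h/c^i$, so exactly as in Observation~\ref{obs:adv_inf},
\[
\int_{\mathbb{R}}|f'|^q \le \sum_{i\ge1} c^i\Bigl(\frac{h}{c^i}\Bigr)^{1+r} = \frac{h^{1+r}}{c^r-1}.
\]
Choosing $h=(c^r-1)^{1/(1+r)}>0$ makes this at most $1$. Observation~\ref{obs:adv_inf} used $c\ge2$, but any $c>1$ gives $c^r>1$ and hence a positive $h$, which is all we need.

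For any learner, the weighted loss over the first $m$ rounds is then at least
\[
\Bigl(\frac h2\Bigr)^p\sum_{i=1}^m g(c^i),
\]
and this tends to $\infty$ as $m\to\infty$ by the hypothesis on $g$.

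The main obstacle is a formal one: the loss is defined as a supremum over finite sequences, while the construction above describes an infinite play. To handle this, I would note that for each finite $m$ the target $f_m$, interpolating the points up to round $m$ and extended constantly to the right, has action bounded by the partial sum, so $f_m\in\mathcal{G}_q$. Then $L^g_{p,\mathbb{R}}(A,\mathcal{G}_q)\ge (h/2)^p\sum_{i\le m}g(c^i)$ for every $m$, which proves $\opt^g_{p,\mathbb{R}}(\mathcal{G}_q)=\infty$. The same truncation remark applies to Observation~\ref{obs:adv_inf}.

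Monotonicity of $g$ is not actually needed for this construction, since the query distances are exactly $c^i$. It would matter only if one wanted to replace $c^i$ by comparable distances.
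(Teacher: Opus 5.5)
Your proposal is correct and follows essentially the same route as the paper: the same input sequence $x_i=x_{i-1}+c^i$, the same two-label adversary with $h=(c^{q-1}-1)^{1/q}$, the same action bound for the linear interpolant, and the same observation that each round's weight is exactly $g(c^i)$. Your truncation to finite $m$ (with $f_m$ extended constantly to the right) makes explicit a step the paper leaves implicit, since the loss is a supremum over finite sequences; your remark that monotonicity of $g$ is not used is also accurate.
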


\begin{proof}
Fix $c>1$ and define $x_0=0$ and $x_i=x_{i-1}+c^i$ for $i\ge 1$.
Let $h>0$ be chosen so that
\[
h^q \sum_{i=1}^\infty c^{-(q-1)i} \le 1,
\quad\text{e.g.}\quad
h \le (c^{q-1}-1)^{1/q}.
\]
Define values $g(x_0)=0$ and for each $i\ge 1$ choose
\[
g(x_i)\in\{g(x_{i-1}),\,g(x_{i-1})+h\}
\]
to maximize $|\hat y_i-g(x_i)|$, and let $f$ be the linear interpolant of the
points $(x_i,g(x_i))$.

On the segment $[x_{i-1},x_i]$ the slope has magnitude either $0$ or $h/c^i$,
so
\[
\int_{-\infty}^\infty |f'(x)|^q\,dx
\le
\sum_{i=1}^\infty c^i\Bigl(\frac{h}{c^i}\Bigr)^q
=
h^q \sum_{i=1}^\infty c^{-(q-1)i}
\le 1,
\]
hence $f\in\mathcal{G}_q$.

Since the two candidate labels differ by $h$, the adversary guarantees
$|\hat y_i-f(x_i)|\ge h/2$ for every $i\ge 1$.
Moreover, for each $i\ge 1$ we have
\[
\delta_i := \min_{0\le j<i} d(x_i,x_j) = d(x_i,x_{i-1}) = c^i,
\]
so the weight in Scenario~3 is $g(\delta_i)=g(c^i)$. Therefore
\[
L^g_{p,\mathbb{R}}(A,f,x)
\;\ge\;
\sum_{i=1}^\infty g(c^i)\Bigl(\frac{h}{2}\Bigr)^p.
\]
Since $\sum_{i=1}^\infty g(c^i)=\infty$, the right-hand side diverges, proving
the claim.
\end{proof}

The remaining results focus on the family $G_L(n, \mathbb{R},r)$.

\begin{thm}
    For all $p, r > 0$ and positive integers $n$, we have $\opt^{\id}_{p,\mathbb{R}}(G_L(n, \mathbb{R},r)) = \infty$.
\end{thm}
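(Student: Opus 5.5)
We must show that the adversary can force unbounded identity-weighted loss for $G_L(n,\mathbb{R},r)$. Under identity weighting, an error of size at least $r$ made at distance $\delta$ from all previous inputs costs at least $r^p/\delta$. The plan is to make $\delta$ tiny on a single round where the learner still faces genuine uncertainty of size $r$. Scenario~3 only requires the inputs to be distinct, so nothing stops the adversary from querying a point very close to a previous one.

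The domain here is $\mathbb{R}^n$ with $d$ the Euclidean distance. Fix $\delta\in(0,1)$ and use the following adversary strategy.
\begin{itemize}
\item In round $0$, the adversary plays $x_0=\mathbf{0}$; every $g_v$ gives the value $0$ there.
\item In round $1$, the adversary plays $x_1=\delta e_1$, so that $\min_{j<1}d(x_1,x_j)=\delta$ and the weight is $1/\delta$.
\item After seeing $\hat y_1$, the adversary declares the label to be $+r$ or $-r$, whichever is farther from $\hat y_1$.
\end{itemize}
Both labels are realizable by members of the class: take $v=\pm (r/\delta)e_1$. Then $f_v(x_1)=\pm r$ and $|f_v(x_1)|\le r$, so $g_v(x_1)=f_v(x_1)=\pm r$. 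Since $g_v(x_0)=0$ is consistent with every $v$, the information revealed in round $0$ does not distinguish between these two functions.

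Whatever $\hat y_1$ is, one of $\pm r$ lies at distance at least $r$ from it. Hence
\[
L^{\id}_{p,\mathbb{R}}(A,g_v,x)\ \ge\ \frac{r^p}{\delta}.
\]
This holds for every learner $A$ and every $\delta>0$. Letting $\delta\to 0$ gives $L^{\id}_{p,\mathbb{R}}(A,G_L(n,\mathbb{R},r))=\infty$ for every $A$, and therefore $\opt^{\id}_{p,\mathbb{R}}(G_L(n,\mathbb{R},r))=\infty$.

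The construction uses no smoothness constraint, which is why it works here but not for $\mathcal{G}_2$. The class $G_L(n,\mathbb{R},r)$ places no bound on the slope $|v|$, so arbitrarily steep functions are allowed near the origin. The only step needing care is checking that the truncation at level $r$ does not zero out the chosen values. This holds because we use exactly $|f_v(x_1)|=r$ and the truncation keeps values with $|f_v(x)|\le r$.
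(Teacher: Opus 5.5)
Your proof is correct and matches the paper's argument: query $\mathbf{0}$, then a point at distance $\delta$ from it, answer $\pm r$ (whichever is farther from the learner's guess), and let $\delta\to 0$ so that the weight $1/\delta$ blows up. Your explicit check that $v=\pm(r/\delta)e_1$ realizes both labels despite the truncation, and your bound $r^p/\delta$ (where the paper writes $1/\epsilon$), make the argument slightly more precise than the paper's.
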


\begin{proof}
    In the first round, the adversary gives the zero vector as input and says that the correct output is $0$. In the second round, the adversary can force arbitrarily large error. Indeed, the adversary fixes some real number $\epsilon > 0$ and picks any non-zero vector within distance $\epsilon$ of the zero vector as the second input. After the learner guesses the output, the adversary says that the correct answer is $\pm r$, whichever is farther from the learner's answer. Thus, the adversary can force the learner to be off by at least $r$ from the correct answer, so the adversary adds at least $\frac{1}{\epsilon}$ to the total penalty in the second round. Since $\epsilon$ can be arbitrarily close to $0$, the adversary can force arbitrarily large error in the second round. 
\end{proof}

The following theorem shows that Corollary~\ref{cor:exp_orig} is sharp.

\begin{thm}
    For all $c, p, r > 0$ and positive integers $n$, we have $\opt^{\exp,c}_{p,\mathbb{R}}(G_L(n, \mathbb{R},r)) = n r^p$.
\end{thm}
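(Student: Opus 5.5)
The plan has two halves: an upper bound of $n r^p$ obtained by comparing with the unweighted model, and a matching lower bound from an adversary that places its queries at tiny distances, so that the exponential weight is essentially $1$.

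\emph{Upper bound.} By Corollary~\ref{cor:exp_orig} (equivalently Corollary~\ref{cor:g-vs-unweighted} with $\gamma=\sup_{z>0}e^{-cz}=1$) we have
\[
\opt^{\exp,c}_{p,\mathbb{R}}(G_L(n,\mathbb{R},r))\le \opt_{p,\mathbb{R}}(G_L(n,\mathbb{R},r)).
\]
Theorem~\ref{thm:glnr} gives $\opt_{p,\mathbb{R}}(G_L(n,\mathbb{R},r))=nr^p$, so the upper bound follows at once. In terms of algorithms, the span-based learner from Theorem~\ref{thm:glnr} errs at most $n$ times, each time by at most $r$, and every weight is at most $1$.

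\emph{Lower bound.} I would adapt the adversary from Theorem~\ref{thm:glnr}, but shrink the inputs so that the weights approach $1$. Fix $\delta>0$.
\begin{itemize}
\item In round $0$ the adversary plays $x_0=\mathbf{0}$.
\item In round $i$, for $1\le i\le n$, it plays $x_i=\delta e_i$.
\end{itemize}
For every $t\ge 1$ the minimum distance to earlier inputs is at most $|x_t-x_0|=\delta$. Since $g(z)=e^{-cz}$ is nonincreasing, each weight is therefore at least $e^{-c\delta}$.

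After the learner predicts $\hat y_i$, the adversary declares the label $\pm r$, whichever is farther from $\hat y_i$. This forces an error of at least $r$ in each of these rounds. The labels must come from a single $g_v\in G_L(n,\mathbb{R},r)$, so I take $v_i=\pm r/\delta$. Then $f_v(\delta e_i)=\pm r$, which satisfies $|f_v(x_i)|\le r$, so the truncation leaves it unchanged and $g_v(x_i)=\pm r$. Also $g_v(\mathbf{0})=0$. The choices for different coordinates are independent, so every sign pattern is realized.

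The total weighted loss is then at least $n\,e^{-c\delta}r^p$. Letting $\delta\to 0^+$ gives the lower bound $nr^p$.

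There is no real obstacle. The only points that need checking are that the truncation at level $r$ still allows the values $\pm r$ exactly, since the class uses the condition $|f_v|\le r$, and that inputs in Scenario~3 are distinct, which holds because $\mathbf{0}$ and the vectors $\delta e_i$ are distinct. Combining the two bounds gives equality.
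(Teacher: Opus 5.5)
Your proposal is correct and follows the paper's proof essentially step for step. The upper bound comes from the comparison with the unweighted model (Corollary~\ref{cor:exp_orig}) together with Theorem~\ref{thm:glnr}, and the lower bound uses the adversary $x_0=\mathbf{0}$, $x_i=\delta e_i$ with labels $\pm r$ and $\delta\to 0^+$. Your explicit check that $v_i=\pm r/\delta$ realizes every sign pattern within the truncated class is a detail the paper leaves implicit.
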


\begin{proof}
    By Corollary~\ref{cor:exp_orig}, we have $\opt^{\exp,c}_{p,\mathbb{R}}(G_L(n, \mathbb{R},r)) \le n r^p$. For the lower bound, consider the following adversary strategy. The adversary fixes a real number $\epsilon > 0$. In round $0$, the adversary gives the all-zero vector as the input $x_0$. In round $i$ for each $1 \le i \le n$, the adversary gives the input $x_i = \epsilon e_i$. After the learner guesses, the adversary says that the correct answer is $\pm r$, whichever is farther from the learner's guess. This guarantees an error of at least $r$ in the $i^{\text{th}}$ round for each $1 < i \le n+1$, so we have \[\opt^{\exp,c}_{p,\mathbb{R}}(G_L(n, \mathbb{R},r)) \ge \frac{n r^p}{e^{c \epsilon}}.\] Since $\epsilon$ can be chosen to be arbitrarily close to $0$, we have $\opt^{\exp,c}_{p,\mathbb{R}}(G_L(n, \mathbb{R},r)) \ge n r^p$. Thus, we have $\opt^{\exp,c}_{p,\mathbb{R}}(G_L(n, \mathbb{R},r)) = n r^p$.
\end{proof}

\section{Online learning of multivariable functions}\label{sec:Gqd-opt}

In this section we define the multivariable slice class $\mathcal{G}_{q,d}$
(the unbounded-domain analogue of the coordinate-slice class $\mathcal{F}_{q,d}$
from \cite{GZ}) and we derive consequences \emph{only} for the minimax quantities
$\opt'_{p,\mathbb{R}^d}$, $\opt^*_{p,\mathbb{R}^d}$, and $\opt^g_{p,\mathbb{R}^d}$
in Scenarios~1--3.

We first record a monotonicity principle in the dimension $d$, proved by an
isometric embedding argument and an explicit inclusion of the function classes.
We then prove that for $d\ge 2$ the slice constraint is too permissive on
$\mathbb{R}^d$: even under the strong locality built into Scenarios~1--2, the
adversary can force infinite loss for every $p>0$ and every $q>1$.  The proof is
constructive and verifies membership in $\mathcal{G}_{q,2}$ by direct
computation on every coordinate slice.

\subsection{Definition}

\begin{defn}\label{def:Gqd}
Fix $d\in\mathbb{Z}_+$ and $q\in(1,\infty]\cup\{\infty\}$.
Let $\mathcal{G}_{q,d}$ denote the family of functions $f:\mathbb{R}^d\to\mathbb{R}$
such that the following holds.

For each coordinate index $i\in\{1,\dots,d\}$ and each choice of the remaining
coordinates $u=(u_1,\dots,u_{d-1})\in\mathbb{R}^{d-1}$, define the one-variable
slice
\[
g_{i,u}(t)
:=
f(u_1,\dots,u_{i-1},t,u_i,\dots,u_{d-1}),
\qquad t\in\mathbb{R}.
\]
Then $g_{i,u}$ is required to belong to $\mathcal{G}_q$ (in the sense of
Section~\ref{sec:2}), i.e.\ $g_{i,u}$ is absolutely continuous and satisfies
\[
\int_{-\infty}^{\infty} |g'_{i,u}(t)|^q\,dt \le 1
\quad\text{(or }\esssup_{t\in\mathbb{R}}|g'_{i,u}(t)|\le 1\text{ if }q=\infty).
\]
\end{defn}

\subsection{Comparison with the bounded-domain slice classes $\mathcal{F}_{q,d}$}

The coordinate-slice classes $\mathcal{F}_{q,d}$ introduced in \cite{GZ} impose the
same \emph{one-dimensional} $L^q$-derivative constraint as in
Definition~\ref{def:Gqd}, but on the compact domain $[0,1]^d$ rather than on
$\mathbb{R}^d$.  Concretely, $f\in\mathcal{F}_{q,d}$ means $f:[0,1]^d\to\mathbb{R}$
and every coordinate slice $t\mapsto f(u_1,\dots,u_{i-1},t,u_i,\dots,u_{d-1})$
lies in the single-variable class $\mathcal{F}_q$ (i.e.\ is absolutely continuous
on $[0,1]$ with $\int_0^1 |g'(t)|^q\,dt\le 1$, or $\|g'\|_\infty\le 1$ when $q=\infty$).

\smallskip
\noindent
\textbf{What is known for $\mathcal{F}_{q,d}$.}
The multi-variable results in \cite{GZ} show that slice constraints on a compact
domain can still yield a meaningful (dimension-dependent) learnability picture.
First, \cite[Prop.~3.1]{GZ} proves a general lower bound relating the $d$-variable
problem to the one-variable problem:
\[
\opt_p(\mathcal{F}_{q,d}) \;\ge\; d^p\,\opt_p(\mathcal{F}_q).
\]
Second, when $q=\infty$ (coordinatewise $1$-Lipschitz on $[0,1]^d$),
\cite[Thm.~1.4]{GZ} identifies a sharp finiteness threshold in the exponent:
\[
\opt_p(\mathcal{F}_{\infty,d})<\infty \ \text{ for } p>d,
\qquad
\opt_p(\mathcal{F}_{\infty,d})=\infty \ \text{ for } 0<p<d.
\]
In particular, since $\mathcal{F}_{\infty,d}\subseteq \mathcal{F}_{q,d}$ for every
$q\ge 1$, it follows that $\opt_p(\mathcal{F}_{q,d})=\infty$ for all $q\ge 1$ and
all $0<p<d$ (see \cite[Cor.~3.4]{GZ}).

\smallskip
\noindent
\textbf{Contrast with $\mathcal{G}_{q,d}$ on $\mathbb{R}^d$.}
The class $\mathcal{G}_{q,d}$ is the unbounded-domain analogue: we require the same
coordinate-slice $L^q$-derivative control, but with integrals over $\mathbb{R}$.
Our results in this section show that for $d\ge 2$ this analogue is dramatically
more permissive in the worst-case online setting considered here: even under the
strong locality constraints built into Scenarios~1--2, the minimax $p$-loss is
infinite for every $p>0$ and every $q>1$ (Theorem~\ref{thm:Gqd-inf-all}), and the
weighted variant in Scenario~3 diverges whenever $g(1)>0$
(Proposition~\ref{prop:Gqd-sc3-inf}).  Informally, the key geometric difference is
that on $\mathbb{R}^d$ an adversary can place infinitely many disjoint,
well-separated ``local bumps'' (each consuming only bounded $q$-action on every
coordinate slice) along an admissible input sequence with fixed separation,
forcing a constant error on every round; this mechanism has no direct analogue on
the compact cube $[0,1]^d$.

\subsection{Monotonicity in the dimension}

\begin{prop}\label{prop:monotone-d}
Fix integers $1\le d_0\le d_1$, reals $p>0$ and $q>1$, and (for Scenario~3) a
weight $g:(0,\infty)\to(0,\infty)$.

\begin{enumerate}
\item (Scenario~1) \;
$\opt'_{p,\mathbb{R}^{d_1}}(\mathcal{G}_{q,d_1})
\;\ge\;
\opt'_{p,\mathbb{R}^{d_0}}(\mathcal{G}_{q,d_0})$.

\item (Scenario~2) \;
$\opt^*_{p,\mathbb{R}^{d_1}}(\mathcal{G}_{q,d_1})
\;\ge\;
\opt^*_{p,\mathbb{R}^{d_0}}(\mathcal{G}_{q,d_0})$.

\item (Scenario~3) \;
$\opt^{g}_{p,\mathbb{R}^{d_1}}(\mathcal{G}_{q,d_1})
\;\ge\;
\opt^{g}_{p,\mathbb{R}^{d_0}}(\mathcal{G}_{q,d_0})$.
\end{enumerate}
\end{prop}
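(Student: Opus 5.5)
The plan is a simulation (reduction) argument. A learner for the higher-dimensional game induces a learner for the lower-dimensional game, with no larger worst-case loss. Let $\iota:\mathbb{R}^{d_0}\to\mathbb{R}^{d_1}$ be the isometric embedding $\iota(y)=(y,0,\dots,0)$, and let $\pi:\mathbb{R}^{d_1}\to\mathbb{R}^{d_0}$ be the projection onto the first $d_0$ coordinates. For $f\in\mathcal{G}_{q,d_0}$ define the extension $Ef:=f\circ\pi$, that is,
\[
(Ef)(x_1,\dots,x_{d_1}) = f(x_1,\dots,x_{d_0}).
\]

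\textbf{Step 1: $Ef\in\mathcal{G}_{q,d_1}$.} Fix a coordinate index $i$ and the remaining coordinates $u$.
\begin{itemize}
\item If $i\le d_0$, the slice of $Ef$ in direction $i$ is exactly a slice of $f$ in direction $i$ (with the last $d_1-d_0$ coordinates simply ignored). It therefore lies in $\mathcal{G}_q$ by hypothesis.
\item If $i>d_0$, the slice is constant in $t$. It is absolutely continuous with derivative $0$, so it has $q$-action $0\le 1$, and also lies in $\mathcal{G}_q$; this covers $q=\infty$ as well.
\end{itemize}
Hence $E$ maps $\mathcal{G}_{q,d_0}$ into $\mathcal{G}_{q,d_1}$, and $(Ef)(\iota(y))=f(y)$ for every $y$.

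\textbf{Step 2: simulate the learner.} Given any learner $B$ for dimension $d_1$, build $B_0$ for dimension $d_0$ as follows. On input $y_t$, $B_0$ feeds $\iota(y_t)$ to $B$, outputs $B$'s prediction, and passes the revealed value $f(y_t)=(Ef)(\iota(y_t))$ back to $B$. Since $\iota$ is an isometry, the quantities $\delta_t=\min_{j<t} d(\iota(y_t),\iota(y_j))$ coincide with those of the original sequence. Consequently:
\begin{itemize}
\item admissible sequences in Scenario~1 map to admissible sequences;
\item the set of penalized rounds in Scenario~2 is the same;
\item the weights $g(\delta_t)$ in Scenario~3 are the same, and distinctness of inputs is preserved.
\end{itemize}
The predictions and true values also coincide round by round. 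So for every $f$ and every $y$, the loss of $B_0$ on $(f,y)$ equals the loss of $B$ on $(Ef,\iota(y))$.

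\textbf{Step 3: conclude.} Taking suprema gives $L_{d_0}(B_0,\mathcal{G}_{q,d_0})\le L_{d_1}(B,\mathcal{G}_{q,d_1})$, because the pairs $(Ef,\iota(y))$ range over a subset of those allowed in the $d_1$-game. Infimizing over $B$ then yields each of the three inequalities, with the convention that $\infty\ge$ anything. This matches the style of the proof of Theorem~\ref{prop:reduce-R-to-1}.

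The only step requiring care is Step~1. There I would check that absolute continuity of every slice, and the action bound along the new constant directions, are trivially satisfied. I would also confirm that no joint-regularity condition is hidden in Definition~\ref{def:Gqd}; it imposes only slice conditions, so none is. Everything else is bookkeeping identical across the three scenarios.
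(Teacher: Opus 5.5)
Your proposal is correct and follows essentially the same route as the paper: extend $f\in\mathcal{G}_{q,d_0}$ to $f\circ\pi\in\mathcal{G}_{q,d_1}$ by checking the two slice cases ($i\le d_0$ gives a slice of $f$, $i>d_0$ gives a constant), then simulate a $d_1$-learner on $\iota$-embedded inputs, using that the isometry preserves admissibility, the penalized rounds, and the weights $g(\delta_t)$. Your explicit remark that $\iota$ preserves distinctness of inputs for Scenario~3 is a small detail the paper leaves implicit.
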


\begin{proof}
Let $\iota:\mathbb{R}^{d_0}\to\mathbb{R}^{d_1}$ be the map
\[
\iota(z_1,\dots,z_{d_0}) := (z_1,\dots,z_{d_0},0,\dots,0).
\]
Then for all $z,z'\in\mathbb{R}^{d_0}$,
\[
\|\iota(z)-\iota(z')\|_2=\|z-z'\|_2,
\]
so $\iota$ is an isometric embedding and in particular preserves all quantities
$\delta_t=\min_{j<t} d(x_t,x_j)$ appearing in Scenarios~1--3.

\smallskip
\noindent
\textbf{Step 1: class inclusion $\mathcal{G}_{q,d_0}\hookrightarrow \mathcal{G}_{q,d_1}$.}
Let $h\in\mathcal{G}_{q,d_0}$.  Define $f:\mathbb{R}^{d_1}\to\mathbb{R}$ by
\[
f(x_1,\dots,x_{d_1}) := h(x_1,\dots,x_{d_0}).
\]
We prove that $f\in\mathcal{G}_{q,d_1}$ by checking the defining condition in
Definition~\ref{def:Gqd}.

Fix $i\in\{1,\dots,d_1\}$ and fix the remaining coordinates
$u\in\mathbb{R}^{d_1-1}$.  Define the corresponding slice
\[
g_{i,u}(t)
=
f(u_1,\dots,u_{i-1},t,u_i,\dots,u_{d_1-1}),
\qquad t\in\mathbb{R}.
\]

\emph{Case 1: $i\le d_0$.}
Define $\widetilde u\in\mathbb{R}^{d_0-1}$ by deleting from
$(u_1,\dots,u_{d_0})$ the coordinate occupying position $i$ (so that inserting
$t$ back into the $i$th coordinate yields $(u_1,\dots,u_{i-1},t,u_i,\dots,u_{d_0-1})$).
Then, by the definition of $f$,
\[
g_{i,u}(t)
=
h(u_1,\dots,u_{i-1},t,u_i,\dots,u_{d_0-1})
=
:=
\widetilde g_{i,\widetilde u}(t),
\]
where $\widetilde g_{i,\widetilde u}$ is exactly the $i$th coordinate slice of
$h$ with the other coordinates fixed to $\widetilde u$.
Since $h\in\mathcal{G}_{q,d_0}$, Definition~\ref{def:Gqd} implies that
$\widetilde g_{i,\widetilde u}\in\mathcal{G}_q$.  Therefore
$g_{i,u}\in\mathcal{G}_q$.

\emph{Case 2: $i>d_0$.}
Then $f$ does not depend on the $i$th coordinate, so $g_{i,u}$ is constant.
A constant function on $\mathbb{R}$ is absolutely continuous and has derivative
$0$ almost everywhere, hence belongs to $\mathcal{G}_q$.

In both cases we have shown $g_{i,u}\in\mathcal{G}_q$, and since $i$ and $u$
were arbitrary, Definition~\ref{def:Gqd} implies $f\in\mathcal{G}_{q,d_1}$.

\smallskip
\noindent
\textbf{Step 2: reduction of $d_1$-dimensional learners to $d_0$-dimensional learners.}
Fix a learner $A_{d_1}$ for the $d_1$-dimensional game (Scenario~1,~2, or~3).
Define a learner $A_{d_0}$ for the $d_0$-dimensional game by the following
simulation: on input $z_t\in\mathbb{R}^{d_0}$, feed $\iota(z_t)\in\mathbb{R}^{d_1}$
to $A_{d_1}$ and output the same prediction.  When the adversary reveals the
label $h(z_t)$, pass the label $f(\iota(z_t))$ to $A_{d_1}$; these are equal by
the definition of $f$.

Because $\iota$ is an isometry, a sequence $(z_t)$ satisfies the Scenario~1
constraint in $\mathbb{R}^{d_0}$ if and only if $(\iota(z_t))$ satisfies it in
$\mathbb{R}^{d_1}$.  Likewise, for every trial $t$, the quantities
$\delta_t=\min_{j<t}\|z_t-z_j\|_2$ and
$\widetilde\delta_t=\min_{j<t}\|\iota(z_t)-\iota(z_j)\|_2$ are equal, so the set
of penalized rounds in Scenario~2 and the weights $g(\delta_t)$ in Scenario~3
coincide under the embedding.  Consequently, for each target function $h$ and
each input sequence $(z_t)$, the loss incurred by $A_{d_0}$ against $h$ equals
the loss incurred by $A_{d_1}$ against $f$ on the embedded sequence.

Taking the supremum over targets and sequences (respecting the scenario’s
admissibility/penalty rule) and then taking the infimum over learners yields the
three stated inequalities.
\end{proof}

\subsection{Scenarios~1--2 have infinite error for $d\ge 2$, $p>0, q > 1$}

The next theorem shows that the slice-based notion of smoothness, which is perfectly adequate on bounded domains, breaks down learning on $\mathbb{R}^d$ in dimensions $d \ge 2$. 
Even when the adversary is forced to move only unit distance at each step, and even when the learner is penalized only locally, the adversary can force a constant error on every round while still respecting the slice constraint defining $\mathcal{G}_{q,d}$. 
In this sense, the class $\mathcal{G}_{q,d}$ is fundamentally non-learnable on unbounded domains under any of the locality models considered in this paper.

\begin{thm}\label{thm:Gqd-inf-all}
Fix $d\ge 2$, $q>1$, and $p>0$.  Then
\[
\opt'_{p,\mathbb{R}^d}(\mathcal{G}_{q,d})
=
\opt^*_{p,\mathbb{R}^d}(\mathcal{G}_{q,d})
=
\infty.
\]
\end{thm}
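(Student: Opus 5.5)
By Lemma~\ref{lem:sc12} we have $\opt'_{p,\mathbb{R}^d}(\mathcal{G}_{q,d})\le \opt^*_{p,\mathbb{R}^d}(\mathcal{G}_{q,d})$. Lemma~\ref{lem:sc12} is stated for $\mathbb{R}$, but its proof transfers verbatim to $\mathbb{R}^d$. By Proposition~\ref{prop:monotone-d}(1), $\opt'_{p,\mathbb{R}^{d}}(\mathcal{G}_{q,d})\ge \opt'_{p,\mathbb{R}^{2}}(\mathcal{G}_{q,2})$. So it suffices to show $\opt'_{p,\mathbb{R}^2}(\mathcal{G}_{q,2})=\infty$. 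I would do this with an explicit adversary that places disjoint ``bumps'' along the diagonal. The key design constraint is that every horizontal and every vertical line meets the support of at most one bump. Then each coordinate slice of the target is a slice of a single bump, and its $q$-action is bounded uniformly, no matter how many bumps are used.

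Concretely, let $\psi:\mathbb{R}\to\mathbb{R}$ be the tent function $\psi(t)=\max(0,1-8|t|)$, supported on $[-1/8,1/8]$, and set $\phi(x,y)=\psi(x)\psi(y)$. Take centers $c_k=(k/2,k/2)$ for $k\ge 1$ and start with $x_0=(0,0)$. Consecutive inputs are at distance $\sqrt2/2\le 1$, so the sequence $x_0,c_1,c_2,\dots$ is admissible for Scenario~1. The projections of the supports $c_k+[-1/8,1/8]^2$ onto either axis are the pairwise disjoint intervals $[k/2-1/8,\,k/2+1/8]$, and none of them contains $0$.

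The adversary reveals $f(x_0)=0$. In round $k=1,\dots,N$ it queries $c_k$ and, after seeing the guess $\hat y_k$, declares $f(c_k)=s_kh$ with $s_k\in\{\pm1\}$ chosen farther from $\hat y_k$. This forces $|\hat y_k-f(c_k)|\ge h$. After $N$ rounds it commits to
\[
f=\sum_{k=1}^N s_k h\,\phi(\cdot-c_k).
\]
This $f$ is consistent with every revealed label, since $f(c_j)=s_jh$ and $f(x_0)=0$ by disjointness of supports. The learner's loss is at least $Nh^p$, which tends to $\infty$ for any fixed $h>0$ and any $p>0$.

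The step needing care is verifying $f\in\mathcal{G}_{q,2}$. Fix a horizontal slice $y=u$. By disjointness of the $y$-projections, at most one term is nonzero on this line, so the slice equals $t\mapsto s_kh\,\psi(u-k/2)\,\psi(t-k/2)$ or is identically $0$. It is piecewise linear, hence absolutely continuous, with
\[
\int|g'|^q\,dt\le h^q\int|\psi'|^q=h^q\cdot 8^{q}\cdot\tfrac14 .
\]
The same bound holds for vertical slices by symmetry. Choosing $h=(4\cdot 8^{-q})^{1/q}=4^{1/q}/8$ puts every slice in $\mathcal{G}_q$.

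Since $N$ is arbitrary, $\opt'_{p,\mathbb{R}^2}(\mathcal{G}_{q,2})=\infty$, and the reductions above give both equalities for all $d\ge2$. Adaptivity is harmless, because the final $f$ is fixed after the finitely many sign choices and lies in the class regardless of those signs.
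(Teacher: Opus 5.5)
Your proof is correct and is essentially the paper's argument. You reduce to $d=2$ via Proposition~\ref{prop:monotone-d}, place disjoint product-of-tents bumps along an admissible line so that every axis-parallel line meets at most one bump, and let the adversary pick between two labels at each bump center; this bounds every slice's $q$-action uniformly while forcing a constant error per round. The differences from the paper are cosmetic: you use diagonal steps of length $\sqrt2/2$ with symmetric tents and labels $\pm h$, reach Scenario~2 through $\opt'\le\opt^*$ rather than noting that every round is penalized, and stop at a finite horizon $N$, whereas the paper uses unit steps $x_t=(t\alpha,t\eta)$ with anisotropic tents, labels $\{0,a\}$, and an infinite sequence. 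Your finite-horizon version fits the model's finite input sequences slightly more cleanly.
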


\begin{proof}
By Proposition~\ref{prop:monotone-d}, it suffices to prove the claim for $d=2$.

\smallskip
\noindent
\textbf{Step 1: a unit-step input sequence.}
Fix $\eta\in(0,1/10)$ and set $\alpha:=\sqrt{1-\eta^2}$.
Define
\[
x_0=(0,0),
\qquad
x_t=(t\alpha,\,t\eta)\quad\text{for }t\ge 1.
\]
Then $\|x_t-x_{t-1}\|_2=1$ for all $t\ge 1$.  Hence this input sequence is
admissible in Scenario~1, and in Scenario~2 every round $t\ge 1$ is counted
because $\min_{0\le j<t}\|x_t-x_j\|_2=1$.

\smallskip
\noindent
\textbf{Step 2: explicit tents and exact $L^q$-derivative computations.}
Let $L:=\alpha/4$ and $b:=\eta/4$.  Define
\[
u(s):=
\begin{cases}
1-\frac{|s|}{L}, & |s|\le L,\\
0, & |s|>L,
\end{cases}
\qquad
\phi(t):=
\begin{cases}
1-\frac{|t|}{b}, & |t|\le b,\\
0, & |t|>b.
\end{cases}
\]
Then $u$ and $\phi$ are absolutely continuous, $u(0)=\phi(0)=1$, and for a.e.\ $s$
and $t$ we have $|u'(s)|=1/L$ on $(-L,L)\setminus\{0\}$ and $|\phi'(t)|=1/b$ on
$(-b,b)\setminus\{0\}$. Therefore
\[
\int_{\mathbb{R}} |u'(s)|^q\,ds = 2L\cdot (1/L)^q = 2L^{1-q},
\qquad
\int_{\mathbb{R}} |\phi'(t)|^q\,dt = 2b\cdot (1/b)^q = 2b^{1-q}.
\]
Define
\[
a := \min\{(2L^{1-q})^{-1/q},\ (2b^{1-q})^{-1/q}\}.
\]
Let $v_0\equiv 0$ and $v_1:=a\,u$.  Then $v_0(0)=0$, $v_1(0)=a$, and
\[
\int_{\mathbb{R}} |v_1'(s)|^q\,ds
=
a^q\int_{\mathbb{R}} |u'(s)|^q\,ds
\le 1,
\qquad
\int_{\mathbb{R}} |v_0'(s)|^q\,ds =0\le 1,
\]
and also $|v_1(s)|\le a$ for all $s$.

\smallskip
\noindent
\textbf{Step 3: disjoint rectangles and an online-consistent definition of $f$.}
For each $t\ge 1$ define
\[
J_t:=[t\alpha-L,\;t\alpha+L],
\qquad
K_t:=[t\eta-b,\;t\eta+b],
\qquad
\phi_t(y):=\phi(y-t\eta).
\]
Since $2L=\alpha/2<\alpha$, the sets $J_t$ are pairwise disjoint.
Since $2b=\eta/2<\eta$, the sets $K_t$ are pairwise disjoint.
Hence $R_t:=J_t\times K_t$ are pairwise disjoint rectangles.

After observing the learner's prediction $\hat y_t$ at input $x_t$, the adversary
chooses $\sigma_t\in\{0,1\}$ and defines $f$ on $R_t$ by
\[
f(x,y):=\phi_t(y)\,v_{\sigma_t}(x-t\alpha),
\qquad (x,y)\in R_t,
\]
and defines $f(x,y):=0$ on $\mathbb{R}^2\setminus\bigcup_{t\ge 1} R_t$.
Because the sets $R_t$ are disjoint, this defines a single well-defined function
$f:\mathbb{R}^2\to\mathbb{R}$.

\smallskip
\noindent
\textbf{Step 4: forcing a constant error each round.}
At $x_t=(t\alpha,t\eta)$, we have $\phi_t(t\eta)=\phi(0)=1$ and $x-t\alpha=0$, so
\[
f(x_t)=v_{\sigma_t}(0)\in\{0,a\}.
\]
The adversary chooses $\sigma_t$ so that the value $f(x_t)$ is whichever of
$0$ and $a$ that is farther from $\hat y_t$.  Consequently
$|\hat y_t-f(x_t)|\ge a/2$ for all $t\ge 1$, and therefore the cumulative
$p$-loss in either Scenario~1 or Scenario~2 is at least
$\sum_{t\ge 1}(a/2)^p=\infty$.

\smallskip
\noindent
\textbf{Step 5: verifying $f\in\mathcal{G}_{q,2}$.}
We check the defining condition in Definition~\ref{def:Gqd} for both coordinate
directions.

\emph{(Slices in the $x$-direction.)}
Fix $y\in\mathbb{R}$.  Because the intervals $K_t$ are pairwise disjoint, there
is at most one index $t$ such that $y\in K_t$.
If there is no such $t$, then $f(\cdot,y)\equiv 0$ and hence
$\int_{\mathbb{R}}|\partial_x f(x,y)|^q\,dx=0\le 1$.
If $y\in K_t$ for some $t$, then for all $x\in\mathbb{R}$,
\[
f(x,y)=\phi_t(y)\,v_{\sigma_t}(x-t\alpha),
\]
which is absolutely continuous in $x$, with a.e.\ derivative
$\partial_x f(x,y)=\phi_t(y)\,v_{\sigma_t}'(x-t\alpha)$.  Therefore
\[
\int_{\mathbb{R}} |\partial_x f(x,y)|^q\,dx
=
|\phi_t(y)|^q \int_{\mathbb{R}} |v_{\sigma_t}'(s)|^q\,ds.
\]
Since $\sigma_t\in\{0,1\}$, we have either $v_{\sigma_t}=v_0$ with
$\int_{\mathbb{R}} |v_0'(s)|^q\,ds=0$, or $v_{\sigma_t}=v_1$ with
$\int_{\mathbb{R}} |v_1'(s)|^q\,ds\le 1$ by construction.
In both cases,
\[
\int_{\mathbb{R}} |v_{\sigma_t}'(s)|^q\,ds \le 1.
\]
Because $|\phi_t(y)|\le 1$, it follows that
\[
\int_{\mathbb{R}} |\partial_x f(x,y)|^q\,dx \le 1.
\]

\emph{(Slices in the $y$-direction.)}
Fix $x\in\mathbb{R}$.  Because the intervals $J_t$ are pairwise disjoint, there
is at most one index $t$ such that $x\in J_t$.
If there is no such $t$, then $f(x,\cdot)\equiv 0$ and hence
$\int_{\mathbb{R}}|\partial_y f(x,y)|^q\,dy=0\le 1$.
If $x\in J_t$ for some $t$, then for all $y\in\mathbb{R}$,
\[
f(x,y)=v_{\sigma_t}(x-t\alpha)\,\phi_t(y),
\]
which is absolutely continuous in $y$, with a.e.\ derivative
$\partial_y f(x,y)=v_{\sigma_t}(x-t\alpha)\,\phi_t'(y)$.  Thus
\[
\int_{\mathbb{R}} |\partial_y f(x,y)|^q\,dy
=
|v_{\sigma_t}(x-t\alpha)|^q \int_{\mathbb{R}} |\phi'(s)|^q\,ds
\le
a^q \cdot 2b^{1-q}
\le
1,
\]
because $|v_{\sigma_t}|\le a$ and $a\le (2b^{1-q})^{-1/q}$ by definition.

In all cases the relevant one-variable slice belongs to $\mathcal{G}_q$, and
therefore $f\in\mathcal{G}_{q,2}$.  This completes the proof for $d=2$, and the
general case $d\ge 2$ follows from Proposition~\ref{prop:monotone-d}.
\end{proof}

\subsection{Scenario~3: divergence whenever $g(1)>0$ and $d \ge 2$}

\begin{prop}\label{prop:Gqd-sc3-inf}
Fix $d\ge 2$, $q>1$, and $p>0$.  Let $g:(0,\infty)\to(0,\infty)$ be nonincreasing
and satisfy $g(1)>0$.  Then
\[
\opt^{g}_{p,\mathbb{R}^d}(\mathcal{G}_{q,d})=\infty.
\]
\end{prop}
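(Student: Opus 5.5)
The plan is to reuse the construction from the proof of Theorem~\ref{thm:Gqd-inf-all} verbatim and only change the bookkeeping of the loss. First reduce to $d=2$ via part~(3) of Proposition~\ref{prop:monotone-d}. That result applies to any weight $g:(0,\infty)\to(0,\infty)$, because the isometric embedding $\iota$ preserves every distance $\delta_t$ and hence every weight $g(\delta_t)$.

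In $\mathbb{R}^2$, fix $\eta\in(0,1/10)$ and $\alpha=\sqrt{1-\eta^2}$, and take the unit-step inputs $x_0=(0,0)$ and $x_t=(t\alpha,t\eta)$. The entries are distinct, as Scenario~3 requires. The key observation is that
\[
\delta_t=\min_{0\le j<t}\|x_t-x_j\|_2=\|x_t-x_{t-1}\|_2=1,
\]
since $\|x_t-x_j\|_2=t-j\ge 1$ for $j<t$. So every round receives the same weight $g(1)>0$. The hypothesis that $g$ is nonincreasing is not even needed; only the value $g(1)$ enters.

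Next, build the target $f$ online exactly as in Steps~2--3 of Theorem~\ref{thm:Gqd-inf-all}, with the tents $u,\phi$, the amplitude $a>0$, the disjoint rectangles $R_t=J_t\times K_t$, and $f=\phi_t(y)\,v_{\sigma_t}(x-t\alpha)$ on $R_t$ and $0$ elsewhere. After seeing $\hat y_t$, the adversary chooses $\sigma_t\in\{0,1\}$ so that $f(x_t)\in\{0,a\}$ lies at distance at least $a/2$ from $\hat y_t$. Step~5 of that proof shows $f\in\mathcal{G}_{q,2}$ for every choice of the $\sigma_t$. That verification never used the loss, so it carries over unchanged. Consequently, for every learner $A$ and every $m$,
\[
L^g_{p,\mathbb{R}^2}(A,f,(x_0,\dots,x_m))\ \ge\ \sum_{t=1}^m g(1)\Bigl(\frac a2\Bigr)^p = m\,g(1)\Bigl(\frac a2\Bigr)^p ,
\]
which is unbounded as $m\to\infty$. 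Hence $\opt^g_{p,\mathbb{R}^2}(\mathcal{G}_{q,2})=\infty$, and the monotonicity in $d$ gives the claim for all $d\ge 2$.

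The only real subtlety is the finiteness of the horizon. The loss is defined for finite sequences $x\in\mathbb{R}^{m+1}$, while $f$ is defined through the adversary's adaptive choices. I would handle this by fixing $m$ and letting the adversary set $\sigma_t=0$ for all $t>m$ (or simply leave those rectangles at $0$). This gives a genuine $f\in\mathcal{G}_{q,2}$ that is consistent with all revealed labels. Since the bound above grows linearly in $m$ with a constant independent of $A$, taking the supremum over $m$ completes the argument.
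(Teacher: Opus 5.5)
Your proposal is correct and follows essentially the same route as the paper. You reuse the unit-step sequence $x_t=(t\alpha,t\eta)$ and the bump construction from Theorem~\ref{thm:Gqd-inf-all}, note that $\delta_t=1$ so every round carries weight $g(1)$, and sum the forced errors $(a/2)^p$. Your explicit reduction to $d=2$ via Proposition~\ref{prop:monotone-d}(3) and your finite-horizon bound $m\,g(1)(a/2)^p$ are slightly more careful than the paper's write-up, which sums directly to $\infty$.
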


\begin{proof}
Use the same input sequence as in Theorem~\ref{thm:Gqd-inf-all}.
For every $t\ge 1$,
\[
\delta_t=\min_{0\le j<t}\|x_t-x_j\|_2=\|x_t-x_{t-1}\|_2=1,
\]
so every weighted term is multiplied by $g(1)$.
The adversary forces $|\hat y_t-f(x_t)|\ge a/2$ for all $t\ge 1$, hence
\[
L^{g}_{p,\mathbb{R}^d}(A,f,x)
\;\ge\;
\sum_{t=1}^{\infty} g(1)\,(a/2)^p
\;=\;\infty.
\]
Taking the infimum over learners yields $\opt^{g}_{p,\mathbb{R}^d}(\mathcal{G}_{q,d})=\infty$.
\end{proof}

\section{Conclusion}\label{sec:conc}

We initiated a systematic study of the mistake-bound model for learning
real-valued smooth functions on an \emph{unbounded} domain, focusing on the classes $\mathcal{G}_q$ of absolutely continuous functions
$f:\mathbb{R}\to\mathbb{R}$ with $\int_{\mathbb{R}} |f'(x)|^q\,dx\le 1$.
Our first observation is that the most direct extension of the classical
$[0,1]$ model is ill-posed on $\mathbb{R}$: for every $p\ge 1$ and $q>1$ the
adversary can force infinite $p$-loss, even in two rounds, by placing the next
query arbitrarily far from all previous inputs.
This motivates the central theme of the paper: to obtain meaningful guarantees
on $\mathbb{R}$ one must either constrain the admissible input sequences or
modify the penalty to discount ``exploration'' far from previously queried
points.

We proposed and analyzed three such mitigations.
Scenario~1 restricts the adversary by requiring each new input to lie within
distance $1$ of some previously queried point.
Scenario~2 keeps unrestricted inputs but grants ``free guesses'' on rounds in
which the new input is farther than $1$ from the past.
Scenario~3 introduces a distance-dependent weighting $g$ in the loss, replacing
each error term by $g(\min_{j<t} d(x_t,x_j))|\hat y_t-f(x_t)|^p$.
These scenarios preserve the adversarial, worst-case nature of the model while
capturing different notions of locality and exploration on an unbounded domain.

Several general comparisons emerge.
We established basic dominance relations between weightings
(Lemma~\ref{lem:weight-compare}) and used them to relate exponential and
identity-type weightings (Corollary~\ref{cor:exp_id}), as well as weighted and
unweighted losses (Corollary~\ref{cor:g-vs-unweighted}).
For Scenarios~1 and~2 we identified a broad regime in which the unbounded-domain
behavior matches the classical bounded-domain picture: when $p\ge q\ge 2$,
the modified linear interpolation algorithm $\linint'$ achieves the sharp value
$\opt'_{p,\mathbb{R}}(\mathcal{G}_q)=\opt^*_{p,\mathbb{R}}(\mathcal{G}_q)=1$.
In contrast, when $0<p<q$ the adversary can force arbitrarily large loss even
under Scenarios~1 and~2, yielding $\opt'_{p,\mathbb{R}}(\mathcal{G}_q)
=\opt^*_{p,\mathbb{R}}(\mathcal{G}_q)=\infty$.
We also showed that Scenarios~1 and~2 need not be equivalent: there are explicit
finite families $F$ for which the restriction on the adversary in Scenario~1
strictly benefits the learner over Scenario~2, and we quantified how large the
gap $\opt^*_{p,\mathbb{R}}(F)/\opt'_{p,\mathbb{R}}(F)$ can be as a function of
$|F|$. We further established that, for the classes $\mathcal{G}_q$, changing
the distance threshold in Scenarios~1 and~2 from $1$ to an arbitrary radius
$R>0$ affects the minimax $p$-loss only through an explicit multiplicative
scaling factor.
Specifically, we proved an exact scaling law: replacing
the unit distance threshold by an arbitrary radius $R>0$ multiplies the minimax
$p$-loss by $R^{(q-1)p/q}$.
Thus all qualitative phase transitions identified for radius $1$ persist
unchanged at every scale.

For Scenario~3 we proved sharp results illustrating both the power and the
subtleties of distance-weighting.
In particular, for $(p,q)=(2,2)$ we obtained the exact value
$\opt^{\id}_{2,\mathbb{R}}(\mathcal{G}_2)=1$, showing that an identity-type
penalty (equivalently, a $1/\delta$ discount of squared error) exactly matches
the ``action'' budget and yields a finite, tight worst-case bound.
More generally, for positive weight functions $f$ we characterized the optimum
in terms of $\sup_{x>0} x f(x)$, and recovered the sharp value
$\opt^{\exp,c}_{2,\mathbb{R}}(\mathcal{G}_2)=1/(ce)$ for exponential discounting.
These results provide a concrete template: in the unbounded setting, an
appropriate distance-weighted loss can restore a bounded, information-theoretic
mistake guarantee that is quantitatively controlled by the tail behavior of the
weighting function.

While the one-dimensional class $\mathcal{G}_q$ admits finite
opt-values under Scenarios~1--3, our results for the slice class $\mathcal{G}_{q,d}$
show that this behavior does not extend to higher dimensions on an unbounded
domain.
For every $d\ge 2$, the slice constraint allows the adversary to repeatedly
exploit fresh regions of the domain while remaining locally admissible, forcing
infinite loss even under strong locality restrictions.

The results here suggest several natural directions for further work. While $\linint$ and $\linint'$ are natural and effective, it remains open to
classify optimal strategies in the various scenarios.
Are there families $F$ or parameter regimes $(p,q)$ for which interpolation is
suboptimal?
More generally, can one characterize minimax-optimal learners and minimax-optimal
adversaries for
Scenario~3 weightings?

A natural alternative to the slice class $\mathcal{G}_{q,d}$ is to impose a global constraint on the full gradient, for example
\[
\widetilde{\mathcal{G}}_{q,d}
\;=\;
\Bigl\{ f:\mathbb{R}^d \to \mathbb{R} \ \text{absolutely continuous} \;:\;
\int_{\mathbb{R}^d} \|\nabla f(x)\|_q^q \, dx \le 1 \Bigr\}.
\]
This class rules out the coordinate-separable ``local bump'' constructions used in the proof of Theorem~\ref{thm:Gqd-inf-all}, and thus the multivariable impossibility result for $\mathcal{G}_{q,d}$ does not extend to $\widetilde{\mathcal{G}}_{q,d}$ by the same argument.
It is therefore a natural candidate for a smoothness model on $\mathbb{R}^d$ under which finite minimax loss might be achievable in dimensions $d \ge 2$ under Scenarios~1--3 or suitable weighted losses.
Determining whether such bounds in fact hold for $\widetilde{\mathcal{G}}_{q,d}$, and how they depend on $d$, $p$, $q$, and the choice of locality model, remains an open problem.

Scenarios~1 and~2 impose a fixed radius threshold (distance $1$).
A natural refinement is to allow a time-varying or data-dependent radius
$\rho_t$ (e.g.\ decreasing with $t$, or chosen adaptively by the learner) and to
analyze the tradeoff between exploration rate and guaranteed loss. A complementary direction is to impose \emph{computational} constraints,
by capping the number of arithmetic operations permitted per round, and to ask
how smoothness assumptions interact with such caps on an unbounded domain.
Recent work initiated the systematic study of mistake-bounded online learning
with operation caps, establishing general bounds on the minimum per-round
arithmetic required to learn a function family with finitely many mistakes
\cite{GenesonLiTang25}.
It would be interesting to develop an analogous theory for smooth real-valued
function classes on $\mathbb{R}^n$.

\section*{Acknowledgements}
JG was supported by the Woodward Fund for Applied Mathematics at San Jose State University, a gift from the estate of Mrs. Marie Woodward in memory of her son, Henry Tynham Woodward. He was an alumnus of the Mathematics Department at San Jose State University and worked with research groups at NASA Ames. GPT-5 was used for proof development, exposition, and revision.


\end{document}